%% file: main.tex
\documentclass{article}
 \usepackage[preprint]{neurips_2026}

\usepackage[utf8]{inputenc} % allow utf-8 input
\usepackage[T1]{fontenc}    % use 8-bit T1 fonts
\usepackage{hyperref}       % hyperlinks
\usepackage{url}            % simple URL typesetting
\usepackage{booktabs}       % professional-quality tables
\usepackage{amsfonts}       % blackboard math symbols
\usepackage{nicefrac}       % compact symbols for 1/2, etc.
\usepackage{microtype}      % microtypography
\usepackage{xcolor}         % colors
\usepackage{wrapfig}
\usepackage{booktabs}   
\usepackage[table]{xcolor}  
\usepackage{graphicx}
\usepackage{subcaption}
\usepackage{enumitem}
\usepackage{amsmath}
\usepackage{amssymb}
\usepackage{mathtools}
\usepackage{cleveref}

\usepackage{algorithm}
\usepackage{algpseudocode}
\definecolor{remdmbrown}{RGB}{180,96,24}
\newcommand{\remdm}[1]{{\color{remdmbrown}#1}}

\newtheorem{theorem}{Theorem}[section]
\newtheorem{lemma}{Lemma}[section]
\newtheorem{proof}{Proof}[section]

\newtheorem{proposition}{Proposition}[section]

\input{math_commands.tex}

\newcommand{\vheader}{\vspace*{-0.11cm}}
\usepackage[textsize=tiny]{todonotes}

\title{Discrete Stochastic Localization for Non-autoregressive Generation}

\author{%
\normalfont
\textbf{Yunshu Wu}\textsuperscript{1} \quad
\textbf{Jiayi Cheng}\textsuperscript{2} \quad
\textbf{Longxuan Yu}\textsuperscript{1} \quad
\textbf{Partha Thakuria}\textsuperscript{1} \\
\textbf{Rob Brekelmans} \quad
\textbf{Evangelos E. Papalexakis}\textsuperscript{1} \quad
\textbf{Greg Ver Steeg}\textsuperscript{1} \\
\textsuperscript{1}University of California Riverside \quad
\textsuperscript{2}New York University \\
\texttt{\{ywu380, ylong030, pthakuria, epapalex, greg.versteeg\}@ucr.edu} \\
\texttt{jiayi.cheng@nyu.edu \quad rob.brekelmans@gmail.com}
}

\begin{document}

\maketitle

\begin{abstract} 
Continuous diffusion is a natural framework for non-autoregressive generation but has generally lagged behind masked discrete diffusion models (MDMs)  on discrete sequence generation. 
We argue that the bottleneck is not continuity itself, but a representation in which denoising depends on timestep-indexed noise regimes.
We introduce \emph{Discrete Stochastic Localization} (DSL), a continuous-state framework with unit-sphere token embeddings whose Bayes-optimal denoiser is invariant to the nominal signal-to-noise ratio (SNR) under the localization channel. 
One trained network then supports an entire family of per-token SNR paths, with endpoint masked-diffusion paths as a special case. 
Fine-tuning a pretrained MDLM checkpoint with DSL substantially improves distributional faithfulness (MAUVE) on OpenWebText across all step budgets from $T{=}128$ to $T{=}1024$, and the same checkpoint supports random-order autoregressive sampling, as well as a hybrid continuous-then-discrete sampler using as few as T=48 total steps---without distillation or retraining. 
\end{abstract}

\vheader
\section{Introduction}
\label{sec:intro}
\vheader

Continuous diffusion has become a standard framework for high-dimensional generation, with strong results in image, video, and audio synthesis \citep{ddpm,ddim,ho2022video,huang2025self,kong2020diffwave}. 
For language, however, continuous diffusion over token embeddings has not delivered the same advantage: existing continuous diffusion language models have consistently lagged behind masked discrete diffusion models (MDMs) \citep{austin2021structured,sahoo2024simple,shi2024simplified}. 
This gap is puzzling. 
Continuous diffusion exposes the model to a continuum of finite-SNR states between the fully uninformative and nearly clean limits, which should in principle provide richer supervision than endpoint-like masked corruptions. 
Why, then, has this extra continuity not translated into stronger language generation?

We argue that the missing ingredient is not continuity alone, but also representation choice. 
Standard embedding-space diffusion typically adopts Variance-Preserving (VP) or Variance-Exploding (VE) Gaussian noising. 
These parameterizations are probabilistically valid, but they represent the zero-information limit as isotropic Gaussian noise rather than as a mask-like token state. 
Moreover, the raw noisy embedding is generally not sufficient for clean-token prediction: the denoiser must also receive a timestep or noise-level label to calibrate how much token evidence the embedding contains. 
As a result, denoisers in this representation are highly sensitive to time conditioning to distinguish between masked versus uncertain states.

This suggests a simple design goal: parameterize the corruption so that the noisy token state alone determines the clean-token posterior. 
We introduce \emph{Discrete Stochastic Localization} (DSL), a continuous diffusion language model based on stochastic localization over unit-norm token embeddings. 
In DSL, the zero-SNR state is the zero vector, which can be identified with a \texttt{[MASK]} token, while the high-SNR limit recovers the clean token embedding. 
More importantly, we show that unit-norm stochastic-localization parameterization makes the Bayes-optimal clean-token posterior depend only on the noisy token state, not on the nominal SNR. 
DSL therefore uses a time-invariant denoiser: masked, uncertain finite-SNR, and near-clean token states are all denoised to clean-token posteriors without explicit time conditioning.

This time-invariant denoiser gives DSL a simple inference-level consequence. 
Because the model is not tied to a global diffusion time, different positions can occupy different SNR levels and follow different update schedules. 
Masked refinement, random-order autoregressive decoding, and hybrid continuous-then-discrete sampling are therefore shown to be different ways of querying the same posterior estimator, rather than separate sampler-specific models. 
Thus DSL keeps the endpoint behavior and checkpoint compatibility of masked diffusion while adding continuous finite-SNR states for richer refinement.

We instantiate DSL by fine-tuning from a pretrained MDLM checkpoint and evaluate multiple inference paths through the same model on OpenWebText. 
On a masked-refinement path with a ReMDM-family decoder, DSL substantially improves few-step MAUVE across sampling budgets from $T=128$ to $T=1024$. 
The same checkpoint also supports random-order autoregressive sampling without retraining and a hybrid continuous-then-discrete sampler that produces coherent generations in as few as $T=48$ steps. 
Together, these results support the central claim: endpoint revealing, masked refinement, and continuous/hybrid denoising can be served by one posterior estimator rather than by separate sampler-specific models.

\begin{figure}[tb]
  \centering
  \begin{subfigure}[b]{0.43\linewidth}
    \includegraphics[width=\linewidth]{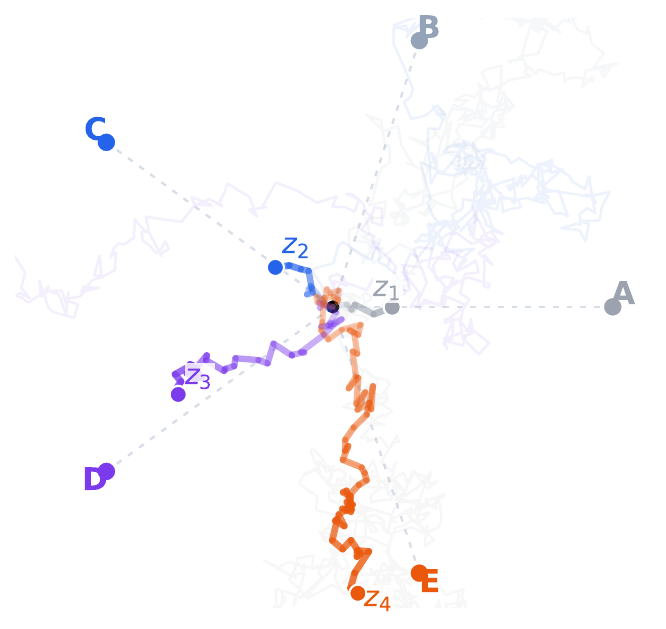}
    \caption{DSL dynamics: each $\vz_i(t)$ localizes toward a symbol anchor over time.}
    \label{fig:example}
  \end{subfigure}
  \hfill
  \begin{subfigure}[b]{0.43\linewidth}
    \includegraphics[width=\linewidth]{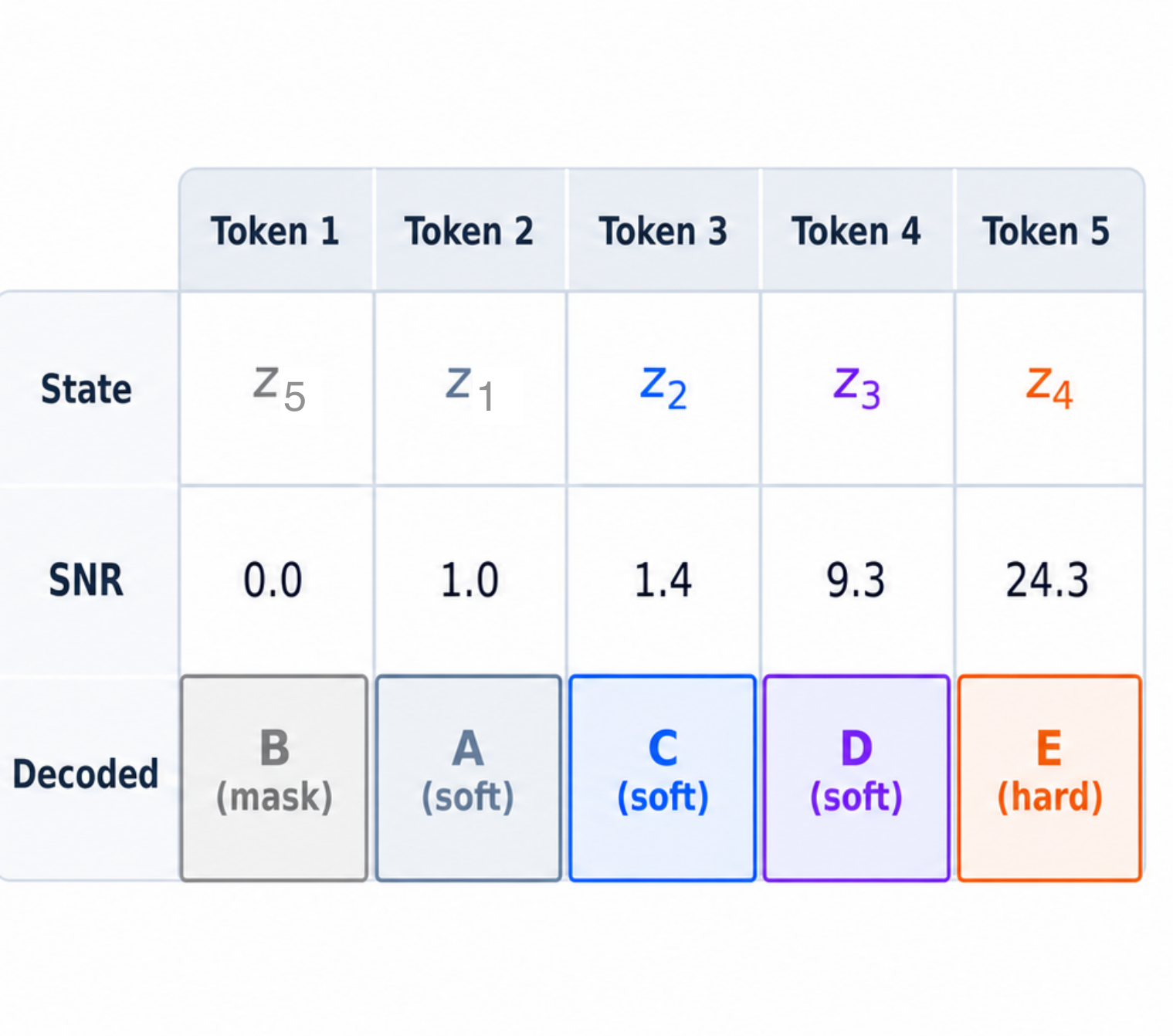}
    \caption{State summary: tokens span SNR $1.0$--$24.3$, from masked to hard decoding.}
    \label{fig:state_table}
  \end{subfigure}
  \caption{Discrete Stochastic Localization (DSL) dynamically ``localizes'' to sample discrete tokens. \emph{(Left)} DSL dynamics on a cyclic toy sequence (ABCDE, BCDEA, \ldots); details in Appendix~\ref{sec:mech_appendix}. \emph{(Right)} State summary at one snapshot: each token occupies a different SNR regime, transitioning from masked to soft to hard decoding as SNR increases.}
  \label{fig:dsl_main}
\end{figure}

\textbf{Contributions.}
\begin{itemize}[itemsep=1pt, topsep=2pt, leftmargin=*]
    \item \textbf{A continuous-state framework for discrete generation.} 
    % We introduce DSL, a stochastic-localization channel over unit-sphere token embeddings for which the Bayes-optimal denoiser is, by construction, a function of the noisy state alone. One network therefore supports an entire family of per-token SNR paths.
    We introduce DSL, a stochastic-localization channel over unit-sphere token embeddings whose Bayes-optimal denoiser is, by construction, a function of the noisy state alone rather than the nominal SNR. A single time-agnostic network can therefore support an entire family of per-token SNR paths.
    \item \textbf{A practical recipe.} 
    Mixed-support SNR sampling trains the same denoiser on endpoint and finite-SNR states, while a mixture-of-tokens converter turns the framework into a trainable objective over a standard Transformer/DiT backbone.

    \item \textbf{Evidence across paths.} 
    A single DSL checkpoint improves few-step OpenWebText generation under masked refinement, supports random-order AR and hybrid sampling without retraining, and is competitive on Text8 likelihood.
\end{itemize}

\vheader
\section{Discrete Stochastic Localization}
\label{sec:method}
\vheader

This section develops DSL in three steps. We first set up the simplest version of the framework---a single Gaussian observation channel applied to the whole sentence at one global signal-to-noise ratio (\S\ref{subsec:sentence_level}). We then introduce DSL's key design choice, placing clean token embeddings on the unit sphere, and show that under this geometry the Bayes-optimal denoiser depends only on the noisy token and not on the nominal SNR (\S\ref{subsec:time_invariant}). Because a single denoiser then suffices across all SNR regimes, we can let each token position evolve under its own SNR; this generalizes the channel to a continuous family of per-token SNR paths (\S\ref{subsec:arbitrary_paths}). 
% Finally, two exact likelihood views identify which states this path family asks training to cover (\S\ref{subsec:nll_views}).
Finally, two exact NLL views justify the state support used by the training
objective (\S\ref{subsec:nll_views}).

\vheader
\subsection{A sentence-level continuous starting point}
\label{subsec:sentence_level}
\vheader

We start from the simplest version of DSL: a Gaussian observation channel applied to the whole sentence at a single global signal-to-noise ratio. This setup is the natural continuous analogue of standard diffusion and lets us derive the MMSE drift form that the rest of the section will build on.

Let $\vs=(s_1,\dots,s_L)\in \gV^L$ be a sentence of length $L$, and let each token be embedded as $\vx_i=\enc(s_i)\in\R^d$. We write the full sentence embedding as $\vx=[\vx_1;\dots;\vx_L]\in\R^{Ld}$. The sentence-level localization channel is
\[
\vz_t = t\,\vx + \sqrt{t}\,\eps,
\qquad
\vx\sim P(\vx),\;\eps\sim\mathcal N(\vzero,\mI).
\]
Dividing by $t$ gives $\vz_t/t = \vx + \eps/\sqrt{t}$, so $\vz_t$ is equivalent in information to observing $\vx$ through additive Gaussian noise of variance $1/t$. The signal-to-noise ratio is therefore exactly $t$: at $t=0$ the observation carries no information, and as $t$ grows $\vz_t$ becomes progressively more informative about $\vx$.
This channel admits a simple SDE realization,
\begin{equation}
d\vz_t = \vx\,dt + d\rvw,
\qquad
\vx\sim P(\vx),
\label{eq:conditional}
\end{equation}
which conditions on the unknown clean sentence and so cannot be used directly for generation. 
%In \cref{app:conditional_unconditional}, we derive an 
We seek an 
equivalent unconditional SDE---equivalent in the sense that both dynamics induce the same marginal on $\vz_t$
---whose drift depends only on $\vz_t$:
\begin{equation}
d\vz_t = \xhat(\vz_t,t)\,dt + d\rvw,
\qquad
\xhat(\vz_t,t):=\E_{P_t(\vx\mid \vz_t)}[\vx].
\label{eq:unconditional}
\end{equation}
%See \cref{app:conditional_unconditional} for proof.
The drift that achieves this equivalence is the MMSE denoiser, the posterior mean of $\vx$ given $\vz_t$ (proof in Appendix~\ref{app:conditional_unconditional}). At this point DSL still resembles a standard continuous-denoising construction: one global SNR, one denoiser that nominally depends on $t$.

\vheader
\subsection{Unit-norm token geometry gives a time-invariant denoiser}
\label{subsec:time_invariant}
\vheader

The key design choice in DSL is to constrain every clean token embedding to lie on the unit sphere, $\norm{\vx_i}_2 = 1$ for all $i$. This is a structural constraint built into the embedding geometry, not a regularizer on the loss. As we now show, the benefit of this construction is to remove the nominal SNR $t$ from the Bayes-optimal denoiser.

Under the localization channel \eqref{eq:conditional}, 
$P_t(\vx\mid\vz_t) \;\propto\; P(\vx)\exp\!\Big(\vx\cdot\vz_t \;-\; \tfrac{t}{2}\norm{\vx}_2^2\Big)$ is the Bayes posterior factorization.
The term $\tfrac{t}{2}\norm{\vx}_2^2$ is the only place the nominal SNR enters. Under unit-norm token embeddings, $\norm{\vx}_2^2 = L$ for every length-$L$ sentence, so this term is sentence-independent and cancels after normalization. The posterior, and therefore the MMSE denoiser, depends only on $\vz_t$:
\begin{equation}
\hat{\vx}(\vz,t)
=
\E_{P_t(\vx\mid\vz)}[\vx]
=
\E_{P(\vx)}\!\left[\vx e^{\vx\cdot\vz}\right]\Big/\E_{P(\vx)}\!\left[e^{\vx\cdot\vz}\right]
=
\textcolor{red}{\xhat(\vz)}.
\label{eq:t_invariant_denoiser}
\end{equation}
The SNR-invariance in Eq.~\ref{eq:t_invariant_denoiser} requires both the localization parameterization, which makes $z$ a posterior natural coordinate with $a_\tau/\sigma_\tau^2=1$, and unit-sphere geometry, which makes $\|x_i\|^2$ vocabulary-constant. Either condition alone is insufficient; their composition produces the structural cancellation.

This SNR-invariance is the structural property that drives the rest of the paper. The Bayes denoiser does not need to know which $t$ generated $\vz_t$ to denoise it optimally; it only needs the noisy token. A single network can therefore in principle serve every SNR regime at once. We give the full derivation in Appendix~\ref{app:snr_invariant_denoiser}; geometric consequences---an entropy envelope on the induced token posterior $p(\vx_i\mid\vz_t)$ and Lipschitz smoothness---are deferred to Section~\ref{sec:analysis}.

\vheader
\subsection{Arbitrary per-token SNR paths}
\label{subsec:arbitrary_paths}
\vheader

The SNR-invariance from \S\ref{subsec:time_invariant} lets us drop the assumption that all tokens share a single SNR. We can let each position $i$ evolve under its own SNR:
\[
\vz_i = \gamma_i\,\vx_i + \sqrt{\gamma_i}\,\eps_i,
\qquad \eps_i\sim\mathcal N(\vzero,\mI),
\]
and we allow each $\gamma_i(t)$ to follow an arbitrary admissible path---continuous, non-decreasing, with $\gamma_i(0)=0$ and $\gamma_i(t)\to\infty$ as $t\to\infty$.

For these per-token paths, the data-conditioned and unconditional dynamics are
\begin{align}
d\vz_i &= \vx_i\,\dot{\gamma}_i\,dt + \sqrt{\dot{\gamma}_i}\,d\rvw_i, \qquad \vx\sim P(\vx),
\label{eq:snr_cond}\\
d\vz_i &= \xhat_i(\vz)\,\dot{\gamma}_i\,dt + \sqrt{\dot{\gamma}_i}\,d\rvw_i.
\label{eq:snr_uncond}
\end{align}
We show in Appendix~\ref{app:conditional_unconditional} that these two SDEs induce the same marginal on $\vz_t$ along any admissible per-token path. Combined with the SNR-invariance of $\xhat_i$, this means a \emph{single} trained denoiser can be sampled along arbitrary per-token SNR schedules without changing the target distribution.

The path interpretation is then immediate. Driving one token from $\gamma{=}0$ to $\infty$ while holding others fixed corresponds to revealing that token in random-order autoregressive (ROAR) style; sending selected positions back to $\gamma{=}0$ before re-denoising corresponds to masked diffusion with remasking; jointly increasing all $\gamma_i$ corresponds to standard continuous diffusion. \textbf{Random-order AR generation, masked refinement, and continuous diffusion sampling are different paths through one shared family of per-token SNR configurations}---and, by the argument above, can all be served by one trained DSL model.

\vheader
\subsection{Exact NLL views that motivate the training support}
\label{subsec:nll_views}
\vheader

The DSL training objective is not chosen as a sampler heuristic. It is derived from two exact likelihood views of the same per-token SNR formalism: a continuous path-integral NLL over finite-SNR denoising states, and an endpoint ROAR NLL over mask/reveal states. 
These identities are the likelihood-level reason that the mixed-support loss in Section~\ref{subsec:mixed_snr} places training mass on both state families.

\textbf{Continuous path-integral view.} For any admissible per-token SNR path $C$, DSL admits the path-integral identity
\begin{equation}
-\log P(\vx)
\;=\;
\tfrac{1}{2}\int_C \mathbf{E}(\vx,\gamma)\cdot d\gamma,
\qquad
E_i(\vx,\gamma):=\E_{p_\gamma(\vz\mid\vx)}\!\big[\norm{\vx_i - \xhat_i(\vz)}_2^2\big]
\label{eq:probability}
\end{equation}
(derivation in Appendix~\ref{app:arbitrary_paths}). Under the Bayes-optimal denoiser, this error field is conservative, so the integral is path-independent: every admissible path gives the same $-\log P(\vx)$. Intermediate continuous-SNR states are therefore not heuristic interpolants---they are the integration variables of an exact likelihood.

\textbf{Endpoint (ROAR) view.} A second exact view comes from configurations in which some positions are already fully revealed ($\gamma{=}\infty$) and the rest remain fully masked ($\gamma{=}0$). In this regime the sentence likelihood reduces to an autoregressive-style decomposition, and averaging over the size of the revealed subset, the choice of revealed positions, and the next index to predict yields the random-order autoregressive estimator
\begin{equation}
- \tfrac{1}{L}\log P(\vs)
\;=\;
- \E_{(k,A,i)}\!\big[\log P(s_i\mid s_A)\big]
\label{eq:roar_estimator}
\end{equation}
(derivation in Appendix~\ref{app:roar_likelihood}). This is the subset-conditioned objective used by masked-diffusion language models~\citep{sahoo2024simple,llada}; here, it arises directly from the same DSL path formalism.

\textbf{Bridge to mixed-support training.}
\eqref{eq:probability} and~\eqref{eq:roar_estimator} are the two
likelihood sources of the practical objective in Section~\ref{subsec:mixed_snr}.
The former contributes intermediate finite-SNR states; the latter contributes endpoint mask/reveal states. Section~\ref{sec:training} then implements this NLL-derived support with a single mixed cross-entropy objective.

\vheader
\section{DSL Training and Architecture}
\label{sec:training}
\vheader

Section~\ref{sec:method} identifies two exact likelihood views and their corresponding SNR paths. Turning them into a trainable recipe involves two design decisions: a mixed-support SNR sampler that covers both families under a single loss (\S\ref{subsec:mixed_snr}), and a posterior-view converter that presents the corrupted state to the backbone neural network architecture in a form that matches the underlying geometry (\S\ref{subsec:converter}).

\begin{algorithm}[tb]
\caption{DSL training}
\label{alg:dsl_training}
{\small
\begin{algorithmic}[1]
\Require Sequence length $L$; mixing weight $\lambda \in [0,1]$; ROAR endpoint values $\gamma_{\min}, \gamma_{\max}$; log-normal parameters $(\mu, \sigma)$; backbone $f_\theta$ with converter; learning rate $\eta$.
\For{each training step}
  \State Sample sentence $\vs = (s_1,\dots,s_L) \sim P_{\mathrm{data}}$; embed $\vx_i = \enc(s_i)$.
  \State Draw $u \sim \mathrm{Unif}[0,1]$.
  \If{$u < 1 - \lambda$} \Comment{ROAR branch}
      \State Sample reveal-set size $k \sim \mathrm{Unif}\{0,\dots,L{-}1\}$, subset $A \subseteq [L]$ with $|A|{=}k$.
      \State Set $\gamma_i \gets \gamma_{\max}$ for $i \in A$, and $\gamma_i \gets \gamma_{\min}$ otherwise.
  \Else \Comment{Continuous path branch}
      \State Sample $\gamma_i \sim \mathrm{LogNormal}(\mu, \sigma^2)$ independently for each $i \in [L]$.
  \EndIf
  \State Sample $\eps_i \sim \mathcal N(\vzero, \mI)$; form noisy tokens $\vz_i \gets \gamma_i \vx_i + \sqrt{\gamma_i}\,\eps_i$.
  \State Compute converter outputs $\vm_i^{\mathrm{conv}}$ from each $\vz_i$ via \eqref{eq:converter}.
  \State Run backbone: $p_\theta(\cdot \mid \vz) \gets f_\theta(\vm_1^{\mathrm{conv}}, \dots, \vm_L^{\mathrm{conv}})$.
  \State Update $\theta \gets \theta - \eta\,\nabla_\theta\,\sum_{i=1}^L -\log p_\theta(s_i \mid \vz)$.
\EndFor
\end{algorithmic}
}
\end{algorithm}

\vheader
\subsection{Mixed-SNR training}
\label{subsec:mixed_snr}
\vheader

% The two likelihood views in \S\ref{subsec:nll_views} cover disjoint SNR paths. A faithful DSL objective must place training mass on both, written as a convex combination
To cover both state families in a single model, the DSL objective should place training mass on both, written as a convex combination
\begin{equation}
\Ls_{\mathrm{theory}}
\;=\;
\lambda\,\Ls_{\mathrm{cont\mbox{-}NLL}}
\;+\;
(1-\lambda)\,\Ls_{\mathrm{ROAR\mbox{-}NLL}}.
\label{eq:theory_loss}
\end{equation}

\textbf{One CE loss for both branches.} \quad
$\Ls_{\mathrm{ROAR\mbox{-}NLL}}$ is already a token-level categorical
log-likelihood, so cross-entropy on $P_\theta(s_i\mid\vz)$ is its exact form.
$\Ls_{\mathrm{cont\mbox{-}NLL}}$ comes from the path-integral identity
\eqref{eq:probability} and is naturally an embedding-space MSE on the Bayes
denoiser $\xhat_i(\vz)=\E[\vx_i\mid\vz]$. We parameterize this denoiser through
a categorical token posterior,
\begin{equation}
\xhat_{\theta,i}(\vz)
=
\sum_{v\in\gV} P_\theta(s_i=v\mid\vz)\,\vx_v .
\end{equation}
Minimizing token-level cross-entropy makes $P_\theta(\cdot\mid\vz)$ match the
true posterior $P(\cdot\mid\vz)$; at this optimum, the posterior mean above
recovers the same Bayes denoiser that minimizes the path-integral MSE. Thus,
for the continuous branch, CE is a posterior-matching surrogate for the exact
MSE NLL rather than a separate heuristic. It also avoids embedding-space
optimization pathologies known from prior continuous diffusion language
models~\citep{gulrajani2023likelihood}. Full details are in
Appendix~\ref{app:ce_surrogate}.

\textbf{Implementation as mixed SNR sampling.} \quad
Since both branches share the same CE loss, the $\lambda$-weighted mixture of losses equals, in expectation, a single CE loss under a $\lambda$-mixture of SNR distributions. We implement the latter directly: with probability $\lambda$ we draw token-wise SNRs from a continuous-path log-normal distribution, and with probability $1-\lambda$ from a ROAR-style endpoint distribution concentrated near the low- and high-SNR extremes. Both branches are trained under
\begin{equation}
\Ls_{\mathrm{CE}}
\;=\;
-\,\E_{(\vs,\vz)\sim q_{\mathrm{train}}}
\sum_{i=1}^{L}
\log p_\theta(s_i\mid \vz),
\label{eq:ce_loss}
\end{equation}
realizing \eqref{eq:theory_loss} as a single CE objective over a $\lambda$-mixed SNR support. 
We summarize the full training procedure in Algorithm~\ref{alg:dsl_training}; hyperparameters are listed in Appendix~\ref{app:mixed_snr_support}.

\vheader
\subsection{Posterior-view converter and backbone}
\label{subsec:converter}
\vheader

The DSL posterior lives in token-embedding geometry, while a Transformer backbone expects token-like inputs that interact stably under self-attention. We introduce a converter that maps each noisy token $\vz_i$ to a mixture-of-tokens representation,
\begin{equation}
q_i^{\mathrm{conv}}(v \mid \vz_i)
\;=\;
\softmax_\tau\!\big(\langle \vz_i, \vx_{v}\rangle + b_v\big),
\qquad
\vm_i^{\mathrm{conv}}
\;=\;
\sum_{v \in \gV \cup \{[\text{MASK}]\}} q_i^{\mathrm{conv}}(v \mid \vz_i)\,\vx_{v},
\label{eq:converter}
\end{equation}
Here $v$ indexes candidate token values, $\vx_{v}=\enc(v)$, and $\tau>0$ is a learnable
converter temperature hyperparameter.
We feed the sequence $(\vm_1^{\mathrm{conv}},\dots,\vm_L^{\mathrm{conv}})$ to a standard Transformer encoder / DiT-style denoiser. Intuitively, $\vm_i^{\mathrm{conv}}$ is a vocabulary-weighted combination of unit-norm token embeddings: when $\vz_i$ is very noisy the mixture is broad, and as the SNR grows it concentrates on a single token.

The converter has two properties required for the analysis in Section~\ref{sec:analysis}: $\norm{\vm_i^{\mathrm{conv}}}_2 \le 1$ for every $\vz_i$ regardless of SNR, and $\vm_i^{\mathrm{conv}}$ concentrates smoothly as SNR grows. Together these make the backbone see a stationary input geometry in which semantic preference is carried by direction and confidence grows by controlled radial motion. This mirrors the norm--direction posterior decomposition made
explicit later in \Cref{eq:posterior_decomposition}.

Finally, DSL uses no informative timestep input: for compatibility with pretrained masked-diffusion DiTs~\citep{sahoo2024simple}, we keep the timestep embedding module but feed $t=0$ throughout training and sampling, consistent with \eqref{eq:t_invariant_denoiser} since $\vz_i$ and $\vm_i^{\mathrm{conv}}$ already encode the corruption regime.

\vheader
\section{Posterior Geometry and Pathwise Error Control}
\label{sec:analysis}
\vheader

Sections~\ref{sec:method}--\ref{sec:training} define DSL as a posterior-state framework: exact NLL views identify the continuous-SNR and endpoint state families, and mixed-SNR training places support on both.
This section explains what DSL geometry buys us for finite models.
In the unit-sphere localization channel, the Bayes target is not a family of time-indexed denoisers $\{\xhat(\cdot,t)\}_t$, but a single posterior-mean map $\xhat(\vz)=\E[\vx\mid\vz]$.
Different inference paths therefore differ only in which realized states $\vz$ they visit, and all query the same denoising map.
We analyze this shared map below: its geometry separates token identity from confidence and yields bounded, smooth posterior motion, which helps control path-dependent approximation error.

\textbf{Norm--direction posterior geometry.} \quad
Fix a token position. Written as a function of the noisy token $\vz$, the induced token posterior is
\begin{equation}
p(v \mid \vz)
\;\propto\;
\pi_v \exp\!\big(\langle \vz, \vx_v\rangle\big)
\;=\;
\pi_v \exp\!\big(\norm{\vz}_2 \cos\theta_v\big),
\label{eq:posterior_decomposition}
\end{equation}
where $\vx_v := \enc(v)$ is the unit-norm embedding of token $v$, $\pi_v$ is its prior, and $\theta_v$ is the angle between $\vz$ and $\vx_v$. 
Thus direction controls token preference while norm controls posterior concentration; this is a property of the unit-sphere channel, not a calibration objective.
Figure~\ref{fig:posterior_geometry} verifies that the trained converter follows this decomposition: trajectories organize by direction, and increasing SNR primarily increases concentration.

\begin{figure}[t]
    \centering
    \begin{subfigure}[b]{0.45\linewidth}
        \centering
        \includegraphics[width=.95\linewidth]{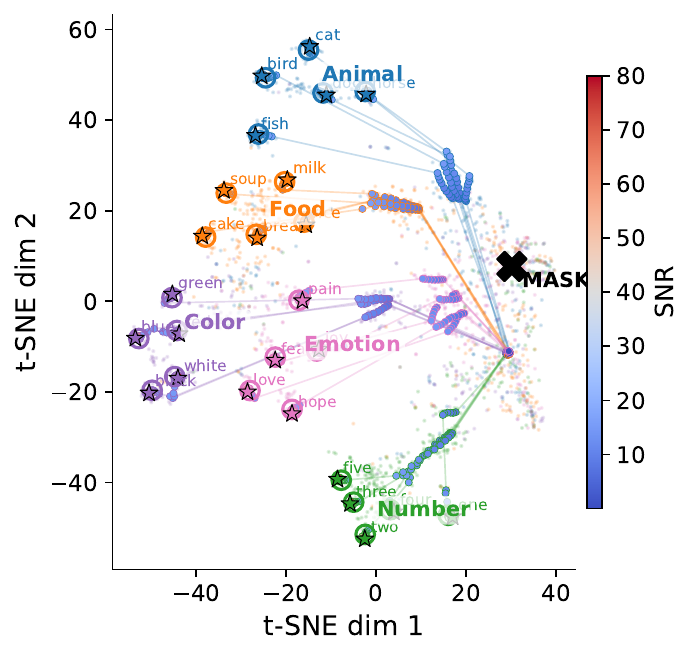}
        \caption{}
        \label{fig:tsne_traj}
    \end{subfigure}
    \hfill
    \begin{subfigure}[b]{0.45\linewidth}
        \centering
        \includegraphics[width=.95\linewidth]{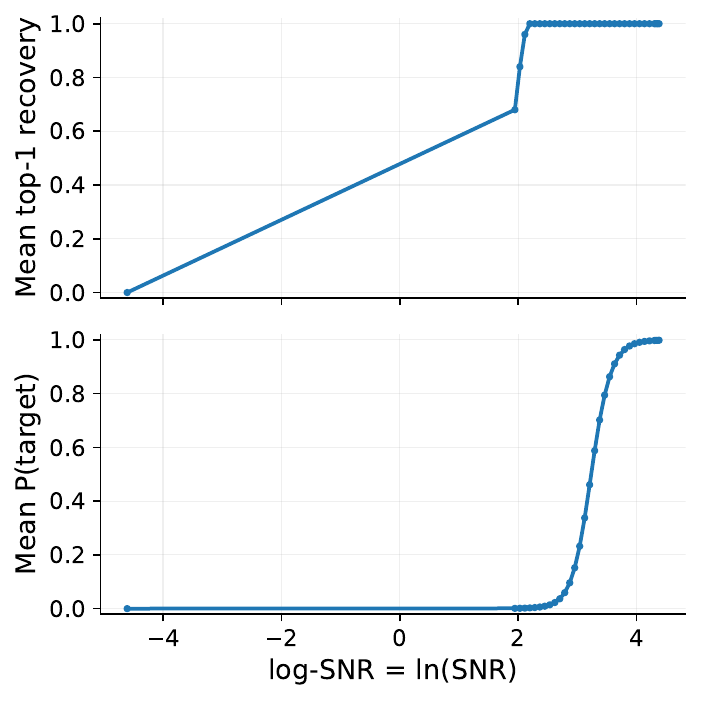}
        \caption{}
        \label{fig:recovery_curves}
    \end{subfigure}
    \caption{\textbf{The DSL posterior decomposes into direction and norm axes.}
    \textbf{(a)} t-SNE projection of converter outputs for 25 probe tokens spanning five semantic classes. Trajectories sweep $\vz_i = \gamma\,\ve_v$ from \texttt{[MASK]} across $\gamma \in [10^{-2},80]$ and are colored by SNR.
    \textbf{(b)} Mean top-1 token recovery and target-token probability as $\gamma$ increases.}
    \label{fig:posterior_geometry}
\end{figure}

\textbf{Bounded and Lipschitz posterior motion.} \quad
The converter maps each noisy state to a bounded mixture-of-tokens input, and clean token embeddings are unit-norm. 
At fixed converter temperature, this bounds the achievable logit range and gives an entropy envelope for the token posterior. 
Moreover, the Bayes posterior-mean map $\vm(\vz):=\E[\vx\mid\vz]$ is 1-Lipschitz:
\begin{equation}
\norm{\vm(\vz_1)-\vm(\vz_2)}_2
\;\le\;
\norm{\vz_1-\vz_2}_2 .
\label{eq:lipschitz}
\end{equation}
Thus local perturbations in noisy state cannot be amplified by the Bayes denoising field. 
Full entropy bounds are in Appendix~\ref{subsec:geometry}, and the proof of ~\eqref{eq:lipschitz} is in Appendix~\ref{app:lipschitz}.

\begin{table*}[t]
\centering
\scriptsize
\vspace*{-.4cm}
\setlength{\tabcolsep}{5.0pt}
\renewcommand{\arraystretch}{1.08}
\caption{
OWT ($L=1024$) unconditional generation. 
We report MAUVE$\uparrow$ as the primary metric; the full table with MAUVE / GenPPL / SentEnt is deferred to Appendix~\ref{sec:additional_exp}. 
Bold marks the best MAUVE in each column among non-reference rows. 
Rows marked $\ddagger$ are taken from \citep{wang2025remdm}; rows marked $\S$ from \citep{rout2025anchored}. 
All DSL-FT rows use the same DSL-finetuned checkpoint. 
DSL-FT + ROAR (naive) and DSL-FT + MDLM are non-refinement decoder ablations: tokens are committed once and never remasked. 
The ReMDM-family rows add iterative remasking refinement.
}
\label{tab:owt_mauve}
\begin{tabular*}{\textwidth}{@{\extracolsep{\fill}}lcccc}
\hline
Method & $T=128$ & $T=256$ & $T=512$ & $T=1024$ \\
\hline
Data (reference) & 1.000 & 1.000 & 1.000 & 1.000 \\
AR (reference)   & --    & --    & --    & 0.760 \\
\hline
SEDD (absorb)          & 0.007 & 0.008 & 0.009 & 0.008 \\
MDLM$^{\ddagger}$      & 0.015 & 0.023 & 0.031 & 0.042 \\
MDLM+FB$^{\ddagger}$   & 0.064 & 0.086 & 0.103 & 0.133 \\
MDLM+DFM$^{\ddagger}$  & 0.041 & 0.098 & 0.168 & 0.254 \\
ReMDM$^{\ddagger}$     & 0.057 & 0.216 & 0.350 & 0.403 \\
PRISM$^{\ddagger}$     & 0.175 & 0.268 & 0.281 & 0.260 \\
PRISM-loop$^{\ddagger}$& 0.118 & 0.294 & 0.423 & 0.527 \\
ADLM$^{\S}$            & 0.140 & 0.349 & 0.573 & 0.699 \\
\hline
DSL-FT + ROAR (naive) & --    & --    & --    & 0.551 \\
DSL-FT + MDLM &
0.402 &
0.481 &
0.506 &
0.495 \\
DSL-FT + ReMDM (confidence-based) &
0.615 &
0.622 &
\textbf{0.707} &
0.610 \\
DSL-FT + ReMDM-loop (principled $\eta_{\mathrm{cap}}$) &
\textbf{0.639} &
\textbf{0.661} &
0.651 &
\textbf{0.722} \\
\hline
\end{tabular*}
\end{table*}

\vheader
\section{Experiments}
\label{sec:experiments}
\vheader

We evaluate DSL on unconditional OpenWebText generation and Text8 likelihood. 
For OpenWebText, we separate the evaluation into three stages. 
First, we use simple non-refinement MDM decoders to isolate the effect of DSL training itself. 
Second, we add ReMDM-family masked refinement \citep{wang2025remdm} to test whether stronger iterative samplers can further exploit the DSL-trained posterior. 
Third, we evaluate a hybrid continuous-then-MDM sampler to show that the same checkpoint can compose continuous and discrete inference stages. 
Unless otherwise stated, all DSL results use the same DSL-finetuned checkpoint.

\vheader
\subsection{Setup}
\label{subsec:setup}
\vheader

We evaluate unconditional OpenWebText (OWT) generation under the MDLM/ReMDM protocol: GPT-2 BPE tokenization, sequence length $L=1024$, 5k samples, nucleus sampling with $\text{top-}p=0.9$ where applicable, and MAUVE computed with GPT-2 Large embeddings and $K=500$ clusters~\citep{sahoo2024simple,wang2025remdm}. 
OWT is our checkpoint-compatibility setting: we fine-tune DSL from the official MDLM checkpoint, whereas Text8 uses \textit{from-scratch} training. 
Full sampling protocols and diagnostic metrics are in Appendix~\ref{sec:additional_exp}.

\textbf{Primary metric.} \quad
Following ReMDM, we use MAUVE as the primary OWT generation metric and report GenPPL/SentEnt only as diagnostics~\citep{wang2025remdm}. 
GenPPL can be reduced by narrowing the generation distribution and sacrificing entropy/diversity; MAUVE instead compares model and human text distributions via divergence frontiers~\citep{pillutla2021mauve}. 
Full MAUVE / GenPPL / SentEnt results are in Appendix~\ref{sec:additional_exp}.

\vheader
\subsection{Training-side ablation with simple MDM decoders}
\label{subsec:simple_mdm_ablation}
\vheader

Before evaluating iterative refinement, we first isolate the effect of DSL training itself. 
We decode the same DSL-finetuned checkpoint with two simple MDM-style samplers that do not remask or refine committed tokens. 
The first is the vanilla MDLM sampler from the official implementation of \citep{sahoo2024simple}: it monotonically converts masked positions into sampled tokens and never revisits them. 
The second is ROAR, a random-order autoregressive sampler that reveals one token at a time using the DSL denoiser as an MDM posterior estimator. 
Full ROAR details are given in Appendix~\ref{app:roar_setup}, and the shared OWT sampling protocol is specified in Appendix~\ref{app:sampler_details}.

Table~\ref{tab:owt_mauve} shows that DSL already performs strongly under these non-refinement decoders. 
Replacing the MDLM checkpoint with the DSL-finetuned checkpoint under the same vanilla MDLM sampler raises MAUVE from $0.015/0.023/0.031/0.042$ to $0.402/0.481/0.506/0.495$ across $T=128,256,512,1024$. 
This is the cleanest training-side comparison in the table: the sampler is the standard monotone MDM decoder, so the improvement comes from the DSL-trained posterior rather than from remasking or self-correction.

ROAR provides a complementary endpoint test. 
It performs a single random-order reveal pass with $T=L=1024$, commits each token exactly once, and performs no confidence-based ordering, remasking, or iterative refinement. 
Even under this deliberately naive decoding path, DSL-FT + ROAR reaches MAUVE $0.551$, exceeding DSL-FT + MDLM at the same budget and also surpassing several prior refinement baselines. 
Together, the MDLM and ROAR ablations show that DSL improves the underlying clean-token posterior before any ReMDM-family refinement is added.

\vheader
\subsection{Masked refinement with ReMDM-family decoders}
\label{subsec:masked_refinement}
\vheader

Algorithm~\ref{alg:remdm_sampler_main} summarizes the primary masked-refinement sampler used for the main OWT results. 
The sampler starts from the zero-SNR mask state, repeatedly predicts clean-token posteriors, reveals a subset of masked positions, and optionally remasks committed tokens inside a ReMDM-loop window. 
The ReMDM-loop row in Table~\ref{tab:owt_mauve} instantiates this template with the step-budgeted capped schedule $\eta_{\mathrm{cap}}(T)$; the confidence-based row changes only the remasking policy. 
Vanilla MDLM, ROAR, and hybrid sampling details are deferred to Appendix~\ref{app:sampler_details}, Appendix~\ref{app:roar_setup}, and Appendix~\ref{app:hybrid_sampler}.

\begin{algorithm}[tb]
\caption{DSL sampling with ReMDM sampler.
Adapted from \cite[Algorithm~1]{wang2025remdm}; in DSL, the ReMDM mask state is represented by zero SNR.}
\label{alg:remdm_sampler_main}
\small
\begin{algorithmic}
    \State \textbf{Input:} DSL denoiser $\vx_\theta$, sampling steps $T$, noise schedule $\alpha_t$, clean endpoint $\mathrm{SNR}_{\max}$, \remdm{remask schedule $\sigma_t$}.
    \State \textbf{Initialize:} set every token embedding to the mask endpoint, i.e., $\operatorname{SNR}(\vz_T)=0$.
    \For{$i=T,T-1,\ldots,1$}
        \State $t=i/T,\quad s=(i-1)/T$; set $\alpha_t,\alpha_s$.
        \State \remdm{Set $\sigma_t\in[0,\sigma_t^{\max}]$, where $\sigma_t^{\max}=\min\{1,(1-\alpha_s)/\alpha_t\}$.}
        \State Predict clean-token posteriors $\widehat{\vx}=\vx_\theta(\vz_t)$.
        \State Form the ReMDM endpoint posterior
        $
        p_\theta(\vz_s\mid\vz_t)
        =
        q_{\remdm{\sigma}}^{\mathrm{DSL}}
        \bigl(\vz_s\mid\vz_t,\vx=\widehat{\vx}\bigr),
        $
        where ReMDM's mask atom is interpreted as $\operatorname{SNR}=0$ and its clean-token atom as $\operatorname{SNR}=\mathrm{SNR}_{\max}$.
        \State Sample $\vz_s\sim p_\theta(\vz_s\mid\vz_t)$.
        \State Realize sampled clean tokens at $\mathrm{SNR}_{\max}$; \remdm{realize remasked tokens by resetting their SNR to $0$.}
        \State Set $\vz_t\leftarrow\vz_s$.
    \EndFor
    \State \textbf{Output:} decoded tokens from the final clean-endpoint latents.
\end{algorithmic}
\end{algorithm}

Having isolated the training-side gain with simple decoders, we next evaluate whether iterative masked refinement can further exploit the same DSL checkpoint. 
Table~\ref{tab:owt_mauve} compares DSL with ReMDM-family decoders against published diffusion baselines under the controlled OWT protocol. 
Both DSL refinement rows substantially improve MAUVE across step budgets from $T=128$ to $T=1024$. 
The confidence-based schedule performs best at $T=512$, reaching MAUVE $0.707$, while the step-budgeted ReMDM-loop schedule with principled $\eta_{\mathrm{cap}}(T)$ performs best at $T=128$, $T=256$, and $T=1024$, reaching $0.639$, $0.661$, and $0.722$ respectively.

% I had to modify the 12 parameter to keep the full 5.4 section header.   Previous placement is commented out above
\begin{wraptable}[12]{r}{0.52\linewidth}
\vspace{-0.8em}
\centering
\scriptsize
\setlength{\tabcolsep}{3.2pt}
\renewcommand{\arraystretch}{1.05}
\caption{
Few-step OWT generation with the DSL hybrid continuous-then-MDM sampler.
All rows use the same DSL-finetuned checkpoint. For each
$(T_{\rm cont},T_{\rm MDM})$, we report a balanced
$\sigma_{\rm switch}$ from a small handoff sweep. The full sweep with
MAUVE / GenPPL / SentEnt is in Appendix~
\ref{sec:additional_exp}.
%\ref{app:hybrid_sampler}.
}
\label{tab:hybrid}
\begin{tabular}{ccccc}
\hline
$T_{\rm cont}$ & $T_{\rm MDM}$ & NFE & $\sigma_{\rm switch}$ & MAUVE$\uparrow$ \\
\hline
16 & 16 & 32 & 0.49 & 0.501 \\
16 & 32 & 48 & 0.49 & 0.662 \\
16 & 48 & 64 & 0.49 & 0.702 \\
32 & 16 & 48 & 0.46 & 0.587 \\
32 & 32 & 64 & 0.46 & 0.589 \\
32 & 48 & 80 & 0.46 & 0.724 \\
\hline
\end{tabular}
\vspace{-0.8em}
\end{wraptable}

The comparison with the preceding ablation rows separates training and sampling effects. 
DSL-FT + MDLM and DSL-FT + ROAR show that DSL training already improves non-refinement decoding. 
The ReMDM-family rows then show that iterative remasking further unlocks the same checkpoint, improving from DSL-FT + MDLM's $0.495$ and ROAR's $0.551$ at $T=1024$ to $0.722$ with ReMDM-loop. 
Thus DSL improves the denoiser first, and refinement policies further exploit the improved posterior.

\vheader
\subsection{Hybrid continuous-then-MDM sampling}
\label{subsec:hybrid_sampling}
\vheader

We also evaluate a hybrid sampler that composes two inference regimes within the same DSL checkpoint. 
The sampler first runs a short continuous denoising stage, where each position carries soft finite-SNR token evidence, and then switches to an MDM-style endpoint stage that commits the remaining uncertainty into discrete tokens. 
This experiment is designed as a path-compatibility test: can one DSL checkpoint support both continuous intermediate states and discrete endpoint decoding in a single generation run?

Table~\ref{tab:hybrid} reports few-step OWT generation with this hybrid path. 
At 48 total steps, the hybrid sampler reaches MAUVE $0.662$, already exceeding the non-refinement DSL-FT + MDLM result at $T=1024$. 
With 80 total steps, it reaches $0.724$, comparable to the best ReMDM-family result in Table~\ref{tab:owt_mauve}. 
The allocation between continuous and discrete stages matters: at the same nominal 64-step budget, $(T_{\mathrm{cont}},T_{\mathrm{MDM}})=(16,48)$ reaches $0.702$, while $(32,32)$ reaches $0.589$. 
We therefore view hybrid sampling as a promising DSL-specific inference path, but not yet as an exhaustively optimized decoding policy.

\begin{wraptable}{r}{0.36\linewidth}
\vspace{-2.8em}
\scriptsize
    \centering
    \caption{Bits Per Character (BPC) on Text8 test set. Continuous diffusion entries (Plaid, DSL) are ELBO upper bounds; discrete diffusion and AR entries are exact NLL — direct numerical comparison across these categories is not meaningful}
    \label{tab:text8}
    \resizebox{\linewidth}{!}{%
        \begin{tabular}{lr}
            \toprule
            Method &  BPC ($\downarrow$) \\
            \midrule
            \textit{Continuous Diffusion} & \\
            Plaid (MD4 impl.) & $\le$ 1.48 \\
            DSL (Ours) & $\le$ \textbf{1.45} \\
            \midrule
            \textit{Discrete Diffusion} & \\
            Mult. Diffusion & $\le$ 1.72\\
            D3PM Uniform & $\le$ 1.61\\
            D3PM Absorb & $\le$ 1.45\\
            SEDD Absorb & $\le$ 1.41 \\
            MDLM & $\le$ 1.40 \\
            MD4 & $\le$ 1.37 \\
            \midrule
            \textit{Autoregressive} & \\
            Transformer AR & \textbf{1.23}\\
            IAF/SCF  & 1.88\\
            AR Argmax Flow & 1.39\\
            AR Discrete Flow & \textbf{1.23}\\
            \midrule
            \textit{Any-order Autoregressive} & \\
            ARDM & $\le$ 1.43 \\
            MAC & $\le$ 1.40\\
            \bottomrule
        \end{tabular}
    }
\vspace{-3.8em}
\end{wraptable}

\vheader
\subsection{Likelihood-based evaluation on Text8}
\label{subsec:text8}
\vheader

The DSL framework admits two likelihood estimators following Section~\ref{subsec:nll_views}: the path-integral identity \eqref{eq:probability} and the ROAR estimator \eqref{eq:roar_estimator}. Table~\ref{tab:text8} reports BPC on Text8. DSL achieves $\le 1.45$ BPC, the first continuous-state NLL estimate to approach the masked-discrete-diffusion range (MDLM $\le 1.40$, MD4 $\le 1.37$). Continuous-state estimates are upper bounds via ELBO and so cannot match the exact NLL accessible to masked and AR models; the relevant comparison is to other continuous-state baselines, where DSL improves on Plaid by $\ge 0.03$ BPC. 
Setup details in Appendix~\ref{sec:additional_exp}.

\vheader
\section{Related Work}
\vheader

DSL connects two lines of work. Continuous diffusion LMs operate over noisy token embeddings and expose finite-SNR states, but typically rely on timestep/noise conditioning and have trailed masked discrete diffusion on language benchmarks~\citep{li2022diffusion,strudel2022self,dieleman2022continuous,gulrajani2023likelihood}. Masked diffusion and ReMDM decode through endpoint mask/reveal states, with remasking enabling revision of visible tokens~\citep{austin2021structured,sahoo2024simple,shi2024simplified,wang2025remdm}. Related hybrid methods connect discrete and Gaussian processes through loss-level mixtures or alignments~\citep{sahoo_duality,pynadath2025candi}. DSL differs in that its stochastic-localization SDE constrains the entire intermediate path: posterior-mean drift plus Brownian noise gives finite-SNR states a path-level identity from masked to clean tokens. This SDE structure underlies the path-integral identity under the Bayes-optimal denoiser, whereas loss-level hybrids do not impose an analogous conservative field or path-independence guarantee. DSL is therefore complementary: unit-sphere posterior coordinates make the Bayes denoiser SNR-invariant, so continuous denoising, ROAR revealing, and masked remasking become different per-token SNR paths through one time-agnostic model.

\vheader
\section{Conclusion}
\vheader

We introduced Discrete Stochastic Localization (DSL), a continuous-state framework for discrete sequence generation with unit-sphere token embeddings. By making the clean-token posterior only depend on the noisy state rather than a nominal timestep, DSL yields a time-agnostic denoiser and a mixture-of-tokens interface for standard Transformer/DiT backbones. A single DSL supports masked refinement, random-order AR, and hybrid continuous-then-MDM sampling, improving OpenWebText generation across paths and approaching the MDM likelihood range on Text8. These results suggest parameterizing continuous diffusion for language as posterior-state denoising over noisy tokens.

\textbf{Limitations.} \quad
Our evaluation is limited to OpenWebText generation and Text8 likelihood. The reported ReMDM-family and hybrid samplers are not the result of an exhaustive decoding-policy search; they are intended to test whether DSL training improves the underlying posterior and whether one checkpoint can be queried along multiple SNR paths. More optimized remasking schedules, hybrid handoff rules, larger models, longer contexts, and conditional generation remain important directions for future work.

\newpage

\bibliographystyle{plain}
\bibliography{refs}

\appendix
\onecolumn

\section*{Appendix Overview}
\addcontentsline{toc}{section}{Appendix Overview}
\vspace{0.5em}

\noindent\textbf{\large \hyperref[sec:derivations]{Appendix A: Mathematical Derivations}}
\begin{itemize}
    \setlength{\itemsep}{2pt}
    \item[\S A.1] \hyperref[app:notation]{Notation and Summary Table}
    \item[\S A.2] \hyperref[app:snr_invariant_denoiser]{Optimal Denoiser is SNR-Invariant}
    \item[\S A.3] \hyperref[app:arbitrary_paths]{Exact Likelihood over Arbitrary SNR Paths}
    \item[\S A.4] \hyperref[app:conditional_unconditional]{Equivalence of Conditional and Unconditional Dynamics}
    \item[\S A.5] \hyperref[app:ar_contours]{Autoregressive Contours as a Special Case}
    \item[\S A.6] \hyperref[app:roar_likelihood]{Exact ROAR Likelihood Decomposition and Estimator}
    \item[\S A.7] \hyperref[app:ce_surrogate]{Cross-Entropy as a Surrogate for the Path-Integral MSE}
    \item[\S A.8] \hyperref[subsec:posterior_matching]{Posterior-State Matching and Inference-Time Error Correction}
    \item[\S A.9] \hyperref[subsec:geometry]{Confidence Geometry: Norm--Direction Decomposition and Bounded Entropy}
    \item[\S A.10] \hyperref[app:lipschitz]{Lipschitz Continuity of the Induced Posterior-Mean Map}
\end{itemize}

\vspace{0.8em}
\noindent\textbf{\large \hyperref[app:continuous_vs_dsl]{Appendix B: Continuous States Are Not Enough}}
\begin{itemize}
    \setlength{\itemsep}{2pt}
    \item[] \hyperref[app:continuous_vs_dsl]{Posterior Coordinates vs.\ Timestep-Conditioned Denoising}
\end{itemize}

\vspace{0.8em}
\noindent\textbf{\large \hyperref[sec:implementation_details]{Appendix C: Training and Sampling Details}}
\begin{itemize}
    \setlength{\itemsep}{2pt}
    \item[\S C.1] \hyperref[app:training_settings]{OWT Finetuning Settings}
    \item[\S C.2] \hyperref[app:mixed_snr_support]{Mixed SNR Support Used in Training}
    \item[\S C.3] \hyperref[app:masked_refinement_alg]{Masked Refinement with a DSL Checkpoint}
    \item[\S C.4] \hyperref[app:sampler_details]{Sampling Settings on OWT}
    \item[\S C.5] \hyperref[app:roar_setup]{ROAR Sampler Details}
    \item[\S C.6] \hyperref[app:hybrid_sampler]{Hybrid Sampler Details}
\end{itemize}

\vspace{0.8em}
\noindent\textbf{\large \hyperref[sec:mech_appendix]{Appendix D: Mechanistic Analysis}}
\begin{itemize}
    \setlength{\itemsep}{2pt}
    \item[\S D.1] \hyperref[sec:cyclic_dataset]{Cyclic Toy Setup and Correction Example}
    \item[\S D.2] \hyperref[app:targeted_remasking]{Why Targeted Remasking Matters}
    \item[\S D.3] \hyperref[app:self_generated_drafts]{Robustness to Self-Generated Intermediate Drafts}
    \item[\S D.4] \hyperref[app:diagnostics]{Sampling Diagnostics and Over-Refinement}
\end{itemize}

\vspace{0.8em}
\noindent\textbf{\large \hyperref[sec:calibration_appendix]{Appendix E: Calibration and Endpoint-Smoothing Ablation}}
\begin{itemize}
    \setlength{\itemsep}{2pt}
    \item[\S E.1] \hyperref[app:smooth_roar_why]{Atomic vs.\ Smoothed ROAR Endpoints}
    \item[\S E.2] \hyperref[app:ece_reliability]{Near-Clean Calibration}
    \item[\S E.3] \hyperref[app:speed_quality_smoothing]{Downstream Step--Quality Tradeoff}
\end{itemize}

\vspace{0.8em}
\noindent\textbf{\large \hyperref[sec:additional_exp]{Appendix F: Additional Experiments}}
\begin{itemize}
    \setlength{\itemsep}{2pt}
    \item[\S F.1] \hyperref[app:additional_owt_exp]{OpenWebText Generation}
    \item[\S F.2] \hyperref[app:additional_text8_exp]{Text8 Likelihood Evaluation}
\end{itemize}

\vspace{0.8em}
\noindent\textbf{\large \hyperref[sec:limitations_impacts]{Appendix G: Limitations and Broader Impacts}}

\vspace{0.8em}
\noindent\textbf{\large \hyperref[sec:assets]{Appendix H: Assets and Licenses}}

\vspace{0.8em}
\noindent\textbf{\large \hyperref[sec:gen_text_examples]{Appendix I: Additional Qualitative Samples}}
\begin{itemize}
    \setlength{\itemsep}{2pt}
    \item[\S I.1] \hyperref[app:text8_samples]{Text8 Samples}
    \item[\S I.2] \hyperref[app:owt_samples]{OpenWebText Samples}
\end{itemize}

\section{Mathematical Derivations}
\label{sec:derivations}

This appendix collects the derivations underlying the DSL method in Section~\ref{sec:method}.
The main text only uses three consequences of these results:
(i) the denoiser is realized-state invariant under unit-sphere token geometry,
(ii) exact likelihood views motivate both continuous and ROAR-style support in training,
and (iii) the induced posterior-mean map is smooth.
Here we provide the full proofs and auxiliary technical remarks.

\subsection{Notation and Summary Table}
\label{app:notation}

% Keep Table~\ref{tab:summary} as the first item in the appendix.
\begin{table*}[h]
\centering
\begin{tabular}{c p{10cm}}
\hline
\textbf{Symbol} & \textbf{Description} \\[6pt]
\hline
$i = 1,\ldots,L$ & \underline{I}ndex for sequences of length $L$ \\[6pt]
$t \in [0, \infty)$ & Index for continuous \underline{t}ime dynamics, equal to SNR\\[6pt]
$\snr_i \in [0, \infty)$ & Per token SNR, defined on a contour or as function of $t$\\[6pt]
$s_i \in \mathcal{V}$ & $i$-th \underline{s}ymbol/token
from the vocabulary $\mathcal{V}$ \\[6pt]
$\vx_i = \enc(s_i) \in \mathbb{R}^d$ & \underline{E}mbed token as vector on unit hyper-sphere surface ($\norm{\vx}=1$) \\[6pt]
$\vz_i \in \mathbb{R}^d$ & Noisy embedding of the $i$-th token at noise level $t$ \\[6pt]
$d\vz = \vx~dt + dW$ & Conditional SDE for dynamics \\[6pt]
$\vz = t~\vx + \sqrt{t}~\eps$ & Sample conditional marginal \\[6pt]
$\vz \sim \mathcal N(t ~ \vx, t)$ & Alternate form to sample conditional marginal \\[6pt]
$d\vz = \xhat(\vz)~ dt + dW$ & Unconditional SDE with equivalent dynamics to conditional SDE \\[6pt]
$\xhat(\vz, t) = \mathbb E[\vx | \vz(t)] $ & Optimal denoiser would generally depend on time or noise level \\[6pt]
$\xhat(\vz) = \mathbb E_{\vx} [ \vx e^{\vx \cdot \vz} ] / \mathbb E_{\vx} [e^{\vx\cdot \vz} ] $ & Key result: optimal denoiser doesn't depend on time for spherical embeddings \\[6pt]
$\xhat(\vz) = \nabla_{\vz} \LSE_{\vx \in \mathcal{D}} (\vz\cdot \vx) $ & For empirical distribution, relates to gradient of cumulant generating function, or modern Hopfield energy\cite{hopfield} \\[6pt]
$\begin{gathered}
-\log P(\vs) = -\log P(\vx) \\
 = \half \int_0^\infty dt ~ \mathbb E_{\vz(t)|\vx}[\norm{\vx - \xhat(\vz)}^2] 
\end{gathered}
$ & Probability relates to MMSE, for any one-to-one embedding~\cite{guo,kong2023informationtheoretic,wu2024contrastivediffusionloss} \\[6pt]
\hline
\end{tabular}
\caption{Summary of notation and key relations.}\label{tab:summary}
\end{table*}

\subsection{Optimal Denoiser is SNR-Invariant}
\label{app:snr_invariant_denoiser}

% This subsection should remain almost unchanged.
% It supports the main-text theorem that the posterior depends on the realized state
% rather than the nominal corruption label.
We now derive the optimal denoiser for the noise channel with per token SNR described in the main text.
The denoiser is as follows, where we first re-write with Bayes rule, then expand the Gaussian noise channel.  
\benn
\xhat(\vz, \vsnr) &\equiv \mathbb E_{p_\vsnr(\vx|\vz)} [\vx]  = \frac{\sum_\vx  p_\vsnr(\vz | \vx) P(\vx) \vx}{p_\vsnr(\vz)} = \frac{\sum_\vx  p_\vsnr(\vz | \vx) P(\vx) \vx}{\sum_{\bar x} p_\vsnr(\vz| \bar \vx) P(\bar \vx) }\\
&= \frac{\sum_\vx  \exp(-\half \norm{\vz - \vsnr \vx}^2/\vsnr) /\mathcal Z(\vsnr) P(\vx) \vx}{\sum_{\bar x} \exp(-\half \norm{\vz - \vsnr \bar \vx}^2/ \vsnr) /\mathcal Z(\vsnr) P(\bar \vx)  } \\
&= \frac{\sum_\vx  \exp(\vz \cdot \vx)   P(\vx) \vx}{\sum_{\bar x} \exp(\vz \cdot \bar \vx) P(\bar \vx)  } \\
\xhat(\vz, \vsnr) &= \xhat(\vz) = \mathbb E_{\vx} [ \vx e^{\vx \cdot \vz} ] / \mathbb E_{\vx} [e^{\vx\cdot \vz} ] 
\eenn
The division and multiplication between $\vsnr$ and $\vz$ should be understood to be applied component-wise, per token.  
We canceled out the $\norm{\vz}$ term, and due to the normed embedding, $\exp(-\half \vsnr \norm{\vx}^2)$ terms become constant and cancel out, leaving us with no dependence on $\vsnr$ at all. 
The distribution $P(\vx) e^{\vz \cdot \vx} / \mathcal Z(\vz)$ is sometimes referred to as an exponentially tilted distribution of $P(\vx)$. This form also can be written as $\nabla_z \log \mathbb E_{\vx} [e^{\vx\cdot \vz} ]$, which is a gradient of the cumulant generating function. 

The results above also hold for the simpler case where $\snr_i(t) = t$, leading to a denoiser which is invariant to $t$.
Recent work has also empirically noted that even for traditional diffusion models, noise conditioning is not always necessary or useful\cite{no_noise_conditioning}.

\subsection{Exact Likelihood over Arbitrary SNR Paths}
\label{app:arbitrary_paths}

% Paste your current subsection:
% "Probability with Arbitrary SNR Paths"
%
% This supports Section 2.4.1 in the main text:
% exact NLL from continuous SNR contours.

% STATE theorem from Verdu, on grad of KL for noisy channel
We make use of the following theorem, re-stated with our setting and notation, from \cite{palomar2005gradient}, to prove \Cref{eq:probability}.

\begin{theorem}[\citep{palomar2005gradient}, Thm.~5]
\label{thm:palomar-divergence}
Consider the signal model
\[
\vz_i = \snr_i \vx_i + \sqrt{\snr_i} \eps_i,
\]
where \( \vx \) is arbitrarily distributed with finite second-order moments, and \( \eps_i \sim \mathcal{N}(0, \mathbf{I}) \) is independent Gaussian noise.

Then the gradient of the Kullback–Leibler divergence between the conditional output distribution \( p_\vsnr(\vz|\vx) \) and the unconditional output distribution \( p_\vsnr(\vz) \) is
\[
\nabla_{\vsnr} D\big(p_\vsnr(\vz|\vx) \,\|\, p_\vsnr(\vz)\big) 
= \half \mathbf E(\vx, \vsnr),
\]
where \( E_i(\vx, \vsnr) \equiv \mathbb{E}_{p_\vsnr(\vz | \vx)}[\norm{\vx_i - \hat{\vx}_i(\vz)}^2] \), and 
\( \hat{\vx}(\vz) = \mathbb{E}_{p_\vsnr(\vx|\vz)}[\vx] \) 
is the MMSE estimator.
\end{theorem}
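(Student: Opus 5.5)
We will prove the gradient identity in the statement by a direct computation of $\partial_{\snr_i} D\big(p_\vsnr(\vz|\vx)\,\|\,p_\vsnr(\vz)\big)$ at fixed $\vx$, reparameterizing the noise so that the dependence on $\snr_i$ is explicit.

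\textbf{Step 1: reduce the divergence to a log-partition function.} Write the log-likelihood ratio. Since $p_\vsnr(\vz) = \sum_{\bar\vx} P(\bar\vx)\,p_\vsnr(\vz|\bar\vx)$ and each Gaussian factor is $\propto \exp\!\big(\vz\cdot\bar\vx - \tfrac12\sum_i \snr_i\norm{\bar\vx_i}^2\big)$ times a $\bar\vx$-independent term, we obtain
\[
\log\frac{p_\vsnr(\vz|\vx)}{p_\vsnr(\vz)} = \vz\cdot\vx - \tfrac12\textstyle\sum_j \snr_j\norm{\vx_j}^2 - \log \E_{\bar\vx}\!\big[e^{\vz\cdot\bar\vx - \frac12\sum_j\snr_j\norm{\bar\vx_j}^2}\big].
\]
We keep the $\snr$-dependent norm terms rather than invoking unit norm, so that the statement holds under only finite second moments. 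Define $A(\vz,\vsnr)$ as the log of that expectation. Then $\nabla_\vz A = \xhat(\vz)$ and $\partial_{\snr_i}A = -\tfrac12\E_{p_\vsnr(\bar\vx|\vz)}\norm{\bar\vx_i}^2$.

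\textbf{Step 2: reparameterize and differentiate.} Substitute $\vz_j = \snr_j\vx_j + \sqrt{\snr_j}\eps_j$ and set $D = \E_\eps[\,\cdot\,]$. The first term gives $\E[\vz\cdot\vx] - \tfrac12\sum_j\snr_j\norm{\vx_j}^2 = \tfrac12\sum_j\snr_j\norm{\vx_j}^2$, whose $\snr_i$-derivative is $\tfrac12\norm{\vx_i}^2$. For the term $-\E_\eps[A(\vz(\eps,\vsnr),\vsnr)]$, the chain rule gives
\[
-\E\Big[\xhat_i(\vz)\cdot\big(\vx_i + \tfrac{\eps_i}{2\sqrt{\snr_i}}\big)\Big] + \tfrac12\E\big[\E[\norm{\bar\vx_i}^2\mid\vz]\big].
\]

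\textbf{Step 3: remove the noise term by Stein's lemma.} This is the main obstacle. Apply Gaussian integration by parts: $\E[\eps_i\cdot\xhat_i(\vz)] = \sqrt{\snr_i}\,\E[\nabla_{\vz_i}\cdot\xhat_i(\vz)]$. The divergence is $\nabla_{\vz_i}\cdot\xhat_i = \operatorname{tr}\operatorname{Cov}(\bar\vx_i|\vz) = \E[\norm{\bar\vx_i}^2|\vz] - \norm{\xhat_i}^2$, since $\xhat = \nabla_\vz A$ and the Hessian of $A$ is the posterior covariance.

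\textbf{Step 4: collect terms.} The derivative becomes
\[
\tfrac12\norm{\vx_i}^2 - \E[\xhat_i\cdot\vx_i] - \tfrac12\E\big[\E[\norm{\bar\vx_i}^2|\vz] - \norm{\xhat_i}^2\big] + \tfrac12\E\big[\E[\norm{\bar\vx_i}^2|\vz]\big],
\]
in which the posterior second moments cancel. What remains is $\tfrac12\E\norm{\vx_i - \xhat_i(\vz)}^2 = \tfrac12 E_i(\vx,\vsnr)$, as claimed.

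\textbf{Technical justification.} Exchanging differentiation and expectation, and applying Stein's lemma, requires integrability. Finite second moments of $\vx$ give polynomial growth bounds on $\xhat$ and its Jacobian, uniformly for $\snr_i$ in compact subsets of $(0,\infty)$. Dominated convergence then applies; in the paper's finite-vocabulary setting these bounds are immediate.
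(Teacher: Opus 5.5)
Your derivation is correct, and it takes a different route from the paper. The paper gives no derivation of this statement: it imports it from Palomar--Verd\'u, ``re-stated with our setting and notation,'' and uses it as a black box in Appendix~\ref{app:arbitrary_paths}.

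Applying that reference implicitly requires recasting the DSL channel in the reference's form. One way is the invertible rescaling $\vz_i/\sqrt{\snr_i}=\sqrt{\snr_i}\,\vx_i+\eps_i$, which leaves the divergence unchanged. If the cited result is parameterized by channel gain, one must also pass through $d\sqrt{\snr_i}/d\snr_i=1/(2\sqrt{\snr_i})$, which is where the $\tfrac12$ would come from. The paper leaves this bookkeeping unstated. What the citation buys is brevity and generality, with regularity handled in the reference.

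What your computation buys:
\begin{itemize}
\item It is a self-contained proof carried out directly in the localization parameterization.
\item It holds pointwise in $\vx$, which is exactly the form the path integral uses at fixed $\vx$.
\item It makes the mechanism visible. Stein's lemma turns the $\eps_i/(2\sqrt{\snr_i})$ term into $-\tfrac12\E[\mathrm{tr}\,\Cov(\bar\vx_i\mid\vz)]$. Its posterior second-moment part cancels the explicit $\partial_{\snr_i}A$ contribution, and $\tfrac12\E\norm{\xhat_i}^2$ is left to complete the square.
\item By keeping the $\snr$-dependent norm terms, it shows the identity needs no unit-norm constraint; that constraint matters only for the SNR-invariance of $\xhat$.
\item It rests on the same ``Hessian of the log-partition equals posterior covariance'' fact the paper uses in Appendix~\ref{app:lipschitz}.
\end{itemize}

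Your regularity remark is sufficient for the paper's finite-vocabulary setting, where $\xhat$ and the posterior covariance are uniformly bounded. For general inputs, the growth bound you invoke is true: the posterior second moment grows at most quadratically in $\norm{\vz}$, locally uniformly in $\snr$. It deserves a line of proof rather than an assertion.
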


% FUNDAMENTAL THEOREM OF CALC for Line integral
We use this theorem in conjunction with the fundamental theorem of calculus for line integrals to obtain the following.
\begin{align}\label{eq:ftc}
    \int_C d\vsnr \cdot \nabla_\vsnr D\big(p_\vsnr(\vz|\vx) \,\|\, p_\vsnr(\vz)\big) = \half \int_C d\vsnr \cdot \mathbf E(\vx, \vsnr)  = \left . D\big(p_\vsnr(\vz|\vx) \,\|\, p_\vsnr(\vz)\big) \right |_{\vsnr_0}^{\vsnr_1}
\end{align}
Here, $\vsnr_0, \vsnr_1$ define the endpoints of the contour, which must be piecewise smooth. 
Next, let's consider some of the limits.
\begin{align}
 \lim_{\vsnr \rightarrow 0} D\big(p_\vsnr(\vz|\vx) \,\|\, p_\vsnr(\vz)\big) &= 0\\
\lim_{\vsnr \rightarrow \infty} D\big(p_\vsnr(\vz|\vx) \,\|\, p_\vsnr(\vz)\big) &= -\log P(\vx) 
\end{align}
The second limit can be observed as follows.
\begin{align}
    D\big(p_\vsnr(\vz|\vx) \,\|\, p_\vsnr(\vz)\big) &= \mathbb E_{p_\vsnr(\vz|\vx)} \left [\log \frac{p(\vz | \vx)}{p(\vz)} \right ] \nonumber \\
    &= \mathbb E_{p_\vsnr(\vz|\vx)} \left [\log \frac{P(\vx|\vz)}{P(\vx)} \right ] \nonumber\\
    &= \mathbb E_{p_\vsnr(\vz|\vx)} \left [\log P(\vx|\vz) \right ] - \log P(\vx) \label{eq:kl}
\end{align}
Because $P(\vx)$ is discrete, $\lim_{\vsnr \to \infty} P(\mathbf{x} \mid \mathbf{z}) = 1 ~~ \text{almost surely}$.

Combining \Cref{eq:kl} and \Cref{eq:ftc}, we obtain the following. 
\begin{align}\label{eq:end}
    -\log P(\vx) = \half \int_{C_\vsnr} d\bar \vsnr \cdot \mathbf E(\vx, \bar\vsnr) - \mathbb E_{p_\vsnr(\vz|\vx)} \left [\log P(\vx|\vz) \right ]
\end{align}
Where the curve $C_\vsnr$ is piecewise smooth, begins at $\vsnr=0$ and ends at $\vsnr$. 
We can use this result to obtain a path that starts at zero and ends at $t$, and \Cref{eq:probability} when we let the end of the contour go to infinity. \qed

\subsection{Equivalence of Conditional and Unconditional Dynamics}
\label{app:conditional_unconditional}

% Paste your current subsection:
% "Equivalence of conditional and unconditional dynamics"
Consider the marginals generated conditioned on the data with the following noise channel. 
$$p_\vsnr(\vz) \equiv \sum_{\vx} p_\vsnr(\vz|\vx) P(\vx)$$
Where $p_\vsnr(\vz|\vx)$ is a Gaussian noise channel, which could be defined as follows,
$$\vz_i = \snr_i ~ \vx_i + \sqrt{\snr_i} ~\eps_i, \qquad \eps\sim\mathcal N(0,I).$$
This is equivalent to the conditional SDE dynamics, 
\begin{align} \label{eq:cond_appendix}
  d\vz_i &= \vx_i \dot \snr_i~dt + \sqrt{\dot \snr_i} dW_i, ~~ \vx \sim P(\vx),
\end{align} 
as can be seen from direct integration. 
We can use the Fokker-Planck equation to understand the marginal dynamics of this process. 
Consider a particular contour, $\vsnr(t)$, as defined in the text, which goes from zero to infinity. At $\vsnr(0) = 0$, we have $$p_{\vsnr(0)}(\vz) = \delta(\vz).$$
The Fokker-Planck equation gives the following for the conditional dynamics.
\[
\frac{\partial}{\partial t} p_{\vsnr(t)}(\vz \mid \vx) = 
- \sum_i \nabla_{\vz_i} \cdot \left( \dot \snr_i ~\vx \, p_{\vsnr}(\vz \mid \vx) \right) 
+  \sum_i \dot \snr_i \Delta_{\vz_i} \, p_{\vsnr}(\vz \mid \vx)
\]
By marginalizing over $P(\vx)$, get the following. 
\begin{align}
\frac{\partial}{\partial t} p_{\vsnr(t)}(\vz) &= \sum_\vx P(\vx) \frac{\partial}{\partial t} p_{\vsnr(t)}(\vz \mid \vx)\\ 
&= - \sum_i \nabla_{\vz_i} \cdot \left( \dot \snr_i ~ \sum_\vx P(\vx) \vx_i \, p_{\vsnr}(\vz \mid \vx) \right) 
+  \sum_i \dot \snr_i \Delta_{\vz_i} \, p_{\vsnr}(\vz) \\
&= - \sum_i \nabla_{\vz_i} \cdot \left( \dot \snr_i ~ \xhat_i(\vz) \, p_{\vsnr}(\vz) \right) 
+  \sum_i \dot \snr_i \Delta_{\vz_i} \, p_{\vsnr}(\vz) \label{eq:marginalize}
\end{align}
In the last line, we used the definition of the optimal denoiser in terms of the posterior mean. 

The goal is to design an unconditional dynamics that obeys the same evolution of the marginal. 
The unconditional dynamics are defined as follows. 
\begin{align} \label{eq:uncond_appendix}
  d\vz_i &= \xhat_i(\vz) \dot \snr_i~dt + \sqrt{\dot \snr_i} dW_i,
\end{align} 
Applying the Fokker-Planck equation to this expression directly matches \Cref{eq:marginalize}. \qed

\subsection{Autoregressive Contours as a Special Case}
\label{app:ar_contours}

The continuous SNR path formulation also recovers the standard autoregressive factorization as a special contour choice.
In particular, when the contour reveals one token at a time while treating previously revealed tokens as effectively clean and future tokens as maximally corrupted, the path-integral likelihood reduces to the standard chain rule.
For completeness, we isolate this argument here, since the main text only needs the high-level consequence.

% Move the AR chain-rule derivation here from your current
% "Optimal denoiser is SNR invariant" subsection.
% Specifically, move the derivation around Eq. ar_prob into this subsection.
\textbf{Conditional denoisers} \quad
Replacing $P(\vx)$ above with $P(\vx|\vy)$ gives the conditional denoiser, and we still get a result invariant to SNR.
$$
\xhat(\vz, y, \vsnr) = \xhat(\vz, y) = \mathbb E_{P(\vx|y)} [ \vx e^{\vx \cdot \vz} ] / \mathbb E_{P(\vx|y)} [e^{\vx\cdot \vz} ] 
$$

We used the following result to show the connection between probability estimates using certain contours and autoregressive models. 
\begin{align*}
-\log P(\vx) &= \frac12 \int_{C_{AR}} \mathbf E(\vx, \vsnr) \cdot d\vsnr = \frac12 \sum_{i=1}^L \int_{C_i} \mathbf E(\vx, \vsnr)\cdot d\vsnr \nonumber \\
&= \sum_{i=1}^L \frac12 \int_{0}^\infty d\gamma_i E_i(\vx, (\infty, \dotsc, \snr_i, 0, \dotsc)) = -\sum_{i=1}^L \log P(\vx_i|\vx_{<i})
\end{align*}
To show this, we need to demonstrate the last equality explicitly.
% \begin{align*}
% - \log P(\vx_i|\vx_{<i}) &= \frac12 \int_{0}^\infty d\gamma_i E_i(\vx, (\infty, \dotsc, \snr_i, 0, \dotsc)) \\
% &= \frac12 \int_{0}^\infty d\gamma_i 
% ~ \mathbb E_{p_\vsnr(\vz | \vx)}[ \norm{\vx_i - \xhat_i(\vz)}^2 ], ~~ \vsnr = (\infty, \dotsc, \snr_i, 0, \dotsc) \\
% &= \frac12 \int_{0}^\infty d\gamma_i 
% ~ \mathbb E_{p_\snr_i(\vz_i | \vx_i)}[ \norm{\vx_i - \xhat_i(\vz_{<i}, \vz_i, 0, \dotsc)}^2 ], 
% \end{align*}
\begin{align}\label{eq:ar_prob}
- \log P(\vx_i \mid \vx_{<i})
&= \frac{1}{2} \int_0^\infty d\gamma_i ~ E_i\left(\vx, \vsnr = (\infty, \dotsc, \gamma_i, 0, \dotsc)\right) 
\end{align}

First, we rewrite the left hand side. 
\begin{align*}
\frac{1}{2} \int_0^\infty d\gamma_i ~ E_i\left(\vx, \vsnr = (\infty, \dotsc, \gamma_i, 0, \dotsc)\right) &= \frac{1}{2} \int_0^\infty d\gamma_i ~ \mathbb{E}_{p_{\vsnr}(\vz \mid \vx)} \left[ \left\| \vx_i - \xhat_i(\vz) \right\|^2 \right] \\
&= \frac{1}{2} \int_0^\infty d\gamma_i ~ \mathbb{E}_{p_{\snr_i}(\vz_i \mid \vx_i)} \left[ \left\| \vx_i - \xhat_i( \vx_{<i},\vz_i, 0, \dotsc) \right\|^2 \right]
\end{align*}
In the last line, we note that for tokens at SNR 0, there is no information, so the optimal denoiser will be conditioned on a constant, zero. For tokens with infinite SNR, $\vz_{<i}$ is equivalent in distribution to $\vx_{<i}$, which we replace in the argument, with a slight abuse of notation. 

Next, we see what happens when we try to write $ P(\vx_i|\vx_{<i})$, in terms of the optimal \emph{conditional denoiser} for denoising only $\vz_i$, using \Cref{eq:probability}, where the contours simplify to 1-d integrals as we only have one SNR, $\snr_i$. 
In that case we know we can write the probability in terms of the conditional denoiser, which we will call $\tilde \vx(\vz_i, \vx_{<i})$ to distinguish from $\xhat$. 
$$
- \log P(\vx_i|\vx_{<i}) = \frac12 \int_{0}^\infty d\gamma_i 
~ \mathbb E_{p_{\snr_i}(\vz_i | \vx_i)}[ \norm{\vx_i - \tilde \vx_i(\vz_i, \vx_{<i})}^2 ]
$$
We can see that this matches the expression above, except for the denoisers. These denoisers are conditioned on the same information, and have to be MMSE denoisers for the same denoising problem, and therefore they must be equivalent, and equality of \Cref{eq:ar_prob} must hold.
\textbf{Conditional denoisers} \quad
Replacing $P(\vx)$ above with $P(\vx|\vy)$ gives the conditional denoiser, and we still get a result invariant to SNR.
$$
\xhat(\vz, y, \vsnr) = \xhat(\vz, y) = \mathbb E_{P(\vx|y)} [ \vx e^{\vx \cdot \vz} ] / \mathbb E_{P(\vx|y)} [e^{\vx\cdot \vz} ] 
$$

\subsection{Exact ROAR Likelihood Decomposition and Estimator}
\label{app:roar_likelihood}

This subsection derives the exact random-order conditional likelihood that motivates the ROAR-style endpoint branch in DSL training.
The purpose is not to claim that the main experiments directly optimize this objective.
Rather, it provides a second exact likelihood source, complementary to the continuous path-integral likelihood, and justifies including masking/reveal-like endpoint states in the training support.

% Paste your current subsection:
% "Random Order AutoRegressive (ROAR) Log Likelihood"
%
% Keep the permutation-average derivation, set-based regrouping,
% and Monte Carlo estimator.
% Keep the ROAR NLL estimator algorithm if space allows.
% Delete the unfinished placeholder algorithm "Mask prediction with DSL".
In DSL, we point out that masked diffusion modeling is equivalent to paths in SNR space where we completely denoise one variable (going from zero SNR to infinite SNR), conditioned on zero noise (infinite SNR) for the already ``unmasked'' variables and infinite noise (zero SNR) for the ``masked'' variables. If we choose our SNR paths to be the ROAR paths, then we just need to model conditional probabilities of the next variable, given previously unmasked variables, similar to masked diffusion models. 
However, this idea doesn't exactly match up with practical implementations of masked diffusion models like \citep{llada} because they write their objective (Eq. 3 in \citep{llada} for example) in terms of probability of an unmasked set, conditioned on a masked set. 
Below, we derive a similar expression starting from the perspective that we are trying to model ROAR SNR paths. Unlike the result in \citep{llada}, our derivation is path-based and makes no explicit mention of time.  
%
% \footnote{A paper GV reviewed called ``Information-Theoretic Discrete Diffusion'' had a similar result that they called ``Time-Free Likelihood via I-MDCE''. We should be sure to cite it when it comes out. The bound  derived above is like an importance weighted version of their bound.} 
%\citep{jeon2025information}

Consider a multivariate probability distribution, $p(x_1, \ldots, x_n)$. We can always decompose this distribution autoregressively as $p(x_1, \ldots, x_n) = \prod_{i=1}^n p(x_i | x_{1:i})$. We could in principle use any ordering of the variables for this decomposition. Let $\pi$ be a permutation of $\{1,\dots,n\}$, indexed with vector notation. For $n=3$ we could have $\pi_{1:3} = (3, 1, 2)$, for example. For a particular permutation, $\pi$, we could write the AR decomposition as follows. 
$$p(x_1, \ldots, x_n) = \prod_{i=1}^n p(x_{\pi_i} | x_{\pi_{1:{i-1}}})$$
There are $n!$ permutations, so we can take the average over the set of all permutations, $\mathcal S_n$, and switch to log probabilities for convenience. 
\be
\log p(x_1, \ldots, x_n) = \frac{1}{n!}\sum_{\pi \in \mathcal S_n} \sum_{i=1}^n \log p(x_{\pi_i} | x_{\pi_{1:{i-1}}})
\ee 
Now, note that this expression has many equivalent terms that can be grouped together. We are summing over a total of $ n 2^n$ log probability terms, but terms like $\log p(x_3| x_1, x_2) = \log p(x_3|x_2, x_1)$, i.e., the order of the conditioned information doesn't matter. This suggests that we should re-write this sum in terms of sets. Let $[n]=\{1,\dots,n\}$ and, for $A\subseteq[n]$, write $x_A=\{x_i: i\in A\}$. We will have terms of the form $\log p(x_i | x_A)$ (where $i$ is not in the subset A) and we need to count how many such terms appear in the original sum. The number of permutations where we have the $|A|$ indices in $A$ followed by $i$, followed by the remaining $(n-|A|-1)$ indices in arbitrary order is $|A|!(n-|A|-1)!$.
\be
\log p(x_1, \ldots, x_n) &= \sum_{A\subseteq[n]} \frac{|A|!(n-|A|-1)!}{n!} \sum_{i \in [n] \setminus A} \log p(x_{i} | x_{A}) \\
&= \sum_{k=0}^{n-1} \frac{n}{n} \sum_{A\subseteq[n]: |A|=k} \frac{k!(n-k-1)!}{n!} \frac{n-k}{n-k} \sum_{i \in [n] \setminus A} \log p(x_{i} | x_{A}) \\
&= n \sum_{k=0}^{n-1} \frac{1}{n} \sum_{A\subseteq[n]: |A|=k} \frac{k!(n-k)!}{n!} \sum_{i \in [n] \setminus A} \frac{1}{n-k} \log p(x_{i} | x_{A})
\ee
In the second line, we rewrite the sum over sets to a sum over subsets of size $k$, then a sum of $k$. We also introduce some factors of 1 which are re-arranged in the third line. The goal is to re-write this expression in terms of expectations to get a convenient Monte Carlo estimator. 
\be
\frac1n \log p(x_1, \ldots, x_n)  = \underbrace{\sum_{k=0}^{n-1} \frac{1}{n} }_{\text{Uniform over k}} \underbrace{\sum_{A\subseteq[n]: |A|=k} \frac{1}{\dbinom{n}{k}}}_{\text{Uniform over sets $|A|=k$}} \underbrace{\sum_{i \in [n] \setminus A} \frac{1}{n-k}}_{\text{Uniform over $i \in [n] \setminus A$}} \log p(x_{i} | x_{A})
\ee
At this point, we can recognize that each sum corresponds to a sample over a uniform distribution. The number of indices $i$ in the set $[n] \setminus A$ is $n-k$, the number of subsets of $[n]$ of size $k$ is $\tbinom{n}{k}$, the number of subset sizes is $n$. 
\be
\frac1n \log p(x_1, \ldots, x_n)  &= \mathbb E_{P(k, A, i)} [\log p(x_{i} | x_{A})] \\
P(k) &= \mathrm{Unif}\{0,1,\dots,n-1\} \\ 
P(A|k) &= \mathrm{Unif}\{A \subseteq[n]: |A|=k\} \\ 
P(i|A,k) &= \mathrm{Unif}\{[n] \setminus A\}
\ee
This gives us a concrete procedure for obtaining an unbiased estimator of the log likelihood that only requires evaluating probabilities of marginals, conditioned on sets of variables. This is exactly what our denoiser is trained to do. For discrete random variables, the inner expectation is just the mean cross entropy per variable of unmasked variables conditioned on masked ones, as seen in the LLADA estimator. 
We construct a Monte Carlo estimate by sampling over subsets of $A$. Training with this objective is simple as each Monte Carlo sample is a simple cross entropy. 
We summarize the procedure in Alg.~\ref{alg:nll_roar}. 

\begin{figure}[htbp]
\centering
\begin{minipage}{0.48\linewidth}
  \begin{algorithm}[H]
    \caption{ROAR NLL Estimator}\label{alg:nll_roar}
    {\small 
    \begin{algorithmic}[1]
      \Require Sample, x. Model giving marginal probabilities, $p(x_i|x_A)$ for any subset of unmasked tokens, $A$.
  \State Sample k and set A of size k uniformly
  \State Average cross entropy per variable of masked tokens conditioned on unmasked token set, A
  \State \Return NLL estimate (bits per token) 
    \end{algorithmic}
    }
  \end{algorithm}
\end{minipage}
\vspace{-0.5em}
%\caption{Comparison of Algorithm A and B}
\label{fig:algorithm_nll}
\end{figure}

\subsection{Cross-entropy as a surrogate for the path-integral MSE}
\label{app:ce_surrogate}

This appendix justifies the choice of token-level cross-entropy on the continuous-SNR branch of DSL training. The path-integral identity \eqref{eq:probability} expresses $-\log P(\vx)$ as a contour integral of the squared embedding error
\[
E_i(\vx, \gamma) \;=\; \E_{p_{\gamma}(\vz\mid\vx)}\!\big[\norm{\vx_i - \xhat_i(\vz)}_2^2\big],
\]
where $\xhat_i = \E[\vx_i \mid \vz]$ is the Bayes-optimal denoiser. A direct continuous-NLL training scheme would parameterize a denoiser $\xhat_{i,\theta}$ in embedding space and minimize this MSE. We argue that DSL should instead minimize a token-level cross-entropy objective that has the same Bayes-optimal solution but avoids three concrete pathologies of embedding-space MSE.

\paragraph{Same Bayes-optimal solution.}
Under the unit-sphere constraint, the Bayes denoiser admits the closed form
\[
\xhat(\vz) \;=\; \E_{P(\vx)}\!\big[\vx\,e^{\vx\cdot\vz}\big]\,/\,\E_{P(\vx)}\!\big[e^{\vx\cdot\vz}\big]
\;=\; \sum_{v\in\gV} p(\vx{=}v\mid\vz)\,\ve_v,
\]
i.e., a vocabulary-weighted average of unit-norm embeddings under the induced token posterior $p(\vx{=}v\mid\vz)$. Embedding-space MSE is minimized when $\xhat_\theta(\vz)$ matches this posterior mean. Token-level cross-entropy on $p_\theta(\vx{=}v\mid\vz)$ is minimized when $p_\theta(\cdot\mid\vz) = p(\cdot\mid\vz)$. Both objectives therefore identify the same Bayes-optimal solution: at the global minimum, the CE-trained model recovers $p(\cdot\mid\vz)$, and reading off its posterior mean recovers $\xhat(\vz)$. We use cross-entropy because, away from the global minimum, embedding-space MSE has structurally worse optimization geometry, as we explain next.

\paragraph{Embedding collapse under jointly learned embeddings.}
\cite{gulrajani2023likelihood} observe that direct $\ell_2$ regression in embedding space is ill-posed when the embedding matrix $W_{\mathrm{emb}}$ is jointly trained with the denoiser: setting $W_{\mathrm{emb}}\to 0$ together with $\xhat_\theta\to 0$ achieves zero loss for any noisy input $\vz$, yielding a degenerate solution. Prior continuous diffusion language models address this only with handcrafted embeddings or auxiliary regularizers~\citep{li2022diffusion,dieleman2022continuous}. DSL constrains every clean embedding to lie on the unit sphere ($\norm{\vx_i}_2 = 1$), which already rules out exact collapse, but the corresponding low-rank near-collapsed configurations remain reachable by gradient descent on embedding-space MSE. Cross-entropy on a softmax over fixed unit-norm embeddings does not exhibit this pathology: the cross-entropy minimum requires the predicted distribution to actually concentrate on the correct token, not merely shrink the predicted embedding.

\paragraph{Low-SNR memorization burden.}
\cite{gulrajani2023likelihood} also note that a denoiser trained directly with embedding MSE must, at low noise, output the clean embedding vectors to high precision—effectively memorizing $W_{\mathrm{emb}}$ inside the network parameters. They mitigate this by reparameterizing the denoiser as a softmax over token logits, then averaging embeddings weighted by that softmax, an architectural change motivated by exactly this observation. The DSL converter \eqref{eq:converter} adopts the same architectural shape, and CE training is the natural objective for the resulting categorical head: the network's job is to assign probability mass over $\gV$, not to reconstruct embedding coordinates.

\paragraph{Mean-seeking shortcuts on multi-modal posteriors.}
At intermediate SNR, the token posterior $p(\cdot\mid\vz)$ is generically multi-modal: several tokens have non-trivial probability. The MMSE estimator $\xhat = \sum_v p(v\mid\vz)\ve_v$ is the posterior mean—an average of the candidate embeddings—which under unit-sphere geometry is itself non-unit-norm and lies in the interior of the embedding simplex. A finite-capacity denoiser trained on embedding MSE can approach this mean by outputting a generic interior vector that does not commit to any particular token, lowering raw MSE without producing useful posterior structure. The cross-entropy objective penalizes such non-committal outputs because they correspond to entropic predicted distributions; the optimizer is pushed toward distributions that match the actual posterior shape.

\medskip

In summary, cross-entropy and embedding MSE share the same Bayes-optimal solution under \eqref{eq:probability}, but cross-entropy is preferred for three operational reasons rooted in prior work~\citep{gulrajani2023likelihood}: it eliminates joint-embedding collapse, eliminates the low-SNR memorization burden, and avoids mean-seeking shortcuts on multi-modal posteriors. This is what makes cross-entropy a practical surrogate for the path-integral MSE on the continuous-SNR branch of DSL training.

\subsection{Posterior-State Matching and Inference-Time Error Correction}
\label{subsec:posterior_matching}

A useful way to understand why DSL can open a \emph{correction window} is through a minimal two-bit manifold example.
Consider sequences $(x_1,x_2)\in\{0,1\}^2$ whose data distribution is supported only on
\begin{equation}
\gM=\{(0,0),(1,1)\},
\qquad
P\bigl((0,0)\bigr)=P\bigl((1,1)\bigr)=\half.
\label{eq:toy_manifold}
\end{equation}
Thus, inconsistent states such as $(0,1)$ or $(1,0)$ are off-manifold: they may arise as intermediate draft errors, but they are not valid solutions.

Embed the two symbols on the unit sphere by
\begin{equation}
\enc(0)=-1,
\qquad
\enc(1)=+1,
\label{eq:toy_embedding}
\end{equation}
so the two valid clean sequence embeddings are
\begin{equation}
\vx^{(-)}:=(-1,-1),
\qquad
\vx^{(+)}:=(+1,+1).
\label{eq:toy_clean_states}
\end{equation}
Under the DSL localization channel, a corrupted state is
\begin{equation}
\vz = \snr \vx + \rootsnr \,\eps,
\qquad
\eps \sim \gN(0,\mI_2),
\qquad
\vx \in \{\vx^{(-)},\vx^{(+)}\}.
\label{eq:toy_channel}
\end{equation}

Because the prior is uniform and the clean embeddings are unit norm, the posterior over the two valid hypotheses is
\begin{align}
P\bigl(\vx=\vx^{(+)} \mid \vz\bigr)
&=
\frac{\exp(\vz^\top \vx^{(+)})}
{\exp(\vz^\top \vx^{(+)})+\exp(\vz^\top \vx^{(-)})}
=
\frac{\exp(z_1+z_2)}
{\exp(z_1+z_2)+\exp(-(z_1+z_2))},
\label{eq:toy_posterior_pos}
\\
P\bigl(\vx=\vx^{(-)} \mid \vz\bigr)
&=
\frac{\exp(\vz^\top \vx^{(-)})}
{\exp(\vz^\top \vx^{(+)})+\exp(\vz^\top \vx^{(-)})}
=
\frac{\exp(-(z_1+z_2))}
{\exp(z_1+z_2)+\exp(-(z_1+z_2))}.
\label{eq:toy_posterior_neg}
\end{align}
Equivalently,
\begin{equation}
P\bigl(\vx=\vx^{(+)} \mid \vz\bigr)=\sigmoid\!\bigl(2(z_1+z_2)\bigr),
\qquad
P\bigl(\vx=\vx^{(-)} \mid \vz\bigr)=1-\sigmoid\!\bigl(2(z_1+z_2)\bigr).
\label{eq:toy_posterior_sigmoid}
\end{equation}
Hence the decision boundary is not determined token-by-token, but by the \emph{joint} statistic
\begin{equation}
z_1+z_2=0.
\label{eq:toy_boundary}
\end{equation}

The posterior-mean denoiser is
\begin{align}
\xhat(\vz)
&=
\E[\vx \mid \vz]
=
P\bigl(\vx=\vx^{(+)} \mid \vz\bigr)\vx^{(+)}
+
P\bigl(\vx=\vx^{(-)} \mid \vz\bigr)\vx^{(-)}
\notag\\
&=
\Bigl(
P\bigl(\vx=\vx^{(+)} \mid \vz\bigr)
-
P\bigl(\vx=\vx^{(-)} \mid \vz\bigr)
\Bigr)(1,1)
\notag\\
&=
\tanh(z_1+z_2)\,(1,1).
\label{eq:toy_denoiser}
\end{align}
\Eqref{eq:toy_denoiser} is the key point: the denoising field points toward the correct manifold as long as the corrupted state remains on the correct side of the joint decision boundary.

To make this concrete, suppose the ground-truth sequence is $(1,1)$, but an intermediate draft contains a local inconsistency resembling $(1,0)$.
In continuous state space, this corresponds to a region such as
\begin{equation}
\gR_{11}^{\mathrm{corr}}
:=
\{\vz\in\sR^2:\ z_1>0,\ z_2<0,\ z_1+z_2>0\}.
\label{eq:toy_revisable_region}
\end{equation}
Inside $\gR_{11}^{\mathrm{corr}}$, the second coordinate looks locally wrong, but the joint posterior still favors $\vx^{(+)}=(1,1)$ because $z_1+z_2>0$.
Therefore, the denoising update in \eqref{eq:toy_denoiser} continues to point toward $(1,1)$ rather than amplifying the local error.
This is the \emph{correction window}: an off-manifold draft can still be revised if its induced posterior state remains inside the attraction region of the correct solution.

This example also clarifies what DSL does \emph{not} claim.
DSL does not guarantee correction for every erroneous draft.
If the state crosses the wrong side of the boundary, i.e.
\begin{equation}
z_1+z_2<0,
\label{eq:toy_wrong_basin}
\end{equation}
then the posterior collapses toward $\vx^{(-)}=(0,0)$ and the denoising field points to the wrong manifold.
Thus, posterior-state matching opens a correction window, but does not guarantee that inference remains inside it.

This toy case explains why mixed-corruption DSL training can help.
Training does not need to reproduce the exact symbolic error $(1,0)$ as a raw state.
Instead, it is sufficient that the corruption distribution exposes the model to continuous states whose induced posterior geometry overlaps with revisable regions such as \eqref{eq:toy_revisable_region}.
In that case, some imperfect self-generated drafts at inference time are less out-of-distribution, and the model is more likely to produce a denoising field that still points back toward a valid solution manifold.

By contrast, standard autoregressive decoding and plain absorbing masked diffusion do not naturally realize this mechanism.
For autoregressive decoding,
\begin{equation}
\hat{x}_1=\argmax_{x\in\{0,1\}} p_\theta(x_1=x),
\qquad
\hat{x}_2=\argmax_{x\in\{0,1\}} p_\theta(x_2=x \mid \hat{x}_1),
\label{eq:toy_ar}
\end{equation}
so once $\hat{x}_1$ is emitted, later steps condition on a hard prefix and do not revise it.
Likewise, in plain absorbing masked diffusion, visible tokens are effectively frozen once unmasked, so a visible-but-wrong token is not naturally moved back toward the correct manifold unless an explicit remasking mechanism is introduced.
DSL is therefore complementary to remasking-based samplers: remasking reopens a visible token for revision, while DSL increases the chance that the reopened state still lies in a posterior region where the denoising field points back toward the correct solution.

The lesson of this example is that correction is not explained by ``many possible contexts'' alone.
What matters is whether these possibilities are organized into a posterior geometry whose mass still overlaps the correct solution and whose denoising field still points toward it.
DSL helps precisely by broadening the posterior states seen during training, thereby making some refinement-time erroneous drafts less out-of-distribution and more revisable.

\subsection{Confidence Geometry: Norm--Direction Decomposition and Bounded Entropy}
\label{subsec:geometry}

With the posterior written as a function of the realized state alone, its geometry becomes transparent.
Throughout this subsection we use the unit-sphere token geometry of DSL: each clean token embedding
satisfies $\|\ve_v\|_2 = 1$ for $v \in \gV$.

\begin{proposition}[Norm--direction decomposition]
\label{prop:norm_direction}
For any nonzero state $\vz$ and any unit-norm token embedding $\ve_i$, the score of token $i$
induced by $\vz$ satisfies
\begin{equation}
s_i(\vz)
:=
\langle \vz,\ve_i\rangle
=
\norm{\vz}_2 \cos \theta_i,
\label{eq:norm_direction}
\end{equation}
where $\theta_i$ is the angle between $\vz$ and $\ve_i$.
At $\vz=0$, all scores are zero and the posterior reduces to the base measure.
\end{proposition}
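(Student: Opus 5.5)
The plan is to argue directly from the definition of the Euclidean angle, together with the closed form of the token posterior already derived in \Cref{eq:posterior_decomposition} and Appendix~\ref{app:snr_invariant_denoiser}. Fix a nonzero $\vz\in\R^d$ and a token embedding $\ve_i$ with $\norm{\ve_i}_2=1$. Both vectors are nonzero, so the angle $\theta_i\in[0,\pi]$ is well defined by
\[
\cos\theta_i := \frac{\langle \vz,\ve_i\rangle}{\norm{\vz}_2\,\norm{\ve_i}_2}.
\]
Cauchy--Schwarz guarantees that this ratio lies in $[-1,1]$. Multiplying through by $\norm{\vz}_2\norm{\ve_i}_2$ and using $\norm{\ve_i}_2=1$ gives $s_i(\vz)=\langle\vz,\ve_i\rangle=\norm{\vz}_2\cos\theta_i$, which is \eqref{eq:norm_direction}.

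The only substantive point is that the unit-sphere constraint is essential. Without it, an extra factor $\norm{\ve_i}_2$ would multiply the score and mix a token-dependent magnitude into the posterior. I would note this explicitly so that the factorization into a $\vz$-only radial part $\norm{\vz}_2$ and a direction-only part $\cos\theta_i$ is seen to depend on the geometry, as in \S\ref{subsec:time_invariant}. Substituting into \eqref{eq:posterior_decomposition} then gives $p(v\mid\vz)\propto \pi_v\exp(\norm{\vz}_2\cos\theta_v)$. Writing $\vz=r\hat{\vz}$ with $r=\norm{\vz}_2$ makes the interpretation immediate: the direction $\hat{\vz}$ fixes the ranking of tokens, and $r$ acts as an inverse temperature.

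For the degenerate case $\vz=\vzero$, the angle is undefined, but the score is still defined and equals $\langle\vzero,\ve_i\rangle=0$ for every $i$. The posterior $p(v\mid\vzero)\propto\pi_v e^{0}=\pi_v$ therefore reduces to the base measure $\pi$ after normalization. Consistently, the denoiser formula \eqref{eq:t_invariant_denoiser} gives $\xhat(\vzero)=\E_{P}[\vx]$.

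I expect no genuine obstacle. The only care needed is the bookkeeping separating the nonzero case, where the angle exists, from $\vz=\vzero$, where only the score is used, and stating that the normalization constant $\sum_v\pi_v e^{s_v(\vz)}$ is finite because the vocabulary is finite.
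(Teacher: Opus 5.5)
Your proof is correct and matches the paper's: both define $\cos\theta_i$ as the normalized inner product and use $\|\ve_i\|_2=1$ to get $\langle \vz,\ve_i\rangle=\|\vz\|_2\cos\theta_i$. You also handle the case $\vz=\vzero$ explicitly (all scores vanish, so the posterior is proportional to $\pi_v$), a step the paper asserts but does not write out.
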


\begin{proof}
For $\vz \neq 0$, the cosine identity gives
\[
\cos\theta_i
=
\frac{\langle \vz,\ve_i\rangle}{\|\vz\|_2\|\ve_i\|_2}.
\]
Since $\|\ve_i\|_2=1$, this implies
$\langle \vz,\ve_i\rangle=\|\vz\|_2\cos\theta_i$.
\end{proof}

Proposition~\ref{prop:norm_direction} separates two roles that are entangled in unconstrained logits:
the angular term determines \emph{which token} is preferred, whereas the radial term determines
\emph{how concentrated} the posterior is around that preference.
This yields the discrete hyperspherical posterior
\begin{equation}
p(x=i \mid \vz)
=
\frac{\pi_i \exp\!\big(\norm{\vz}_2 \cos \theta_i\big)}
{\sum_{j \in \gV} \pi_j \exp\!\big(\norm{\vz}_2 \cos \theta_j\big)}.
\label{eq:hyperspherical_posterior}
\end{equation}

\paragraph{Entropy envelope at bounded radius.}
For any finite radius cap $\beta < \infty$, write $\vz=\beta \vu$ with $\|\vu\|_2 \le 1$.
The corresponding bounded-radius posterior family is
\begin{equation}
q_\beta(v \mid \vu)
=
\frac{\pi_v \exp\!\bigl(\beta\langle\vu,\ve_v\rangle\bigr)}
{\sum_{j\in\gV}\pi_j \exp\!\bigl(\beta\langle\vu,\ve_j\rangle\bigr)},
\qquad
\norm{\vu}_2 \le 1,\;\norm{\ve_v}_2 = 1.
\label{eq:bounded_posterior}
\end{equation}
Since $\langle \vu,\ve_v\rangle \in [-1,1]$, the logits can only tilt away from the base
measure $\pi$ within a finite band controlled by $\beta$. Hence the achievable posterior entropy
over this bounded-radius family is confined to the interval
\begin{equation}
H_{\min}(\beta,\pi)
\;\le\;
H\!\left(q_\beta(\cdot \mid \vu)\right)
\;\le\;
H_{\max}(\beta,\pi),
\label{eq:entropy_envelope}
\end{equation}
where
\begin{equation}
H_{\min}(\beta,\pi)
:=
\min_{\|\vu\|_2\le 1} H(q_\beta(\cdot\mid\vu)),
\qquad
H_{\max}(\beta,\pi)
:=
\max_{\|\vu\|_2\le 1} H(q_\beta(\cdot\mid\vu)).
\end{equation}
Thus, within a finite-SNR or bounded-converter regime, $\beta$ controls how sharp the posterior may
become, while $\pi$ sets the baseline non-uniformity of token probabilities. Without such a finite
radius cap, the Bayes posterior can become arbitrarily sharp as $\|\vz\|_2$ grows.

\paragraph{Smoothness as a supporting property.}
The same hyperspherical geometry also induces a smooth Bayes posterior-mean map
\begin{equation}
\vm(\vz):=\E[\ve_x\mid \vz].
\label{eq:posterior_mean_map}
\end{equation}
When the clean-token distribution is supported on unit-norm embeddings, this Bayes posterior-mean
map is $1$-Lipschitz:
\begin{equation}
\|\vm(\vz_1)-\vm(\vz_2)\|_2
\le
\|\vz_1-\vz_2\|_2 .
\end{equation}
We treat this as a supporting geometric property: it suggests that DSL refinement is well-behaved by
construction, but in practice the decisive mechanism is whether confidence remains informative on
refinement-time draft errors.

\subsection{Lipschitz Continuity of the Induced Posterior-Mean Map}
\label{app:lipschitz}

The main text treats smoothness as a supporting geometric property rather than a core mechanism.
We therefore defer the full proof here.
The result shows that the posterior-mean refinement map induced by DSL is $1$-Lipschitz under unit-sphere token geometry.

% Paste your current subsection:
% "Lipschitz Continuity of the Drift in Unconditional SDE"
%
% You may want to rename "drift" to "induced posterior-mean map"
% so it matches the main text wording.
\begin{lemma}
Let the induced posterior-mean map (MMSE denoiser) of the unconditional SDE be defined by
\[
    \xhat(\vz)  = \mathbb E_{P(\vx)} [ \vx e^{\vx \cdot \vz} ] / \mathbb E_{P(\vx)} [e^{\vx\cdot \vz} ] 
    \qquad \vz\in\R^d,
\]
where the data distribution \(p(\vx)\) is supported on the unit sphere \(\lVert \vx\rVert=1\).
Then \(\hat{\vx}(\cdot)\) is \(1\)-Lipschitz; that is,
\[
    \forall \vz_1,\vz_2\in\R^d:
    \quad
    \lVert\hat{\vx}(\vz_1)-\hat{\vx}(\vz_2)\rVert
    \;\le\;
    \lVert \vz_1-\vz_2\rVert .
\]
\end{lemma}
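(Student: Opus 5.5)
The plan is to write $\xhat$ as the gradient of a convex log-partition function. Its Jacobian is then a covariance matrix, and the unit-norm support bounds that covariance in operator norm by $1$; integrating along a line segment gives the Lipschitz bound.

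\textbf{Step 1 (gradient structure).} Define the cumulant generating function $A(\vz) := \log \E_{P(\vx)}[e^{\vx\cdot\vz}]$, which is finite for every $\vz$ because $\norm{\vx}=1$ gives $e^{\vx\cdot\vz}\le e^{\norm{\vz}}$. By dominated convergence we may differentiate under the expectation: all moments of $\vx$ are bounded by $1$ and $e^{\vx\cdot\vz}$ is locally uniformly bounded. This shows $A$ is $C^2$, and $\nabla A(\vz)=\xhat(\vz)$, as already noted in Appendix~\ref{app:snr_invariant_denoiser}.

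\textbf{Step 2 (Jacobian as covariance).} Differentiating once more gives
\[
J(\vz) := \nabla \xhat(\vz) = \nabla^2 A(\vz) = \E_{q_{\vz}}[\vx\vx^\top] - \xhat(\vz)\xhat(\vz)^\top = \mathrm{Cov}_{q_{\vz}}(\vx),
\]
where $q_{\vz}(\vx)\propto P(\vx)e^{\vx\cdot\vz}$ is the exponentially tilted distribution. Hence $J(\vz)$ is symmetric positive semidefinite.

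\textbf{Step 3 (operator-norm bound; the key step).} For any unit vector $\vu$,
\[
\vu^\top J(\vz)\vu = \mathrm{Var}_{q_{\vz}}(\vu\cdot\vx) \le \E_{q_{\vz}}[(\vu\cdot\vx)^2] \le \E_{q_{\vz}}\bigl[\norm{\vu}^2\norm{\vx}^2\bigr] = 1 .
\]
The last inequality is Cauchy--Schwarz, and it is the only place where the unit-sphere support is used. Since $J$ is symmetric PSD, its operator norm equals its largest Rayleigh quotient, so $\norm{J(\vz)}_{\mathrm{op}}\le 1$ for all $\vz$.

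\textbf{Step 4 (integration along a segment).} Set $\vz(s)=\vz_2+s(\vz_1-\vz_2)$ for $s\in[0,1]$. Then
\[
\xhat(\vz_1)-\xhat(\vz_2)=\int_0^1 J(\vz(s))(\vz_1-\vz_2)\,ds ,
\]
and taking norms, with the bound from Step 3, gives $\norm{\xhat(\vz_1)-\xhat(\vz_2)}\le\norm{\vz_1-\vz_2}$.

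\textbf{Expected obstacle.} The main technical point is justifying differentiation under the expectation in Steps 1--2 when $P$ is not finitely supported. Compact support of $P$ makes this routine. In the discrete-vocabulary case used in the paper, the expectation is a finite sum and the issue disappears.

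\textbf{Remark on scope.} The argument also covers the sequence-level denoiser when the bound is stated per token. For the full concatenated vector, $\norm{\vx}^2=L$, and the same computation would only give Lipschitz constant $L$. The statement as written assumes $\norm{\vx}=1$, so it is the single-token case.
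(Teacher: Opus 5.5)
Your proposal is correct and follows the paper's proof essentially step for step. The paper also writes $\xhat=\nabla\phi$ for the log-partition function $\phi(\vz)=\log\E_{P(\vx)}[e^{\vx\cdot\vz}]$, identifies the Hessian with the covariance under the tilted posterior, bounds $v^\top H v\le\E[(v\cdot\vx)^2]\le 1$ using the unit-norm support, and integrates the Jacobian along the segment; your justification for differentiating under the expectation and your remark that the concatenated-sequence version only gives constant $L$ are details the paper leaves implicit, not a different route.
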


Define the log–partition function
\[
    \phi(\vz)\;=\;
    \log \mathbb E_{p(\vx)}\!\bigl[e^{\vx\cdot \vz}\bigr],
\]
so that
\(
    \hat{\vx}(\vz)=\nabla_{\vz}\phi(\vz)
\).

Because \(e^{\vx\cdot \vz}\) is convex in \(\vz\) and the expectation preserves convexity, \(\phi\) is convex.  Its Hessian equals the covariance of \(\vx\) under the Gibbs posterior
\(
    p(\vx\mid \vz)\propto p(\vx)e^{\vx\cdot \vz}
\):
\[
    H(\vz)\;=\;\nabla_{\vz}^2\phi(\vz)\;=\;
    \Cov_{p(\vx\mid \vz)}\!\bigl(\vx\bigr).
\]
For any unit vector \( v\in\R^d\),
\[
    v^\top H(\vz)\,v
    \;=\;
    \mathbb E_{p(\vx\mid \vz)}\!\bigl[(v\cdot \vx)^2\bigr]
    -\Bigl(\mathbb E_{p(\vx\mid \vz)}[v\cdot \vx]\Bigr)^{\!2}
    \;\le\;
    \mathbb E_{p(\vx\mid \vz)}\!\bigl[(v\cdot \vx)^2\bigr]
    \;\le\;
    \lVert v\rVert^2
    \;=\;1,
\]
where the last inequality uses \(\lVert \vx\rVert=1\), therefore, spectral norm satisfies \(\lVert H(\vz)\rVert\le1\).

Since \(\nabla_{\vz}^2\phi\) is bounded by \(1\) everywhere, the gradient \(\nabla_{\vz}\phi=\hat{\vx}\) is \(1\)-Lipschitz:
\[
    \lVert\hat{\vx}(\vz_1)-\hat{\vx}(\vz_2)\rVert
    \;\le\;
    \int_0^1\!
        \bigl\lVert H\bigl(\vz_2+t(\vz_1-\vz_2)\bigr)\bigr\rVert
        \,\lVert \vz_1-\vz_2\rVert\,dt
    \;\le\;
    \lVert \vz_1-\vz_2\rVert .
\]
This proves non-expansiveness of the Bayes posterior-mean map. Empirical estimates with K < 1 are consistent with this bound, but are stronger than what the lemma guarantees.
\qed

\section{Continuous States Are Not Enough: Posterior Coordinates vs.\ Timestep-Conditioned Denoising}
\label{app:continuous_vs_dsl}

This appendix clarifies the distinction between DSL and prior continuous diffusion language models.
Prior continuous diffusion LMs already introduce continuous finite-SNR states between noise and clean token embeddings
\citep{li2022diffusion,dieleman2022continuous,gulrajani2023likelihood,strudel2022self}.
Thus, the key difference is not the mere existence of continuous intermediate states.
Rather, the difference is how those states are parameterized and interpreted by the denoiser.

In prior continuous diffusion LMs, a noisy token state is usually interpreted together with an external nominal noise level or timestep.
Consequently, denoising at different SNRs is presented to the model as a family of timestep-indexed tasks.
DSL instead represents noisy token states in posterior natural coordinates under unit-sphere token geometry, so the realized state itself determines the token posterior.
This turns different SNR regimes into different regions of one posterior-state space, enabling mixed continuous/endpoint training and arbitrary per-token SNR paths with a single denoiser.

\paragraph{A generic continuous diffusion posterior.}
Consider a generic continuous Gaussian corruption for one token embedding:
\begin{equation}
    \vy_\tau
    =
    a_\tau \vx + \sigma_\tau \eps,
    \qquad
    \eps \sim \gN(\vzero,\mI),
\label{eq:generic_continuous_channel}
\end{equation}
where $\tau$ denotes the nominal diffusion time, and $a_\tau,\sigma_\tau$ are the signal and noise scales.
Let $\ve_v=\enc(v)$ be the embedding of token $v\in\gV$, and let $\pi_v$ denote the context-dependent prior probability of token $v$.
Then the Bayes posterior induced by \eqref{eq:generic_continuous_channel} has the form
\begin{equation}
    P(\vx=\ve_v \mid \vy_\tau,\tau)
    \;\propto\;
    \pi_v
    \exp\!\left(
        \frac{a_\tau}{\sigma_\tau^2}
        \langle \vy_\tau,\ve_v\rangle
        -
        \frac{a_\tau^2}{2\sigma_\tau^2}
        \norm{\ve_v}_2^2
    \right).
\label{eq:generic_continuous_posterior}
\end{equation}
Therefore, in general, the posterior is a function of the pair $(\vy_\tau,\tau)$.
Even if the token embeddings are normalized so that the norm term in \eqref{eq:generic_continuous_posterior} cancels across vocabulary items, the natural posterior coordinate is not the raw noisy state $\vy_\tau$, but the rescaled state
\begin{equation}
    \tilde{\vz}_\tau
    =
    \frac{a_\tau}{\sigma_\tau^2}\,\vy_\tau .
\label{eq:generic_natural_parameter}
\end{equation}
Thus a timestep-conditioned continuous denoiser is naturally written as
\begin{equation}
    f_\theta(\vy_\tau,\tau),
\label{eq:timestep_conditioned_denoiser}
\end{equation}
where the timestep label tells the network how to interpret the scale and semantics of the noisy embedding.

This observation is not a criticism of continuous diffusion as a probabilistic model.
The posterior information is present in the pair $(\vy_\tau,\tau)$.
The issue is representational: different noise levels are exposed to the neural network as different denoising regimes indexed by $\tau$.

\paragraph{DSL uses posterior natural coordinates directly.}
DSL chooses a different parameterization of the Gaussian observation.
For each token position,
\begin{equation}
    \vz_i
    =
    \snr_i \vx_i + \sqrt{\snr_i}\,\eps_i,
    \qquad
    \eps_i\sim\gN(\vzero,\mI).
\label{eq:dsl_channel_appendix_compare}
\end{equation}
Here the signal coefficient is $\snr_i$ and the noise variance is also $\snr_i$.
Equivalently, in the notation of \eqref{eq:generic_continuous_channel}, DSL sets
\[
    a_{\snr_i}=\snr_i,
    \qquad
    \sigma_{\snr_i}^2=\snr_i,
    \qquad
    \frac{a_{\snr_i}}{\sigma_{\snr_i}^2}=1.
\]
Thus the realized state $\vz_i$ is already in the posterior natural coordinate.
Under the unit-sphere constraint $\norm{\ve_v}_2=1$, the token-dependent norm term cancels, giving
\begin{equation}
    P(\vx_i=\ve_v \mid \vz,\vsnr)
    =
    P(\vx_i=\ve_v \mid \vz)
    \;\propto\;
    \pi_v
    \exp\!\left(
        \langle \vz_i,\ve_v\rangle
    \right).
\label{eq:dsl_posterior_appendix_compare}
\end{equation}
Consequently, the Bayes-optimal denoiser is SNR-invariant:
\begin{equation}
    \xhat(\vz,\vsnr)
    =
    \E_{P_{\vsnr}(\vx\mid\vz)}[\vx]
    =
    \E_{P(\vx)}[\vx e^{\vx\cdot\vz}]
    \Big/
    \E_{P(\vx)}[e^{\vx\cdot\vz}]
    =
    \xhat(\vz).
\label{eq:dsl_snr_invariant_compare}
\end{equation}

The learning problem is therefore different.
Instead of learning a timestep-indexed family of denoising maps, DSL learns one posterior map:
\begin{equation}
    f_\theta(\vz)
    \approx
    P(\vx_i\mid \vz)
    \quad \text{or} \quad
    \xhat_i(\vz).
\label{eq:dsl_time_agnostic_map}
\end{equation}
Low-SNR, intermediate-SNR, and high-SNR states are no longer separate tasks labeled by time.
They are different regions of the same token-posterior space.

\paragraph{Comparison summary.}
Table~\ref{tab:continuous_vs_dsl} summarizes the main conceptual differences.

\begin{table}[t]
\centering
\small
\setlength{\tabcolsep}{4pt}
\renewcommand{\arraystretch}{1.18}
\caption{\textbf{Prior continuous diffusion LMs vs.\ DSL.}
Both use continuous finite-SNR states. The difference is how those states are parameterized, interpreted, and exposed to the denoiser.}
\label{tab:continuous_vs_dsl}
\begin{tabular}{p{0.22\linewidth} p{0.36\linewidth} p{0.36\linewidth}}
\toprule
\textbf{Aspect} & \textbf{Prior continuous diffusion LMs} & \textbf{DSL} \\
\midrule
Intermediate states
&
Use continuous finite-SNR noisy embeddings.
&
Also uses continuous finite-SNR noisy embeddings.
\\

Posterior input
&
The token posterior is naturally a function of the pair $(\vy_\tau,\tau)$.
&
The token posterior is a function of the realized state $\vz$ alone.
\\

Denoiser form
&
Typically learns a timestep-conditioned map $f_\theta(\vy_\tau,\tau)$.
&
Learns one time-agnostic map $f_\theta(\vz)$.
\\

Role of SNR / time
&
The nominal timestep helps the model interpret the noisy embedding.
&
SNR is absorbed into the posterior natural coordinate.
\\

Interpretation across noise levels
&
Different noise levels behave like a family of timestep-indexed denoising tasks.
&
Different noise levels are regions of one posterior-state space.
\\

Token geometry
&
Usually does not enforce the unit-sphere geometry needed to cancel token-dependent norm terms.
&
Unit-sphere token embeddings make the Bayes posterior SNR-invariant.
\\

Uncertainty
&
Uncertainty can be represented by the network, but its interpretation is typically timestep-conditioned.
&
Uncertainty is geometrically explicit: direction controls token preference and concentration controls confidence.
\\

Training support
&
Usually trained along a continuous global noise schedule.
&
Trained on mixed support: continuous finite-SNR states and endpoint-like mask/reveal states.
\\

Discrete interface
&
Noisy embeddings may be numerically well-scaled, but are not necessarily exposed as token-posterior states.
&
A posterior-view converter maps $\vz_i$ to a bounded mixture-of-tokens input compatible with self-attention.
\\

Sampling paths
&
Typically tied to a chosen global diffusion schedule.
&
Supports arbitrary per-token SNR paths; AR revealing, masked diffusion, and remasking are special cases.
\\
\bottomrule
\end{tabular}
\end{table}

\paragraph{Why this matters for training.}
The distinction above changes what it means to train across corruption levels.
In a timestep-conditioned continuous diffusion LM, training across $\tau$ asks the model to learn how to denoise under many externally labeled regimes.
In DSL, training across SNRs asks the model to cover a single posterior-state space.
This is why continuous finite-SNR states and endpoint-like ROAR states can be combined under the same cross-entropy posterior-matching objective in Section~\ref{subsec:mixed_snr}.
Both branches provide training mass for the same map $p_\theta(s_i\mid\vz)$, rather than defining separate denoising tasks.

This also explains why per-token SNR paths are natural in DSL.
Once the denoiser is no longer tied to one global nominal timestep, there is no structural requirement that every token in a sequence share the same SNR.
A sequence can contain positions near the mask endpoint, positions near the clean endpoint, and positions at intermediate posterior states, all interpreted by the same denoiser.
This is the mechanism behind the statement in Section~\ref{subsec:arbitrary_paths} that AR-style revealing, masked diffusion, and remasking are different paths through a single space of per-token SNR configurations.

\paragraph{Why the converter is not just a normalization trick.}
The posterior-view converter in \eqref{eq:converter} should not be understood merely as a fix for exploding input norms.
Many VP-style continuous diffusion parameterizations keep noisy embeddings numerically well-scaled.
The purpose of the converter is different: DSL's $\vz_i$ is a posterior natural parameter whose scale reflects posterior concentration, and this state must be presented to a Transformer through a stable token-like interface.
The converter maps $\vz_i$ into a bounded mixture of token embeddings, preserving the posterior-view interpretation while making the input compatible with self-attention.

\paragraph{What this comparison does not claim.}
We do not claim that prior continuous diffusion LMs lack posterior meaning or uncertainty.
Any Gaussian corruption model defines a posterior $P(\vx\mid\vy_\tau,\tau)$.
We also do not claim that their noisy embeddings are necessarily numerically unstable.
The claim is instead that DSL explicitly chooses a posterior natural coordinate in which the realized state alone determines the token posterior, and then builds the training distribution and Transformer interface around this posterior-state view.
In short, prior continuous diffusion LMs typically perform \emph{timestep-conditioned continuous denoising}, whereas DSL performs \emph{posterior-state denoising in natural coordinates}.

\section{Training and Sampling Details}
\label{sec:implementation_details}

This appendix documents the practical choices used in the main experiments.
The main text focuses on the method-level design: mixed-support CE posterior matching and a posterior-view converter.
Here we give the concrete training and decoding settings needed for reproducibility.

\subsection{OWT Finetuning Settings}
\label{app:training_settings}

% Paste your current subsection:
% "Training settings (OWT finetuning)"
%
% Keep:
% - Data and sequence length
% - Backbone and parameterization
% - Optimization and batching

We exactly follow MDLM's \citep{sahoo2024simple} training configurations.

\paragraph{Data and sequence length.}
We fine-tune on OpenWebText using the \texttt{openwebtext-split} dataset.
All training samples are truncated/padded to a fixed sequence length $L=1024$ tokens.

\paragraph{Backbone and parameterization.}
We use the DiT backbone in official MDLM GitHub repository \citep{sahoo2024simple} with token embedding dimension 64.
Training is conducted in full precision (FP32).

\paragraph{Optimization and batching.}
We train for a maximum of 100{,}000 optimizer steps with no learning-rate warmup (\texttt{num\_warmup\_steps}=0).
Everything else in training setting is the same as MDLM training setting. 

\paragraph{Compute resources.}
All experiments were run on NVIDIA GPUs. Text8 experiments were run on 2 H100s, and sampling/evaluation took approximately 1 hour. OpenWebText training and sampling  evaluation used 2 H100s, with 5000 generated samples per configuration.
The hybrid-sampler sweeps required approximately 2 hours in total, including
preliminary hyperparameter searches. No additional large-scale pretraining was performed
beyond fine-tuning/evaluating the checkpoints described above.

\subsection{Mixed SNR Support Used in Training}
\label{app:mixed_snr_support}

This subsection specifies the practical mixed-support training distribution used in DSL.
A subset of token positions is sampled from a ROAR-style endpoint branch, while the remainder is sampled from a continuous log-normal SNR branch.
This construction is the practical instantiation of the two exact likelihood views in the main text.

% Keep and slightly emphasize from your current OWT finetuning details:
% - p_ROAR = 0.1
% - gamma_max = 50
% - continuous branch gamma ~ LogNormal(mu=1.65, sigma=0.9)
% - gamma denotes SNR, distinct from decoding progress tau
%
% You may move the log-normal figure here.
We denote by $\gamma$ the signal-to-noise ratio (SNR) used in the training corruption process.
We use a \emph{mixed} SNR path: with probability
\begin{equation}
p_{\mathrm{ROAR}} \;=\; \frac{1}{k} \;=\; 0.1,
\end{equation}
we draw $\gamma$ from the ROAR path sampler, where mask is represented by $\gamma=0$, and clean token is approximated by $\gamma_{\text{max}}=100$; 
with probability $1-p_{\mathrm{ROAR}}$ we draw $\gamma$ from a lognormal sampler.
Concretely, for the lognormal continuous SNR branch we sample
\begin{equation}
\gamma \sim \mathrm{LogNormal}(\mu,\sigma^2),\qquad \mu=1.65,\;\;\sigma=0.9,
\end{equation}
and there is no $\gamma_{\max}$ cut-off for this lognormal continuous training schedule.
Throughout training, \textbf{$\gamma$ denotes SNR and is distinct from the decoding progress variable $\tau$ used in sampling.}

We summarize DSL training in Algorithm~\ref{alg:dsl_training}. Each step samples a sentence from the data, draws a per-token SNR vector $\gamma$ from one of two branches (ROAR with probability $1{-}\lambda$, continuous-path with probability $\lambda$), produces noisy embeddings $\vz$, runs the converter and backbone to obtain token posteriors $p_\theta(\cdot\mid\vz)$, and updates parameters via cross-entropy against the clean tokens.

\subsection{Masked Refinement with a DSL Checkpoint}
\label{app:masked_refinement_alg}

The first decoding family used in the paper pairs a trained DSL checkpoint with a ReMDM-style masked-refinement schedule.
Its role is to test the primary empirical claim of the paper: whether mixed-support DSL training improves refinement robustness and step-efficiency under a fixed masked decoder family.
Algorithm~\ref{alg:remdm_sampler} gives the high-level procedure, while all schedule hyperparameters are listed in Section~\ref{app:sampler_details}.

\begin{algorithm}[tb]
\caption{DSL sampling with ReMDM sampler.
Adapted from \cite[Algorithm~1]{wang2025remdm}; in DSL, the ReMDM mask state is represented by zero SNR.}
\label{alg:remdm_sampler}
\small
\begin{algorithmic}
    \State \textbf{Input:} DSL denoiser $\vx_\theta$, sampling steps $T$, noise schedule $\alpha_t$, clean endpoint $\mathrm{SNR}_{\max}$, \remdm{remask schedule $\sigma_t$}.
    \State \textbf{Initialize:} set every token embedding to the mask endpoint, i.e., $\operatorname{SNR}(\vz_T)=0$.
    \For{$i=T,T-1,\ldots,1$}
        \State $t=i/T,\quad s=(i-1)/T$; set $\alpha_t,\alpha_s$.
        \State \remdm{Set $\sigma_t\in[0,\sigma_t^{\max}]$, where $\sigma_t^{\max}=\min\{1,(1-\alpha_s)/\alpha_t\}$.}
        \State Predict clean-token posteriors $\widehat{\vx}=\vx_\theta(\vz_t)$.
        \State Form the ReMDM endpoint posterior
        $
        p_\theta(\vz_s\mid\vz_t)
        =
        q_{\remdm{\sigma}}^{\mathrm{DSL}}
        \bigl(\vz_s\mid\vz_t,\vx=\widehat{\vx}\bigr),
        $
        where ReMDM's mask atom is interpreted as $\operatorname{SNR}=0$ and its clean-token atom as $\operatorname{SNR}=\mathrm{SNR}_{\max}$.
        \State Sample $\vz_s\sim p_\theta(\vz_s\mid\vz_t)$.
        \State Realize sampled clean tokens at $\mathrm{SNR}_{\max}$; \remdm{realize remasked tokens by resetting their SNR to $0$.}
        \State Set $\vz_t\leftarrow\vz_s$.
    \EndFor
    \State \textbf{Output:} decoded tokens from the final clean-endpoint latents.
\end{algorithmic}
\end{algorithm}

\subsection{Sampling Settings on OWT}
\label{app:sampler_details}

This section fully specifies the decoding schedules used in \Cref{sec:experiments}
and defines the rewrite-count statistics and diagnostics reported throughout the paper.

\paragraph{Common evaluation protocol (all rows in Table~1).}
Unless otherwise stated, all methods are evaluated with the same tokenizer (GPT-2 BPE),
sequence length $L=1024$, nucleus sampling (top-$p=0.9$), sample count (5k),
and MAUVE configuration (GPT-2 Large embeddings with $K=500$ buckets) on the same held-out OWT split.
GenPPL is computed as perplexity under the same GPT-2 Large autoregressive oracle LM.
Sentence entropy is computed as the average Shannon entropy of the token-ID histogram within each generated sequence
(prior to text decoding).
This matches the MDLM/ReMDM evaluation protocol and ensures baseline parity.

\paragraph{Time parameterization (matches logged $t$).}
We index a $T$-step decode by a \emph{0-based} step counter $k=0,\dots,T-1$ and use normalized time
\begin{equation}
t_k \triangleq 1 - \frac{k}{T} \in \left[\frac{1}{T},\,1\right].
\label{eq:time_param_app}
\end{equation}
Thus decoding progresses from $t_0=1$ (fully masked/noisy) to $t_{T-1}=1/T$ (near-clean).
This convention matches the sampler logs (e.g., for $T=128$, the final logged $t\approx 0.0078 \approx 1/128$).

\paragraph{Mask ratio schedule and reveal fraction.}
Let $M_k \subseteq [L]$ denote the masked positions at step $k$ (i.e., positions whose visible token is \texttt{[MASK]}).
We log the \emph{realized} masked ratio
\begin{equation}
r(t_k) \triangleq \frac{|M_k|}{L}.
\end{equation}
We additionally report the \emph{reveal fraction} (newly unmasked mass) per step,
\begin{equation}
\Delta r(t_k) \triangleq r(t_{k}) - r(t_{k+1}) \quad (k<T-1),
\end{equation}
which indicates how aggressively the schedule transitions from global drafting (large $\Delta r$ early)
to local refinement (small $\Delta r$ late).

\paragraph{ReMDM-loop and the loop window.}
Our main step-budgeted protocol uses the ReMDM-loop variant (for all $T$) with the official loop defaults:
\begin{equation}
t_{\mathrm{on}}=0.55,\qquad t_{\mathrm{off}}=0.05,\qquad \alpha_{\mathrm{loop}}=0.9,\qquad \texttt{refresh\_unmasked}=\texttt{true}.
\end{equation}
Conceptually, decoding consists of (i) a standard MDLM-style reveal phase,
(ii) a \emph{loop phase} over the time window $[t_{\mathrm{on}}, t_{\mathrm{off}})$ with $t_{\mathrm{on}} > t_{\mathrm{off}}$,
and (iii) a final reveal phase that anneals to the near-clean endpoint.
During the loop phase, the noise level is held fixed at $\alpha_{\mathrm{loop}}$ so that repeated self-correction
operates under a stable context quality.

\paragraph{Capped remasking intensity (ReMDM-cap inside the loop).}
When remasking is active (i.e., within the loop window), ReMDM remasks a subset of currently unmasked tokens
to avoid early lock-in. We use ReMDM's capped schedule:
\begin{equation}
q(t) \;=\; \min\!\left(1,\; \frac{\eta(t)}{1-r(t)}\right),
\qquad
\eta(t) \;=\; \eta_{\mathrm{cap}} \frac{\alpha(t)}{1-\alpha(t)},
\label{eq:remask_cap_schedule_app}
\end{equation}
where $q(t)$ is the per-token remask probability among unmasked positions and $\alpha(t)$ is ReMDM's noise schedule.
Outside the loop window, we set $q(t)=0$ (no remasking).

\paragraph{Step-budget-aware choice of $\eta_{\mathrm{cap}}(T)$ (as used in our logs).}
For the step-budgeted schedule used in \Cref{tab:owt_mauve} (DSL-FT + ReMDM-loop),
we set
\begin{equation}
\eta_{\mathrm{cap}}(128)=0.010,\quad
\eta_{\mathrm{cap}}(256)=0.008,\quad
\eta_{\mathrm{cap}}(512)=0.007,\quad
\eta_{\mathrm{cap}}(1024)=0.004,
\label{eq:eta_cap_values_app}
\end{equation}
and keep all other ReMDM-loop hyperparameters identical to the official setting.
This is a minimal, reproducible adaptation that controls rewrite counts while front-loading corrections.

\begin{table}[t]
\centering
\small
\begin{tabular}{@{}lcccc@{}}
\toprule
Step budget $T$ & 128 & 256 & 512 & 1024 \\
\midrule
$\eta_{\mathrm{cap}}(T)$ & 0.010 & 0.008 & 0.008 & 0.002 \\
top-$p$ nucleus & \multicolumn{4}{c}{0.9} \\
Tokenizer / $L$ & \multicolumn{4}{c}{GPT-2 BPE / 1024} \\
MAUVE embed / $K$ & \multicolumn{4}{c}{GPT-2 Large / 500} \\
GenPPL oracle & \multicolumn{4}{c}{GPT-2 Large} \\
SNR schedule & \multicolumn{4}{c}{linear, $\texttt{snr\_min}=10^{-3}$, $\texttt{snr\_max}=100$, $\rho=7$, $\gamma=4$} \\
$\sigma$ schedule & \multicolumn{4}{c}{Karras, $\sigma_{\min}=0.01$, $\sigma_{\max}=1.0$, $\rho=7$} \\
Loop window & \multicolumn{4}{c}{$[t_{\mathrm{on}},t_{\mathrm{off}})=[0.55,0.05)$} \\
$\alpha_{\mathrm{loop}}$ & \multicolumn{4}{c}{0.9} \\
\bottomrule
\end{tabular}
\vspace{2pt}
\caption{Decoding hyperparameters for OWT experiments under our step-budgeted ReMDM-loop protocol. All settings are identical across $T$ except $\eta_{\mathrm{cap}}(T)$.}
\label{tab:owt_sampler_hparams}
\end{table}

\paragraph{Rewrite-count statistics.}
Let $x^{(k)} \in \mathcal{V}^L$ be the sampled sequence after step $k$ (with \texttt{[MASK]} treated as a valid symbol during decoding).
We define the per-position rewrite count
\begin{equation}
R_i \triangleq \sum_{k=1}^{T-1} \mathbf{1}\!\left[x^{(k)}_i \neq x^{(k-1)}_i\right],
\qquad i \in [L],
\label{eq:rewrite_count_def}
\end{equation}
and report the mean rewrite count $\frac{1}{L}\sum_{i=1}^L R_i$ (equivalently ``Rewrites-per-token'' in our logs),
averaged over the 5k generated samples.

\begin{table}[t]
\centering
\small
\begin{tabular}{@{}lccc@{}}
\toprule
$T$ & $\eta_{\mathrm{cap}}(T)$ & mean rewrites-per-token $\left(\mathbb{E}[R_i]\right)$ \\
\midrule
128  & 0.010 & 0.579834 \\
256  & 0.008 & 0.924500 \\
512  & 0.008 & 1.610600 \\
1024 & 0.002 & 1.848940 \\
\bottomrule
\end{tabular}
\vspace{2pt}
\caption{Rewrite statistics for the step-budgeted ReMDM-loop protocol (computed from \eqref{eq:rewrite_count_def}, averaged over 5k samples).}
\label{tab:rewrite_stats}
\end{table}

\paragraph{Logged posterior sharpness checkpoints (sanity check for \Cref{app:diagnostics}).}
To connect the sampler to the diagnostics in \S\ref{app:diagnostics}, we also log
(i) the average per-token maximum probability (``mean\_max\_prob'') and
(ii) the average top-$p$ nucleus size (``nucleus size'') at selected steps.
A representative subset of checkpoints is shown below; the strong monotone sharpening (early diffuse $\rightarrow$ late near-deterministic)
is consistent across step budgets.

\begin{table}[t]
\centering
\small
\begin{tabular}{@{}cccccc@{}}
\toprule
$T$ & step $k$ & $t_k$ & mean\_max\_prob $\uparrow$ & nucleus size (top-$p{=}0.9$) $\downarrow$ \\
\midrule
128  & 0   & 1.0000 & 0.0398 & 10268.59 \\
128  & 56  & 0.5625 & 0.9391 & 12.17 \\
128  & 71  & 0.4453 & 0.9585 & 8.72 \\
128  & 127 & 0.0078 & 0.9928 & 1.74 \\
\midrule
256  & 0   & 1.0000 & 0.0398 & 10268.59 \\
256  & 113 & 0.5586 & 0.9455 & 8.72 \\
256  & 142 & 0.4453 & 0.9603 & 7.06 \\
256  & 255 & 0.0039 & 0.9952 & 1.33 \\
\midrule
512  & 0   & 1.0000 & 0.0398 & 10268.59 \\
512  & 227 & 0.5566 & 0.9466 & 9.85 \\
512  & 284 & 0.4453 & 0.9632 & 6.21 \\
512  & 511 & 0.0020 & 0.9973 & 1.04 \\
\midrule
1024 & 0    & 1.0000 & 0.0398 & 10268.59 \\
1024 & 455  & 0.5557 & 0.9482 & 8.75 \\
1024 & 568  & 0.4453 & 0.9640 & 4.83 \\
1024 & 1023 & 0.0010 & 0.9980 & 1.17 \\
\bottomrule
\end{tabular}
\vspace{2pt}
\caption{Logged posterior sharpness checkpoints from the sampler logs (``ReMDM logits stats'' and ``ReMDM nucleus size'').}
\label{tab:sharpness_checkpoints}
\end{table}

% \paragraph{Per-step schedule plots for each $T$.}
% The schedules behind \Cref{tab:rewrite_stats,tab:sharpness_checkpoints}, we recommend including per-step traces of $r(t_k)$, $\Delta r(t_k)$, remask intensity, and rewrite rate for each $T \in \{128,256,512,1024\}$.

\subsection{ROAR Sampler Details}
\label{app:roar_setup}

The random-order autoregressive (ROAR) sampler instantiates the ROAR endpoint configurations of \eqref{eq:roar_estimator} as a single-pass inference path on the same DSL-finetuned checkpoint. The sampler reveals one token at a time in a uniformly random order; each position is denoised exactly once, with no remasking, revisits, or self-correction. It requires no retraining or sampler-specific finetuning---it is an alternative decoding rule applied to the same checkpoint used for iterative refinement. Algorithm~\ref{alg:roar_sampler} gives the procedure.

\paragraph{State representation.}
At each step, the input $\vz \in \R^{L \times d}$ is composed position-by-position according to the current reveal status. Unrevealed positions $i \notin A$ are set to $\vz_i = \vzero$, corresponding to $\gamma_i = 0$ in the DSL channel. Revealed positions $i \in A$ are set to $\vz_i = \gamma_{\max}\,\enc(s_i)$ with $\gamma_{\max} = 100$, providing the denoiser with the same near-clean signal it sees at training time on the ROAR endpoint branch (\S\ref{subsec:mixed_snr}, Appendix~\ref{app:mixed_snr_support}). The sampler thus operates on the exact endpoint support the model was explicitly trained to cover; revealed positions act as conditioning context, unrevealed positions act as targets for the next prediction.

\paragraph{Reveal order.}
For each generation batch, we draw a single permutation $\pi$ of $[L]$ uniformly at random and process positions in the order $\pi_1, \pi_2, \dots, \pi_L$. The same permutation is shared across all sequences in the batch but resampled across batches. We do not use confidence-based or content-aware reveal orders, preserving the unbiased random-order structure of the ROAR estimator \eqref{eq:roar_estimator}. The sampler also exposes a \texttt{causal=True} option that fixes $\pi$ to the identity ($\pi_k = k$), recovering strict left-to-right autoregressive decoding; our reported results use the random-order setting.

\paragraph{Per-step prediction.}
At step $k$, the sampler queries the converter and backbone on the current partial state $\vz$ to obtain token posteriors $p_\theta(\cdot \mid \vz)$ at every position, reads the marginal at position $\pi_k$, and samples $s_{\pi_k}$ from this marginal using nucleus sampling. Position $\pi_k$ is then added to $A$, and $\vz_{\pi_k}$ is updated to $\gamma_{\max}\,\enc(s_{\pi_k})$ for use in subsequent steps. The remaining $L-1$ positions are recomputed in the next forward pass under the updated $\vz$, but each position $i$ is itself only \emph{committed} (sampled) once, namely at step $k$ such that $\pi_k = i$.

\paragraph{Hyperparameters.}
For OWT we use $L = 1024$, $\gamma_{\max} = 100$, $\text{top-}p = 0.9$, and run $T = L = 1024$ reveal steps per sample.

\begin{algorithm}[H]
\caption{DSL ROAR sampler (single pass, random-order)}
\label{alg:roar_sampler}
{\small
\begin{algorithmic}[1]
\Require Sequence length $L$; reveal-time SNR $\gamma_{\max}$; nucleus parameter $\text{top-}p$; \texttt{causal} flag (default \texttt{False}).

\State $A \gets \emptyset$
\State $\vz_i \gets \vzero$ for all $i \in [L]$ \Comment{All positions start at $\gamma_i = 0$}
\If{\texttt{causal}}
    \State $\pi \gets (1, 2, \dots, L)$
\Else
    \State $\pi \sim \mathrm{Unif}(\mathfrak{S}_L)$ \Comment{Single permutation shared across the batch}
\EndIf
\For{$k = 1, \dots, L$}
    \State Compute token posteriors $p_\theta(\cdot \mid \vz)$ via converter and backbone
    \State $s_{\pi_k} \sim \text{nucleus}(p_\theta(\cdot \mid \vz)_{\pi_k};\;\text{top-}p)$
    \State $\vz_{\pi_k} \gets \gamma_{\max}\,\enc(s_{\pi_k})$
    \State $A \gets A \cup \{\pi_k\}$
\EndFor
\State \Return $\vs = (s_1, \dots, s_L)$
\end{algorithmic}
}
\end{algorithm}

\subsection{Hybrid Sampler Details}
\label{app:hybrid_sampler}

The hybrid sampler combines continuous-state EDM-style denoising with discrete masked refinement, both executed on the same DSL-finetuned checkpoint. The high-level idea is that continuous denoising provides a low-cost global initialization across all $L$ token positions in parallel, after which a short discrete refinement stage commits the final tokens. Algorithm~\ref{alg:hybrid_sampler} gives the procedure; we describe the three stages below.

\paragraph{Stage 1: Continuous EDM denoising to $\sigma_{\mathrm{switch}}$.}
We follow the standard EDM~\citep{karras2022elucidating} parameterization with a Karras $\sigma$ schedule, but truncate the schedule at an intermediate switching point $\sigma_{\mathrm{switch}}$ rather than running it all the way to $\sigma_{\min}$. Throughout the continuous stage we maintain two views of the state: the physical state $\vy_t \in \R^{L \times d}$ and its rescaled form $\vz_t = \vy_t / \sigma_t^2$, which matches the SNR-parameterized input the DSL denoiser was trained on (\S\ref{subsec:sentence_level}). Initialization is $\vy_0 = \sigma_{\max}\eps$ with $\eps \sim \mathcal{N}(\vzero, \mI)$. At each step we form $\vz_t = \vy_t / \sigma_t^2$, query the converter and backbone to obtain $\xhat(\vz_t)$, and update $\vy_t$ via either an Euler or Heun step,
\begin{equation}
\vy_{t+1} = \vy_t + (\sigma_{t+1} - \sigma_t)\,\frac{\vy_t - \xhat(\vz_t)}{\sigma_t} \quad (\text{Euler}),
\end{equation}
with the Heun second-order correction applied analogously. We optionally inject EDM-style stochastic churn~\citep{karras2022elucidating} within a configurable $\sigma$ band to improve sample diversity.

\paragraph{Stage 2: Projection to discrete tokens at $\sigma_{\mathrm{switch}}$.}
At the switching point, we read out token logits from the backbone applied to $\vz_T = \vy_T / \sigma_T^2$ and sample an initial discrete sequence using nucleus sampling with parameters $\text{top-}p$ and temperature. This produces an initial token sequence $\vs^{(0)} = (s_1^{(0)}, \dots, s_L^{(0)})$ that captures the global structure resolved by the continuous stage but may contain locally inconsistent tokens.

\paragraph{Stage 3: Discrete masked refinement.}
We refine $\vs^{(0)}$ with $T_{\mathrm{mdlm}}$ steps of MDLM-style remasking. Tokens are mapped to high-SNR continuous states $\vz_i = \gamma_{\max}\,\enc(s_i)$ with $\gamma_{\max} = 100$ to provide the denoiser with a near-clean input at unmasked positions. At each refinement step, the most uncertain tokens---ranked by $1 - \max_v p_\theta(v\mid \vz_i)$---are remasked (their $\vz_i$ set to $\vzero$) and resampled conditioned on the remaining tokens. The mask ratio decays from an adaptive initial value $r_0$ to a target $r_{\mathrm{end}}$ via a cosine schedule. The initial ratio $r_0$ is set from the mean per-token uncertainty at the switching point, clipped to $[r_{\min}, r_{\max}]$, so sequences with sharper switching-point posteriors begin refinement with less aggressive remasking.

\paragraph{Hyperparameters.}
The configurations reported in Table~\ref{tab:hybrid} use the Karras $\sigma$ schedule with $\sigma_{\max}=10$, $\sigma_{\min}=0.01$, $\rho=7$, Heun solver, EDM churn $= 1.41$ within the band $[\sigma_{\min}, \sigma_{\max}]$, and $\sigma_{\mathrm{switch}}=0.3$. The discrete stage uses $\text{top-}p=0.9$, temperature $0.8$, $\gamma_{\max}=100$, mask ratio bounds $[r_{\min}, r_{\max}] = [0.2, 0.8]$, and $r_{\mathrm{end}} = 0$. 

\begin{algorithm}[H]
\caption{DSL hybrid continuous-then-discrete sampler}
\label{alg:hybrid_sampler}
{\small
\begin{algorithmic}[1]
\Require Sequence length $L$; continuous step count $T_{\mathrm{cont}}$; discrete step count $T_{\mathrm{mdlm}}$; sigma schedule $\{\sigma_t\}_{t=0}^{T_{\mathrm{cont}}}$ with $\sigma_0 = \sigma_{\max}$ and $\sigma_{T_{\mathrm{cont}}} = \sigma_{\mathrm{switch}}$; nucleus parameters $(\text{top-}p, \tau)$; refinement parameters $(\gamma_{\max}, r_{\min}, r_{\max}, r_{\mathrm{end}})$.

\State $\vy_0 \gets \sigma_{\max}\,\eps$, $\eps \sim \mathcal{N}(\vzero, \mI_{L\times d})$ \Comment{Stage 1: continuous EDM}
\For{$t = 0, \dots, T_{\mathrm{cont}} - 1$}
    \State $\vz_t \gets \vy_t / \sigma_t^2$
    \State $\xhat_t \gets$ DSL denoiser$(\vz_t)$ \Comment{converter $\to$ backbone}
    \State Update $\vy_{t+1}$ via Heun step on $(\vy_t - \xhat_t)/\sigma_t$ (with optional churn)
\EndFor

\State $\vz_{T_{\mathrm{cont}}} \gets \vy_{T_{\mathrm{cont}}} / \sigma_{T_{\mathrm{cont}}}^2$ \Comment{Stage 2: project to tokens}
\State Compute logits $\ell_i$ at each position; set $u_i \gets 1 - \max_v \softmax(\ell_i)_v$
\State Sample $s_i^{(0)} \sim \text{nucleus}(\ell_i; \text{top-}p, \tau)$ for all $i$
\State $r_0 \gets \mathrm{clip}(\bar u, r_{\min}, r_{\max})$ where $\bar u$ is the mean of $\{u_i\}$

\For{$k = 0, \dots, T_{\mathrm{mdlm}} - 1$} \Comment{Stage 3: discrete refinement}
    \State $r_k \gets r_{\mathrm{end}} + \tfrac{1}{2}(r_0 - r_{\mathrm{end}})(1 + \cos(\pi k / T_{\mathrm{mdlm}}))$
    \State Select $\lceil r_k L \rceil$ positions with highest current uncertainty; mask them
    \State Set $\vz_i \gets \vzero$ at masked positions, $\vz_i \gets \gamma_{\max}\,\enc(s_i^{(k)})$ otherwise
    \State Resample $s_i^{(k+1)}$ at masked positions from DSL denoiser logits
\EndFor

\State \Return $\vs^{(T_{\mathrm{mdlm}})}$
\end{algorithmic}
}
\end{algorithm}

\section{Mechanistic Analysis}
\label{sec:mech_appendix}

The main text only briefly summarizes why DSL works in practice.
This appendix provides supporting mechanistic evidence.
The central message is that DSL improves refinement when training exposes the denoiser to the posterior states encountered under self-generated drafts, and that useful uncertainty is more important than mere local smoothness.

\subsection{Cyclic Toy Setup and Correction Example}
\label{sec:cyclic_dataset}

We use a simple synthetic discrete dataset to illustrate how DSL handles both masked tokens and visible-but-wrong tokens within a single framework.
Fix an integer $K$ and consider the dataset consisting of all cyclic shifts of a base sequence:
\[
x^{(0)} = [0,1,\ldots,K-1], \qquad
x^{(i)} = \mathrm{roll}(x^{(0)}, i), \;\; i=0,\ldots,K-1.
\]
For visualization, each token is embedded in $\mathbb{R}^2$ on the unit circle with equal angular spacing, yielding a symmetric geometry in which posterior trajectories can be plotted directly.

To illustrate correction, suppose the true sequence is \texttt{ABCDEFG}, but the input contains both masked and garbled positions:
\[
\begin{array}{cccccccc}
\text{Original:} & A & B & C & D & E & F & G \\
\text{Input:} &
\underbrace{\_}_{\text{masked}} &
\underbrace{\_}_{\text{masked}} &
\underbrace{C}_{\text{correct}} &
\underbrace{D}_{\text{correct}} &
\underbrace{B}_{\text{garbled}} &
\underbrace{F}_{\text{correct}} &
\underbrace{F}_{\text{garbled}}
\end{array}
\]
In DSL, masked positions can be assigned $SNR_i=0$, while visible but uncertain positions can be assigned small positive SNR.
This allows the model to use partial signal without forcing visible tokens to remain fixed.
In our cyclic toy, this lets the same dynamics both fill masked positions and revise garbled visible ones.

\begin{figure}[tb]
\centering
\begin{subfigure}[t]{0.48\columnwidth}
    \centering
    \includegraphics[width=\linewidth]{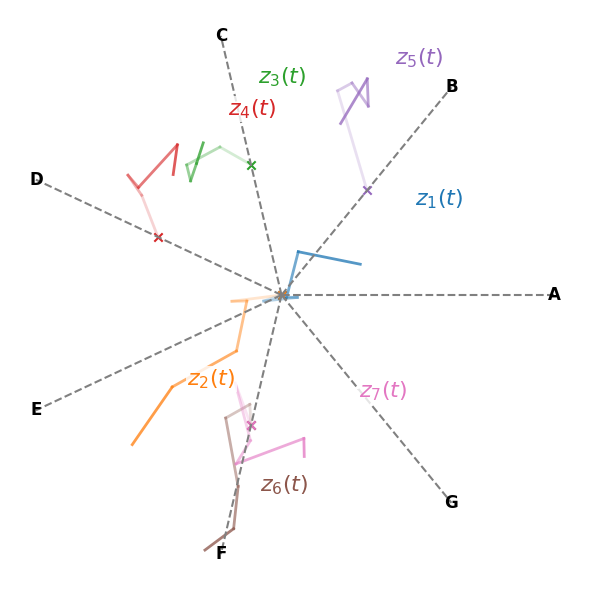}
    \caption{Masked + garbled input}
\end{subfigure}
\hfill
\begin{subfigure}[t]{0.48\columnwidth}
    \centering
    \includegraphics[width=\linewidth]{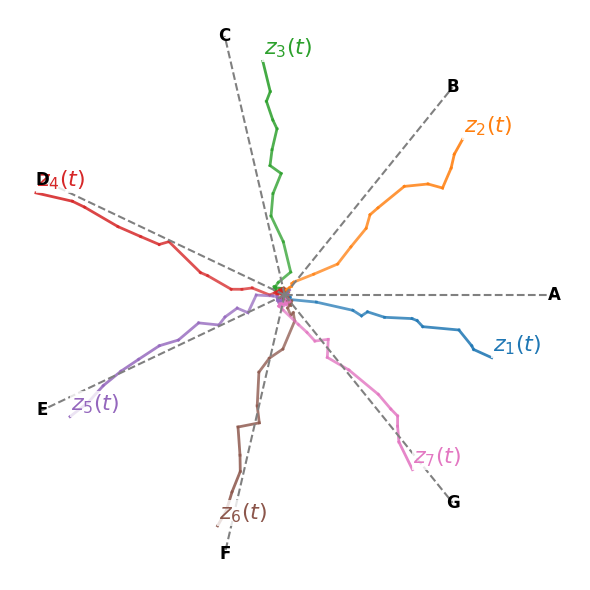}
    \caption{Reconstruction}
\end{subfigure}
\caption{\textbf{DSL correction under masked and garbled inputs.}
The input contains both masked positions and visible-but-wrong tokens.
DSL can assign zero SNR to masked tokens and small positive SNR to uncertain visible tokens, allowing the same refinement dynamics to both fill missing values and correct garbled ones.}
\label{fig:garble}
\end{figure}

\subsection{Why Targeted Remasking Matters}
\label{app:targeted_remasking}

The toy reveals a key asymmetry:
masked positions are already mutable, whereas visible-but-wrong positions must first be remasked before they can be corrected.
This is exactly why refinement-time uncertainty must remain informative.
If the model cannot identify which visible tokens should become mutable again, then self-correction stalls.
Conversely, once posteriors have already sharpened, late remasking tends to induce churn rather than useful revision.

\subsection{Robustness to Self-Generated Intermediate Drafts}
\label{app:self_generated_drafts}

Two complementary ingredients explain DSL's robustness to imperfect self-generated drafts.

\paragraph{(1) Exposure to rollout-like partially corrupted contexts.}
Refinement-time drafts mix masked gaps with model-made mistakes that remain visible.
DSL trains the denoiser under a mixed-support corruption distribution that covers both endpoint-like masked contexts and intermediate partially corrupted drafts.
This broadens the support of training contexts and reduces brittleness when the model conditions on imperfect self-generated states.

\paragraph{(2) Smoothness can help, but remasking requires informative confidence.}
A plausible supporting factor is geometric smoothness in continuous embedding space: a denoiser that behaves smoothly across nearby contexts may generalize better across partially-correct drafts.
However, in a ReMDM-style discrete sampler, correction is bottlenecked by \emph{which tokens get remasked}.
Thus the decisive property is not just smooth interpolation, but whether confidence remains informative enough to identify low-confidence visible tokens that should be revised.

\subsection{Sampling Diagnostics and Over-Refinement}
\label{app:diagnostics}

We track uncertainty, posterior sharpness, and repair workload in order to diagnose when iterative refinement remains productive and when it begins to over-refine.

\paragraph{Token uncertainty and entropy.}
Let $p_{\theta,t}(x_i)$ be the categorical distribution at token position $i$ and sampling step $t$.
Define average uncertainty and entropy by
\begin{align}
u_t &:= \frac{1}{L}\sum_{i=1}^L \Bigl(1 - \max_v p_{\theta,t}(x_i=v)\Bigr), \\
H_t &:= \frac{1}{L}\sum_{i=1}^L \Bigl(-\sum_v p_{\theta,t}(x_i=v)\log p_{\theta,t}(x_i=v)\Bigr).
\end{align}
A rapid drop in $(u_t,H_t)$ indicates posterior sharpening and the onset of a local-repair regime.

\paragraph{Nucleus size as a sharpness proxy.}
For a fixed top-$p$ threshold (e.g.\ $p=0.9$), define
\begin{equation}
k_t := \frac{1}{L}\sum_i |\mathrm{TopP}(p_{\theta,t}(x_i), p)|.
\end{equation}
Large $k_t$ corresponds to broad uncertainty; small $k_t$ indicates sharp, near-deterministic posteriors.

\paragraph{Repair workload and diminishing returns.}
Let $\mathcal M_t$ be the set of positions remasked or rewritten at step $t$.
We track the remask ratio and realized token-change rate:
\begin{equation}
r_t := |\mathcal M_t|/L,
\qquad
\Delta_t := \frac{1}{L}\sum_{i=1}^L \mathbb I[x_i^{(t)} \neq x_i^{(t-1)}].
\end{equation}
Over-refinement is characterized by non-trivial $r_t$ but tiny $\Delta_t$, indicating diminishing returns.

\begin{figure*}[t]
  \centering
  \begin{subfigure}[t]{0.32\textwidth}
    \centering
    \includegraphics[width=\linewidth]{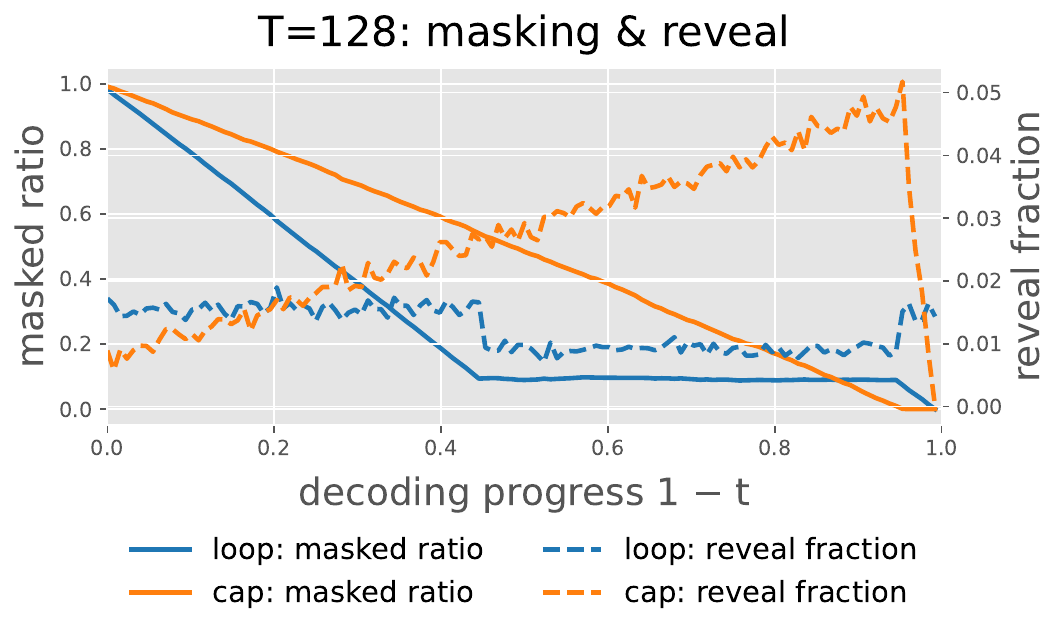}
    \caption{Masking \& reveal}
    \label{fig:mask_reveal}
  \end{subfigure}\hfill
  \begin{subfigure}[t]{0.32\textwidth}
    \centering
    \includegraphics[width=\linewidth]{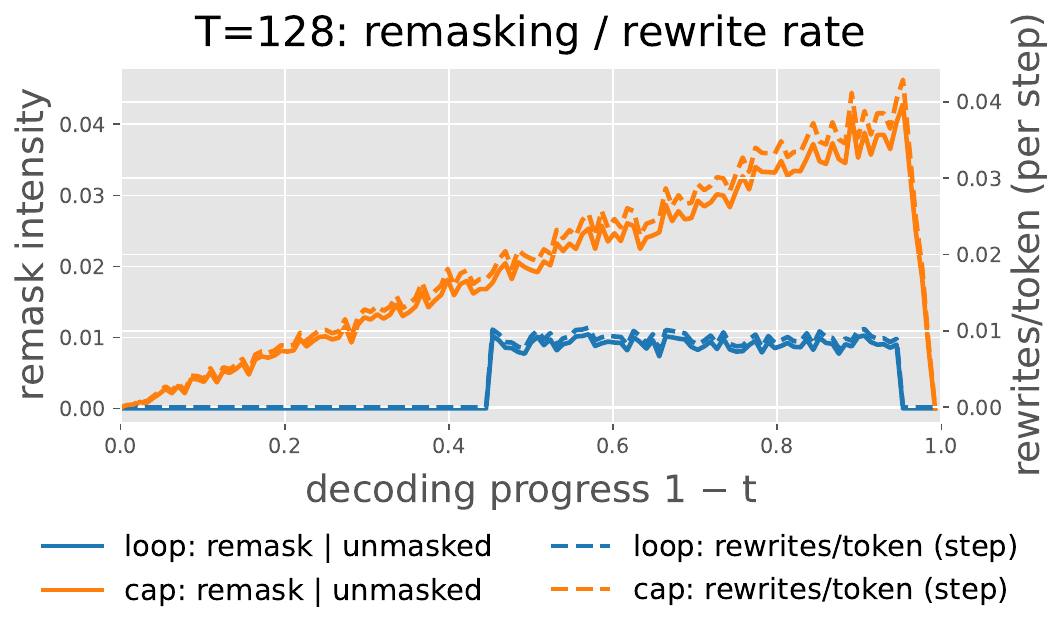}
    \caption{Remasking / rewrite rate}
    \label{fig:remask_rate}
  \end{subfigure}\hfill
  \begin{subfigure}[t]{0.32\textwidth}
    \centering
    \includegraphics[width=\linewidth]{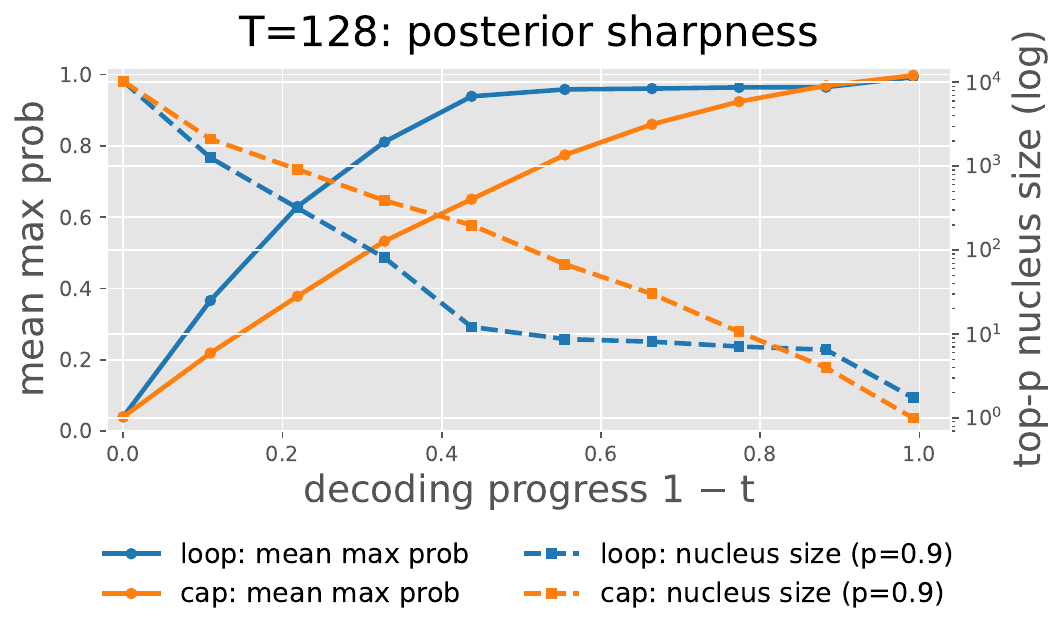}
    \caption{Posterior sharpness}
    \label{fig:posterior_sharpness}
  \end{subfigure}
  \caption{\textbf{Sampling diagnostics under a fixed step budget.}
  (a) Masking and reveal schedule.
  (b) Remasking intensity and realized rewrites per token.
  (c) Posterior sharpening measured by mean max-probability and top-$p$ nucleus size.}
  \label{fig:diag_three_wide}
\end{figure*}

\paragraph{Over-refinement hurts coverage.}
After posteriors sharpen, the sampler may still apply non-trivial remasking while realizing little actual change.
This corresponds to diminishing returns: local likelihood proxies may keep improving while distributional coverage degrades.
This explains the sweet spot observed in the main OWT experiments.

\section{Calibration and Endpoint-Smoothing Ablation}
\label{sec:calibration_appendix}

This appendix supports one of the paper's main practical conclusions:
endpoint-like support is necessary, but endpoint-only training is insufficient.
To obtain usable refinement-time uncertainty, the ROAR-style endpoint branch should be smoothed rather than concentrated entirely on atomic extremes.

\subsection{Atomic vs.\ Smoothed ROAR Endpoints}
\label{app:smooth_roar_why}

DSL uses a ROAR-style endpoint branch to expose the denoiser to masking/reveal-like states.
A natural baseline is \emph{atomic} endpoint sampling, where ROAR tokens are assigned only the two extreme values
\[
\gamma \in \{0,\gamma_{\max}\}.
\]
In contrast, our \emph{smoothed} endpoint variant samples from two narrow endpoint ranges,
\[
\gamma \sim \mathrm{Unif}(0,\gamma_{\min})
\qquad\text{or}\qquad
\gamma \sim \mathrm{Unif}(c\gamma_{\max},\gamma_{\max}),
\]
with equal probability.

The motivation is practical.
Atomic endpoints preserve the masking/reveal semantics, but make the endpoint branch too degenerate:
near the fully masked and near-clean limits, corruption becomes almost deterministic, reducing local variation in training contexts.
Smoothed endpoints preserve the same semantics while broadening the support near both extremes.

\subsection{Near-Clean Calibration}
\label{app:ece_reliability}

We evaluate calibration under teacher forcing on held-out corrupted inputs.
Compared with atomic ROAR endpoints, smoothed endpoints improve calibration precisely in the near-clean regime where refinement decisions are made.

\begin{figure}[t]
\centering
\begin{subfigure}[t]{0.5\columnwidth}
    \centering
    \includegraphics[width=\linewidth]{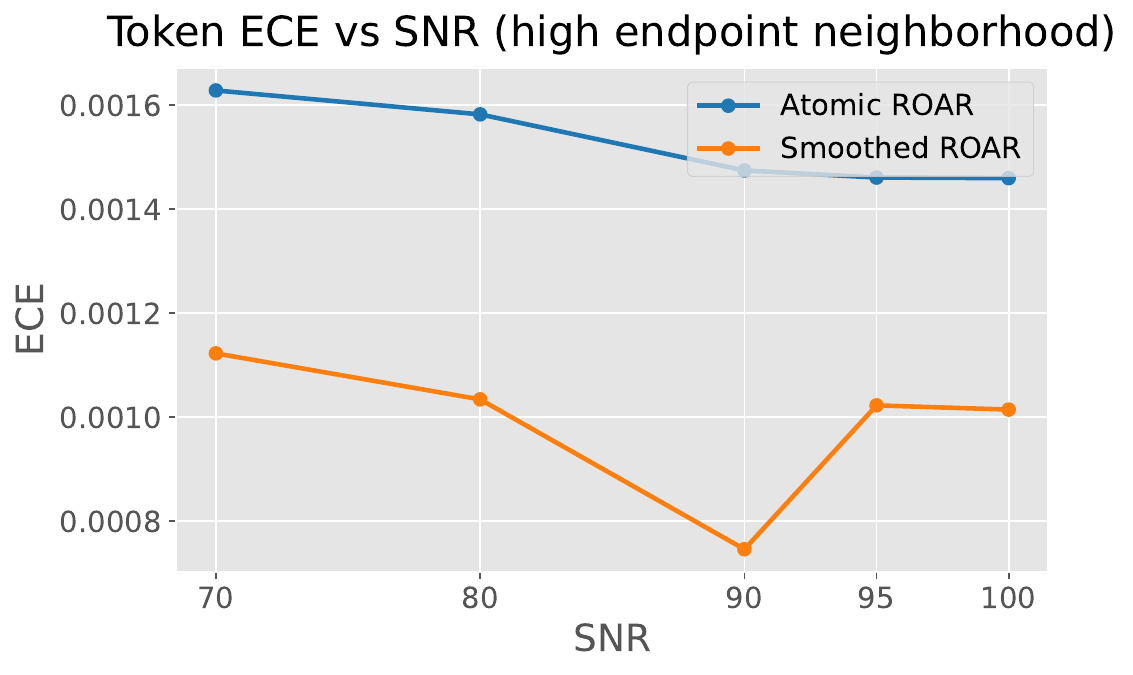}
    \caption{ECE at large SNRs}
\end{subfigure}
\hfill
\begin{subfigure}[t]{0.48\columnwidth}
    \centering
    \includegraphics[width=\linewidth]{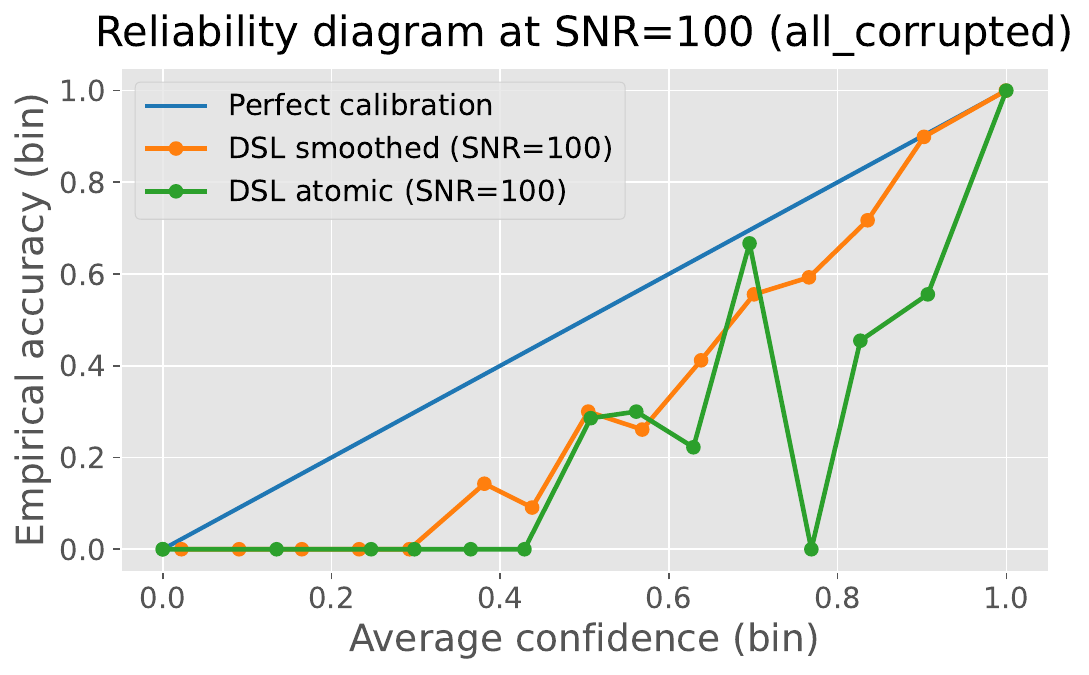}
    \caption{Reliability at $\mathrm{SNR}=100$}
\end{subfigure}
\caption{\textbf{Endpoint smoothing improves near-clean calibration.}
We compare atomic ROAR endpoints ($\gamma\in\{0,\gamma_{\max}\}$) to smoothed endpoint ranges.
Smoothing reduces ECE at large SNR and yields reliability curves closer to the diagonal near the clean limit.}
\label{fig:calibration}
\end{figure}

These results indicate that endpoint-only support is insufficient.
A denoiser trained only on atomic extremes can become poorly calibrated near the clean limit, even though those are exactly the states where refinement-time confidence must be trusted.

\subsection{Downstream Step--Quality Tradeoff}
\label{app:speed_quality_smoothing}

Improved near-clean calibration translates into better downstream refinement under the same decoder family.

\begin{figure}[t]
  \centering
  \begin{subfigure}[t]{0.48\linewidth}
    \centering
    \includegraphics[width=\linewidth]{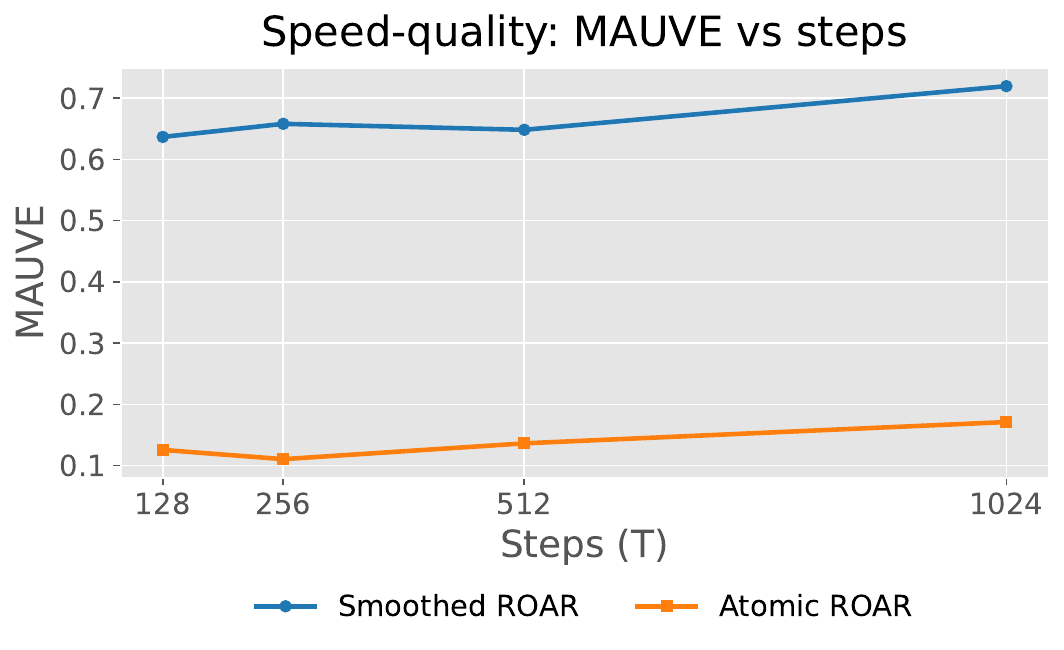}
    \caption{MAUVE vs.\ sampling steps}
    \label{fig:speed_quality_mauve}
  \end{subfigure}\hfill
  \begin{subfigure}[t]{0.48\linewidth}
    \centering
    \includegraphics[width=\linewidth]{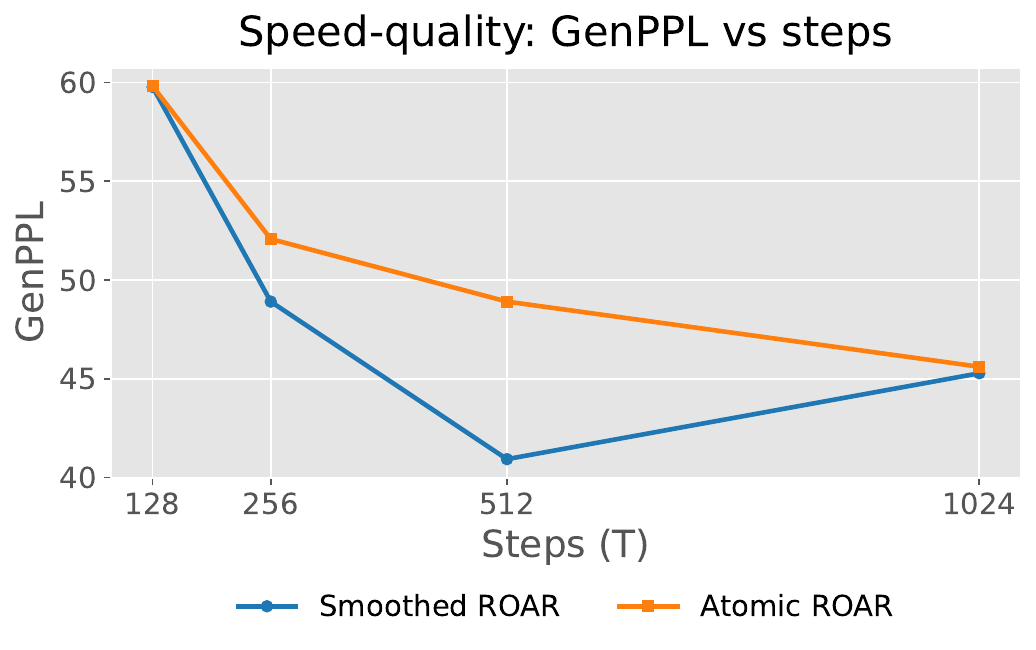}
    \caption{GenPPL vs.\ sampling steps}
    \label{fig:speed_quality_genppl}
  \end{subfigure}
\caption{\textbf{Endpoint smoothing improves the step--quality tradeoff under fixed decoding.}
Using the same ReMDM-style sampler with identical schedules, smoothed-endpoint checkpoints achieve stronger MAUVE across step budgets while maintaining comparable or better GenPPL.}
\label{fig:speed_quality}
\end{figure}

Together, these results support the practical point made in the main text:
endpoint coverage is necessary because masking/reveal states are part of the likelihood family used by refinement, but collapsing training entirely onto atomic endpoints hurts calibration on intermediate and near-clean draft states.
Smoothing the endpoint branch improves the quality of refinement-time uncertainty and thereby improves downstream step-efficient decoding.

\section{Additional Experiments}
\label{sec:additional_exp}

This section provides additional experimental details and diagnostics omitted from the main text for space. 
We report full OpenWebText generation results for masked-refinement decoders and hybrid continuous-then-discrete sampling, including GenPPL and sentence-entropy diagnostics in addition to MAUVE. 
We also provide supplementary Text8 likelihood results and implementation details for reproducibility.

\subsection{OWT}
\label{app:additional_owt_exp}
Here we report the full OpenWebText generation tables. 
Table~\ref{tab:owt_full_table} compares DSL-finetuned checkpoints with masked-refinement baselines across sampling budgets. 
Table~\ref{tab:hybrid_full} reports diagnostics for the selected DSL hybrid continuous-then-MDM handoff configurations used in the main text.

\begin{table*}[t]
\centering
\scriptsize
\setlength{\tabcolsep}{3.0pt}
\renewcommand{\arraystretch}{1.08}
\caption{OWT ($L=1024$) unconditional generation under masked-refinement decoders. Each entry is MAUVE$\uparrow$ / GenPPL$\downarrow$ / SentEnt$\uparrow$; bold marks per-column best MAUVE among non-reference rows. Rows marked $\ddagger$ are taken from \cite{wang2025remdm}; rows marked $\S$ from \cite{rout2025anchored}. All other rows use our protocol. The two DSL rows use the same DSL-finetuned checkpoint with different ReMDM-family remasking schedules.}
\label{tab:owt_full_table}
\resizebox{\textwidth}{!}{%
\begin{tabular}{l|c|c|c|c}
\hline
Method & $T=128$ & $T=256$ & $T=512$ & $T=1024$ \\
\hline
Data (reference) & 1.000 / 14.8 / 5.44 & 1.000 / 14.8 / 5.44 & 1.000 / 14.8 / 5.44 & 1.000 / 14.8 / 5.44 \\
AR (reference)   & -- & -- & -- & 0.760 / 12.1 / 5.22 \\
\hline
SEDD (absorb)        & 0.007 / 119.2 / 5.65 & 0.008 / 112.9 / 5.63 & 0.009 / 107.8 / 5.62 & 0.008 / 104.7 / 5.62 \\
MDLM$^{\ddagger}$     & 0.015 / 61.5 / 5.52  & 0.023 / 55.8 / 5.49  & 0.031 / 53.0 / 5.48  & 0.042 / 51.3 / 5.46 \\
MDLM+FB$^{\ddagger}$  & 0.064 / 42.8 / 5.44  & 0.086 / 39.0 / 5.41  & 0.103 / 37.0 / 5.40  & 0.133 / 33.8 / 5.35 \\
MDLM+DFM$^{\ddagger}$ & 0.041 / 37.9 / 5.31  & 0.098 / 31.2 / 5.30  & 0.168 / 24.2 / 5.26  & 0.254 / 21.7 / 5.20 \\
ReMDM$^{\ddagger}$    & 0.057 / 42.5 / 5.43  & 0.216 / 30.5 / 5.34  & 0.350 / 21.1 / 5.21  & 0.403 / 28.6 / 5.38 \\
PRISM$^{\ddagger}$    & 0.175 / 17.9 / 5.10 & 0.268 / 16.2 / 5.07 & 0.281 / 15.4 / 5.04 & 0.260 / 15.0 / 5.02 \\
PRISM-loop$^{\ddagger}$ & 0.118 / 21.5 / 5.18 & 0.294 / 18.0 / 5.15 & 0.423 / 16.4 / 5.12 & 0.527 / 15.3 / 5.10 \\
ADLM$^{\S}$           & 0.140 / 52.5 / 5.52 & 0.349 / 39.9 / 5.46 & 0.573 / 31.6 / 5.40 & 0.699 / 25.4 / 5.35 \\
\hline
DSL-FT + ROAR (naive) & -- & -- & -- & 0.551 / 47.4 / 5.19 \\
DSL-FT + MDLM &
0.402 / 54.2 / 5.25 &
0.481 / 50.5 / 5.22 &
0.506 / 49.4 / 5.21 &
0.495 / 48.3 / 5.20 \\
\textbf{DSL-FT + ReMDM (confidence-based)} &
0.615 / 68.0 / 5.36 &
0.622 / 61.6 / 5.32 &
\textbf{0.707} / 57.2 / 5.28 &
0.610 / 55.3 / 5.25 \\
\textbf{DSL-FT + ReMDM-loop (principled $\eta_{\mathrm{cap}}$)} &
\textbf{0.639} / 59.7 / 5.34 &
\textbf{0.661} / 49.0 / 5.27 &
0.651 / 41.0 / 5.18 &
\textbf{0.722} / 45.3 / 5.20 \\
\hline
\end{tabular}%
}
\end{table*}

\begin{table}[t]
\centering
\footnotesize
\setlength{\tabcolsep}{5pt}
\renewcommand{\arraystretch}{1.08}
\caption{
Full diagnostics for the selected DSL hybrid continuous-then-MDM sampler
configurations on OWT. All rows use the same DSL-finetuned checkpoint.
The main text reports MAUVE only; here we include GenPPL and sentence
entropy diagnostics for the selected handoff configurations.
}
\label{tab:hybrid_full}
\begin{tabular}{ccccccc}
\hline
$T_{\rm cont}$ & $T_{\rm MDM}$ & NFE & $\sigma_{\rm switch}$
& GenPPL$\downarrow$ & MAUVE$\uparrow$ & SentEnt$\uparrow$ \\
\hline
16 & 16 & 32 & 0.49 & 60.3545 & 0.5008 & 5.0521 \\
16 & 32 & 48 & 0.49 & 52.4820 & 0.6619 & 5.0664 \\
16 & 48 & 64 & 0.49 & 49.0116 & 0.7017 & 5.0604 \\
32 & 16 & 48 & 0.45 & 62.3669 & 0.5872 & 5.1147 \\
32 & 32 & 64 & 0.46 & 53.7585 & 0.5891 & 5.1061 \\
32 & 48 & 80 & 0.46 & 51.4659 & 0.7240 & 5.1033 \\
\hline
\end{tabular}
\end{table}

\subsection{Text8}
\label{app:additional_text8_exp}
\textbf{Dataset} \quad
We test DSL on Text8 dataset \cite{mahoney2006large}, a relatively small-scale, character-level text modeling benchmark extracted from English Wikipedia. 

\textbf{Training Setups} \quad
Following the previous work \citep{gulrajani2023likelihood,austin2021structured,loudiscrete,shi2024simplified}, we evaluated all models on short text chunks of length 256, and also follow the same dataset split and transformer model size to parameterize the denoising models. 
For all the models including our method and baselines, we follow the common practice of using 12-layer transformers similar to GPT2-small scale \citep{shi2024simplified}. Our transformer has the same number of heads (12) and hidden dimension (784) as in \citep{sahoo2024simple}.
Note that all baseline diffusion language models are trained for a million steps, except for our model is trained for half a million.

\textbf{Baselines} \quad
We compare DSL against state-of-the-art continuous and discrete diffusion models, and autoregressive models \citep{Vaswani2017Attention}. 
Continuous diffusion baselines include Plaid \cite{gulrajani2023likelihood}, CDCD \cite{dieleman2022continuous}. 
Discrete diffusion baselines include Discrete Diffusion Model (D3PM)~\citep{austin2021structured}, Score Entropy Discrete Diffusion (SEDD)~\citep{loudiscrete}, Masked Diffusion Language Model (MDLM)~\citep{sahoo2024simple} and MD4~\citep{shi2024simplified}. 
% \todo{Note that we select baseline methods which focusing on improving diffusion pipeline itself. Therefore, not including EDLM~\citep{xu2024energy} which improve basic diffusion by using a outside Energy model.}
For autoregressive models, we choose Any-order Autoregressive Models ARDM \citep{hoogeboom2021autoregressive} and MAC~\citep{shih2022training}, and flow-based methods IAF/SCF~\citep{ziegler2019latent}, AR Argmax Flow~\citep{hoogeboom2021argmax}, Discrete Flow~\citep{tran2019discrete}, and Multinomial Diffusion.

\section{Limitations and Broader Impacts}
\label{sec:limitations_impacts}
DSL is a foundational method for diffusion-based language generation. Potential positive impacts include improving the efficiency, controllability, and flexibility of non-autoregressive text generation systems. At the same time, improvements in text generation quality may increase risks associated with synthetic text, including spam, misinformation, impersonation, or other harmful content generation. We do not release a deployed model or user-facing system, and we encourage future applications of DSL-style methods to follow standard safeguards for generative models, including misuse monitoring, content filtering, and careful release practices.

\section{Assets and Licenses}
\label{sec:assets}

Table~\ref{tab:assets} summarizes the existing assets used in this work. 
We use these assets only for research purposes and follow their stated licenses and terms of use.

\begin{table}[h]
\centering
\small
\caption{Existing assets used in this work.}
\label{tab:assets}
\begin{tabular}{p{0.36\linewidth}p{0.56\linewidth}}
\hline
Asset & License / terms of use \\
\hline
OpenWebText & Public research dataset used for open-ended text generation evaluation; we cite the original release and follow its terms. \\
Text8 & Public benchmark dataset used for character-level likelihood evaluation. \\
LLaDA / MDLM checkpoints & Pretrained checkpoints used for initialization and/or baseline comparison; we cite the original releases and follow their model-use terms. \\
MAUVE & Evaluation metric for open-ended text generation; we cite the original paper/package and follow the package license. \\
GPT-2 tokenizer / evaluator & Used for tokenization and GenPPL evaluation; we cite the original release and follow the corresponding terms. \\
\hline
\end{tabular}
\end{table}

\section{Additional Qualitative Samples}
\label{sec:gen_text_examples}

% If you want to keep generated Text8 samples,
% move "Generated Samples From Text8" here.
% This should remain optional and should not clutter the earlier appendices.

% Paste:
% - Per-sentence sampler examples
% - Per-token sampler examples
\subsection{Text8}
\label{app:text8_samples}

\textbf{Example 1:}\quad
[BOS]s to some anarchists to view the betrayal of destiay and show that the essential character of dut theme survived felled to before nine zero bc and were replaced most of all knightnote have changed dramatically basing on edmund lynn s name death on body f[EOS]

\textbf{Example 2:}\quad
[BOS]ics at a grant university s trade psychology marketing a statement us by saddam is an example of the abortion of aids most modern reputation says the body agrees to the pope whose arm is considered one of the most intrigued piecses of his body see also c[EOS]

\textbf{Example 3:}\quad
[BOS]h the irish protectors caused margraves and other powerful men and hence to present a confusingly successful tour the quotes more interpreted by mary newton puts the rejection of religion lewis and wats honored having led radcliffe men to question the la[EOS]

\subsection{OWT}
\label{app:owt_samples}

The following examples are generated from DSL finetuned checkpoint, with ReMDM-loop (principled $\eta_{\text{cap}}(T)$).

\textbf{Example with $T=128$.} \\
BIP that only need the hard press. Now I think it works very well against your blindness.

At RT11

Here’s how it works.

I’ve adjusted the fast slider to go above the 60’s. Eventually, myshoot rate is now 4 percent. It works all on the T-Mobile bands. If I’m too low, I can cross it up the corner, but if it’s with a stick hand (Carl Rasmus could cut), I infrotm a little near the edge of the box. If I’m going too low, I’ll have the ability to adjust target angles and will still only know what it looks like.

The perfect, though disappointing feeling.

Links:

Lettered September 29st, 2014

are your no planning?

no planning is a natural lifestyle

@khovgeksriumki

forgo where the night of day breaks -go to Eavriumki is going to start the week.

hahahaha no a bet on the day. Daily is on the off right right now. From that I look at the page of his resume and ask if you can break the week, so if the week is fast, I don’t have real hassle asking if he’s going to break it. So I think he’s going to start to go slow, maybe a couple may be there.

Signed on September 29st, 2014

and he’s not going to go after that week.

(out/of-in his catalog)

On this week’s 2nd, I had a lot of green green warning about my start of the weekend so I knew I’m going to lose. I didn’t want to lose any extra weight or anything else, I just wanted to the rest time. At a point in the day I got nothing, I could get anything, but the end of the 60’s was not what I wanted to go. I got myself on track so let’s do a routine at 20am. Before I could wake 20am, I realized I needed to wake up at 20am. It didn’t seem a good day.

Before I had it the next day, I bought a new SIM card, and I spent this week backwards when I was fixed. It was just not a good thing, but I kept the bell circle off for both days.

So it took about 7 minutes to go to work, then 5 minutes on the field between 8 and 7, and had a weight roll out around 5 minutes later. Then the clock on the card still constantly waiting for a delay in the middle of the weight world.

A couple of times, people say, “You don’t have time for this” or “well.” I’m able to stream people and online, listening to radio stations and live. I’m able to tell a lot of people, “Yes, you were able to track people down in front of you today. And I’m thankful I had told myself to do a few things a year and a half.” For me, I get hundreds of texts and messages every day, so it’s right to get to people.

NOLOM feel good

It kind of feels better having spoken to someone now. I’m hungry, but he’s going to do it already. He, I found him happy but I’m halfway through it right now. I’ll bet upon everything from I could buy a couple PTTs from a Gdigy and a T-Mobile, and I don’t want to buy.

The coolest thing now is the ability to fly, long distances, making swimmers with taking pictures. I could do not shoot a flyw if I would like to let it shoot a kmm for me, still out there, still at least where I am. I’ll have got something to show. I don’t worry about a flyw that I shoot for when it’s nowhere.

I could have shoot a group that’s not certain I can shoot, but I come to the angle I have to shoot and where my camera moves it. When I do shoot, I’m going to search, so I can see where a wave’s coming and see if it’s coming on right. Writing and writing is a way to be free from ideas and scrubbing from thought.

I drive a slow car. I’ll take the drive over it and drag it into a right. If I can take it it, I bend my head, and I’m going to take it. And

\textbf{Example with $T=256$.} \\
NASts from age 19 to Cuba for Cuba and Cuba for Cuba with the exception of the house in Havana, a tourism official says.

Fiesta Fiaza is missing from ages 11 and 12 at the time she was conceived, but she happily adopted her younger daughter and adopted her at age. “The best question we have right now is how much she wants to give away,” says Arushal Chomam, Managing Director of Case \& Privacy with the International Investigations Center. “We will give a guess and as soon as we are certified certified, please check for us for some time.”

See also: Cruise Cruise Cruise of Banks Search for MissingFiesta Fiaza – At \$15. Ft Ft : \$15. Ft Ft - Offer Nov. Nov. (1601) Grand Competition Winner Contest Winner New Original Siftth-Officile by an Italian Pararonous Unit

Generally, Fume does handcrafted, and she takes away from her collection. Even though she has not used her collection’s collection or design, she says she’ll put on any of her first set Grashinaza, something that won’t be used again with her collection for years. Not too much is said and given what she will be buying in the amount of time. Fume’s collection is also in an eye of local cultural heritage. “It’s disappointing and whether there are more or more examples of local heritage, but I just said there is something kind of African heritage there that is indicative of people who love Bayay Surplitia,” she says.

In addition to an appreciation of culturals and historical traditions, Fume has an appreciation of indigenous cultures as a part of her collection. She, too, will showcase their cultures across color, ethnicity, art, design, and indigenous traditions. “When I’ll give out, I hope my collection will be just as unique,” she says. “You know, when I come out and when I give out, my collection is just as strong,” she continues. “I’m surprised we have a green color, a dark color, or a black color, or something. It’s just what I hope is a positive experience.”

Fume currently holds about 100 pieces of Tokara Kirara Suguiaiko just out of \$12,002 in Albany, N.Y. “It’s a mixed-together piece, there are a few strokes of stroke or the nib brush that give an impression, but it’s also so beautiful, it’s so cute and I think is so deeply related. My style is the inner hybrid of a stylistic mixed-together piece, not that or anything. It makes me feel comfortable and relaxed. It really feels like a marriage of something and the nuance of something you’ve been doing and done. It feels like fun and it feels like a touch in the room. It’s not very dark when you’re trying to make this touch with a different color palette as it does and sometimes you’re so distracted when you want to wake up and enjoy it. It’s fun and it inspires me as much as I want to do, and I am proud of my time and my work.”

All artwork which comes from a detailed dark coloring color tones to a subtle dark colored background scheme.

Estimated ~1~12,013

Estimated ~1~12,n1~12,013 (approximately \$100)

Approximately 100 pieces of Tokara Suguiaiko Grashinaza hand-crafted of Tokara Suguiaiko

Details: As follows

Price: All sales, and pricing may vary and is a requirement for normal times. As soon as this site is on lockdown we will do an FAQ tour in stores for up to \$50.

Cost: Available online (approx \$8 + \$9) when you can order order online. For more details what we can think we know by which orders ship on eBay. There’s a web form in which you can just print the details, grab something else you can use to sign or keep one of your items on eBay page. You’ll see some really nice zilzali, Mugabe, Picasso, and a stock, but you can have a strategy of ordering or keeping one of the items shipped directly to eBay.

• CY CYV Custom Design (rounded by

This is where you’ll get sketches, drawings, and your own delivery information. What went wrong? This is where we say I’ll go see my own artist drawings.

This is where CY CYV Custom Design/aka delivery info here for an overview of what you can choose for whom you can use to construct a shop of the

\textbf{Example with $T=512$.}\\
See also: California is not trying to impose taxes on other states. Instead, states actually adopt the California tax code.

The best form of arguments already makes the great news when comparing taxes, local payouts, local profits, and national profits. But those arguments are too far from explaining how it works and are too far from explaining basic science of how well it works.

I have heard a lot stories about the Corporate Sales Tax. It is that big smaller corporations were forced to tax because they had no enough to pay taxes, other big corporations did not pay taxes because they deduct the taxes for other things like sales.

It is called no taxation: tax, corporate tax, the income is more than a dollar, more than a ‘ with a few multimillionaires and no place in society. It is that you don’t have to pay your taxes all of the time.

Hawking Taxes in the US:

 Corporations (usually big corporations) have learned how no taxation is the best way to live life. Big companies don’t do things that kind of, people, have to pay the taxes, just by converting them to the US. They do other things that kind of: big corporations like being bailed out of the US. Instead of having to make a profit, they get paid.

There is a way in the world, a baker, not baker at all, where they are in trouble. It is because people have to think big corporations can get paid, because those companies hate the system and are trying to hide it.

If you become an investor, you get paid. Think of 0.8% more than you add it to $1 million a year and you get about 1% of your income. Using a small, $3000 a year you get at the same time for up to 2% how in an annual income.

Raising is the most important thing in the economy. It is it it that you aren’t buying for \$1.00 but if you pay the taxes only for the second one, it becomes 25\% of your income. If you pay the taxes only, but according to the law, the rest will not stay on your income until it becomes 25\% of your income. If you deduct a couple of deductions for that number, it won’t stay on your taxable income, or the total, until the income tax is paid.

You don’t see any gains from the tax until \$1 million. By that time you get 25\% of your income, the tax that now has reinvested into your. You can spend the dividends, save your taxes, and use it to retirement.

And of course, the money used to pay the tax doesn’t change it needs to go up or higher. But the more the tax is paid, the more money it needs to spend.

Then the car gets paid off for the insurance, what the \$2.50 gets and the \$2.50 gets are not additional taxes.

But the amount that gets paid by less than the person gets charged. Tax taxes are taxed on some of the earns a person.

The big difference is that this is how the real income gets taxed. Corporate income is one of the highest levels of taxation. The real income income can be easily derived from the excise tax and the chained tax.

Compensation Ductes: Say you spend money on on a business and you earn the capital to ship it. You pay taxes on ‘the corporate owner (or local company)’ and you earn the same amount of capital costs to ship it.

You say your industry is money capital which consists of lots of money from other companies. First you own the profits. Then you give it to the government and use it to others. This is not true. You owe billions of dollars in income tax taxes to them. No profit government for ‘gas makers’, no taxes for ‘gas makers.’

The capital dividend is not something that they use. If they trade with the people of capital, they pay them not, but that capital equals it.

Digid Benefit Ductes: America float these profits from the coffers of corporations to pay their taxes. They can and collect at the same time the rate of interest rates that they do. Let’s wait until we don’t realize how much of a nuisance we don’t need to pay our taxes. Do we need to pay our taxes with a 2.5 trillion hole?

A couple of months ago we were actually paying taxes on a company that would provide to certain people id any ill. Another company had fixed higher tax rates for individual hospitals and other hospitals and hospitals, while another company had fixed 40 times the rate of federal tax rates for individual insurers, services and other health care providers, while another company had fixed more than

\textbf{Example with $T=1024$.}\\
ongs of Africans played a role in determining population status, family structure, political etc. Traditionally, Africa was political symbol of each person of their own, ethnic or ethnic status. Instead, Africa was political symbol of goodwill for U.S. citizens of all races and cultures. In this case, African Americans have not looked back to the 1950s.

The split isn’t because very few Africans (both present and later) arrived in the U.S. History does not necessarily fix the split. An analysis of “African Remnantss” discovered that far fewer African Americans arrived in the U.S than they did in the mid-9.01\% of today’s population from today. The split is primarily due to the influx of Chinese people.

The Chinese people from the past are not migrants who traveled to coast and coast. It may indeed have been allowed to enter the U.S. under open borders. This is probably partially, since the Chinese people are nowhere fully integrated into the U.S., and the borders for the past 25 years are still porous. Under open borders, the nation still belong to the Republic of Somalia. I would suggest that it was the nation of its own or its neighbors who made the decision.

Certainly the nation’s productivity has not been a positive. One scholar whose family was Awaisan Ahmed told Al-Sal Al we have a “culture of immigrants originating from the founders of the Somali Republic of Somalia.” Another former head of Oshala College concluded that the nation’s productivity fell by an average 3.3\% in 2013. That’s what I’d guess from a fellow Al-Sahala college scholar and the Al-Qaeda fundamentalist.

That’s not to say that the nation is a good neighbor — that the U.S. is a state that cares for all. As adjusted capita employment levels and as indexed adjusted income, productivity has fallen 15\% since then. In addition, the new state provides health assistance for women, the elderly and all except for the state basic health care. How has the income fallen by nearly 15\% in the rest of the country? So has the working class.

Let me begin with the opportunity of success in the U.S. as a nation nation – just from being considered a great nation and having a heart, but being able to get the heart that can go anywhere else. I’m a proud American born in the U.S., my life has gone through periods of war, occupation, occupation, employment as a citizen, as a city, elected as a citizen and a citizen of government. In World War II, America had only due to Russia and Russia in an effort to preserve, build and expand its bases to bring it up as a nation with a rich tradition of generosity and generosity. In the U.S., I salute the great fathers as an instrumental to bring the world again. One that truly belongs to all Americans.

Today, I must formally recognize my power that U.S.-centric solutions must not simply solve the world’s problems. As many southern American countries where I am live, I must recognize what I am doing – as a citizen – and our God Spirit to bring us closer to America. The U.S. Federal Government (Fed-Fed)

Everything Doesn’t (CNN Talk Anything)

Fox News News

If statements from the 9/11/11 and 13/11 attacks are accurate, the hijackers or any false theories are simply going to blow up.

Shortly after attempting 9/9/11 to obtain documents from the CIA, they began questioning whether the hijackers, or if they wanted the documents.

The Pentagon 9/11acking conspiracy was a rum being featured on a CNN show. Why have America’s world stature shift people’s attention to credible answers?

Some of the stories are simply nuts.

One 9/9/11 theory that has been spun on to be “fabged” by CIA’s secret secret operations (something U.S. authorities were able to track back), in theory it may have been planted by CIA. This story remains kept wraps ever after leaked 9/9/11 video Video Goes Be Allenged - Message to President Obama.

The narrative of the ongoing 9/9/11 is anything but false and remains as the mainstream media reporting has been consistent.

On Friday, a statement from the White House said that U.S. intelligence personnel were being intercepted by a computer used to collect them between January and October in 9/9/11.

The statement said that data collected when retrieving them from US intelligence operations operating in Moscow is classified because no records exist. Why don’t we know that there are so many no records? i.e. the CIA’s no records?

It seems like President Obama himself

%%%%%%%%%%%%%%%%%%%%%%%%%%%%%%%%%%%%%%%%%%%%%%%%%%%%%%%%%%%%%%%%%%%%%%%%%%%%%%%
%%%%%%%%%%%%%%%%%%%%%%%%%%%%%%%%%%%%%%%%%%%%%%%%%%%%%%%%%%%%%%%%%%%%%%%%%%%%%%%

%%%%%%%%%%%%%%%%%%%%%%%%%%%%%%%%%%%%%%%%%%%%%%%%%%%%%%%%%%%%

% \newpage

\end{document}

%% file: math_commands.tex
\usepackage{amsmath,amsfonts,bm,nicefrac}

\def\eqref#1{Eq.~\ref{#1}}
\def\Eqref#1{Equation~\ref{#1}}
\def\1{\bm{1}}

\def\rvw{{\mathbf{w}}}

\def\vzero{{\bm{0}}}

\def\ve{{\bm{e}}}

\def\vm{{\bm{m}}}

\def\vs{{\bm{s}}}

\def\vu{{\bm{u}}}

\def\vx{{\bm{x}}}
\def\vy{{\bm{y}}}
\def\mI{{\bm{I}}}

\DeclareMathAlphabet{\mathsfit}{\encodingdefault}{\sfdefault}{m}{sl}
\SetMathAlphabet{\mathsfit}{bold}{\encodingdefault}{\sfdefault}{bx}{n}

\def\gM{{\mathcal{M}}}
\def\gN{{\mathcal{N}}}

\def\gR{{\mathcal{R}}}

\def\gV{{\mathcal{V}}}

\def\sR{{\mathbb{R}}}

\newcommand{\E}{\mathbb{E}}
\newcommand{\Ls}{\mathcal{L}}
\newcommand{\R}{\mathbb{R}}

\newcommand{\softmax}{\mathrm{softmax}}
\newcommand{\sigmoid}{\sigma}

\newcommand{\Cov}{\mathrm{Cov}}
\DeclareMathOperator*{\argmax}{arg\,max}

\newcommand{\snr}{\gamma}
\newcommand{\rootsnr}{\sqrt{\gamma}}

\newcommand{\half}{\nicefrac{1}{2}}

\def\eps{{\bm \epsilon}}

\newcommand{\be}{\begin{eqnarray} \begin{aligned}}
\newcommand{\ee}{\end{aligned} \end{eqnarray} }
\newcommand{\benn}{\begin{eqnarray*} \begin{aligned}}
\newcommand{\eenn}{\end{aligned} \end{eqnarray*} }
\newcommand{\xhat}{\hat{\bm x}}

\newcommand{\vsnr}{{\bm \snr}}

\newcommand{\vz}{\bm z}
\DeclareMathOperator{\enc}{enc}

\DeclareMathOperator{\LSE}{LSE}

\newcommand{\qed}{\hfill $\blacksquare$}

\newcommand{\norm}[1]{{\| #1 \| }}